\DeclareMathOperator{\sgn}{sgn}
\DeclareMathOperator\arctanh{arctanh}
\title{Circular Belief Propagation for Approximate Probabilistic Inference}
\author{%
  Vincent Bouttier\textsuperscript{1,2}\thanks{Now at Iktos SA.} , Renaud Jardri\textsuperscript{1,2}, Sophie Denève\textsuperscript{1}\thanks{Now at Institut de Neurosciences de la Timone.}
  \\
  \textsuperscript{1}Group for Neural Theory, Ecole Normale Supérieure, Paris, France \\
  \textsuperscript{2}Université de Lille, Lille, France\\
  \texttt{renaud.jardri@univ-lille.fr}
}
\theoremstyle{plain}
\newtheorem{theorem}{Theorem}[section]
\newtheorem{corollary}[theorem]{Corollary}
\theoremstyle{definition}
\theoremstyle{remark}
\begin{document}

\doparttoc 
\faketableofcontents 

\maketitle

\begin{abstract}
Belief Propagation (BP) is a simple probabilistic inference algorithm, consisting of passing messages between nodes of a graph representing a probability distribution. 
Its analogy with a neural network suggests that it could have far-ranging applications for neuroscience and artificial intelligence. Unfortunately, it is only exact when applied to cycle-free graphs, which restricts the potential of the algorithm.
In this paper, we propose Circular Belief Propagation (CBP), an extension of BP which limits the detrimental effects of message reverberation caused by cycles by learning to detect and cancel spurious correlations and belief amplifications. We show in numerical experiments involving binary 
probabilistic graphs that CBP far outperforms BP and reaches good performance compared to that of previously proposed algorithms. 
\end{abstract}

\section{Introduction}\label{sec:intro}

Probabilistic inference arises in many applications, from medical diagnosis to Wi-Fi protocols.
In all methods performing inference, the Belief Propagation algorithm \citep{Pearl1988} is among the most successful ones. This procedure consists of propagating local information 
by passing messages between nodes of a graph representing the probability distribution. Belief Propagation is known to achieve exact inferences when the graph is acyclic. 
The key reason for this lies in the fact that a message $m_{i \to j}$ is computed based on all messages $m_{k \to i}$ received by the sender node except the opposite message $m_{j \to i}$, therefore preventing this message from being reverberated and thus counted twice. 
One drawback of BP is that it often performs poorly in cyclic graphs, for which correcting for these loops of length $2$ is no longer sufficient \citep{Murphy1999, Weiss2000}. 
Messages pass in cycles 
from node to node and return to the original node, causing the same piece of information to be counted several times. This is known as the ``double-counting" problem.
Several variants or generalization of BP have been proposed to tackle this problem ever since the interest in BP started to grow \citep{Minka2001a, Sudderth2003, Ihler2009}. 


We present in this work the Circular Belief Propagation algorithm. This algorithm proposes to counter the effect of cycles (which introduce spurious correlations between messages) by actively decorrelating the messages. 
We show that Circular BP significantly outperforms BP in cyclic probabilistic graphs, including in cases where BP does not converges, and performs approximate inference with a very impressive quality even for fully dense probabilistic graphs.

This paper is organized as follows. In Sections \ref{sec:background}-\ref{sec:conv-results}, 
we provide background information, define the Circular BP algorithm, relate it to existing algorithms, and provide sufficient conditions for its convergence. In sections \ref{sec:approach-learning-CBP} and \ref{sec:learning-CBP}, we show the performance of CBP on both synthetic and real-world problems involving binary variables, using supervised or unsupervised procedures. The Appendices provide, among others, formulations of the algorithm in the general case (non-pairwise graphs and continuous variables). 

\section{Background}\label{sec:background}

\subsection{Inference in probabilistic graphical models}\label{subsec:inference-in-graphical-models}
The object of study is a probability distribution $p(\mathbf{x})$, where $\mathbf{x} = (x_1, x_2, \dots, x_n)$. The distribution can be decomposed into a product of conditionally independent factors that each describes interactions between distinct subsets of variables \citep{Koller2009, WainwrightJordan2008}:
\begin{equation}\label{eq:factorization-p_x-pairwise}
    p(\mathbf{x}) = \frac{1}{Z} \prod\limits_{(i,j)} \psi_{ij}(x_i,x_j) \prod\limits_i \psi_i(x_i)
\end{equation}
where we consider only unitary and pairwise interactions 
(Appendices 
show an extension of this special case to any factorization of $p(\mathbf{x})$, that is, any Markov random field 
involving higher-order potentials, e.g., $\psi_{ijk}(x_i,x_j,x_k)$).
$\psi_{i}$ represents prior knowledge about variable $x_i$, 
while $\psi_{ij}$ describes the interaction between $x_i$ and $x_j$. $Z$ is a normalization constant. 
As Figure \ref{fig:BP-unexact-cylic-graph}A shows, the probability distribution can be represented graphically as a graphical model called \emph{factor graph}, composed of variable nodes $x_i$ and factor nodes $\psi_{ij}$ and $\psi_{i}$. 

We focus in this work on the particular inference problem of finding the marginals of $p(\mathbf{x})$, $p_i(x_i) \equiv \sum_{\mathbf{x} \setminus x_i} p(\mathbf{x})$,
given partial and noisy information (incorporated into $\{\psi_i(x_i)\}$) about $\mathbf{x}$. 
Unfortunately, direct calculation of the marginals $p_i(x_i)$ takes exponential time in the number of variables $n$. 
That is why algorithms, like the \emph{Belief Propagation algorithm} and sampling 
methods, have been developed to solve this inference problem more efficiently. 
However, their drawback is that they produce only approximate marginal probabilities $b_i(x_i) \approx p_i(x_i)$.

\subsection{Belief Propagation}\label{subsec:BP}

Belief Propagation \citep{Pearl1988}, or sum-product algorithm, is a message-passing algorithm 
which performs approximate inference on a probabilistic graph.
It approximates the marginals of the distribution by making variable nodes $x_i$ share all the probabilistic information available with the rest of the network by sending messages to other variable nodes, eventually propagating information to the whole network. 
The algorithm consists of running iteratively the following update message equation on the graph (see Figure \ref{fig:BP-unexact-cylic-graph}B): 
\begin{equation}\label{eq:BP-message}
    m_{i \to j}^{\text{new}}(x_j) \propto \sum_{x_i} \psi_{ij}(x_i,x_j) \psi_i(x_i) \prod\limits_{k \in \mathcal{N}(i) \setminus j} m^{\text{old}}_{k \to i}(x_i)
\end{equation}
where $\mathcal{N}(i)$ is the set of neighbors of node $x_i$ in the graph. Messages are for instance initialized uniformly over the nodes ($m_{i \to j}(x_j) = 1/\mathcal{N}(j)$).
Once messages have converged, approximate marginal probabilities (or \emph{beliefs}) are computed as: 
\begin{equation}\label{eq:BP-belief}
    b_i(x_i) \propto \psi_i(x_i) \prod\limits_{k \in \mathcal{N}(i)} m_{k \to i}(x_i)
\end{equation}

\begin{figure}[h]
  \centering
  \includegraphics[width=\linewidth]{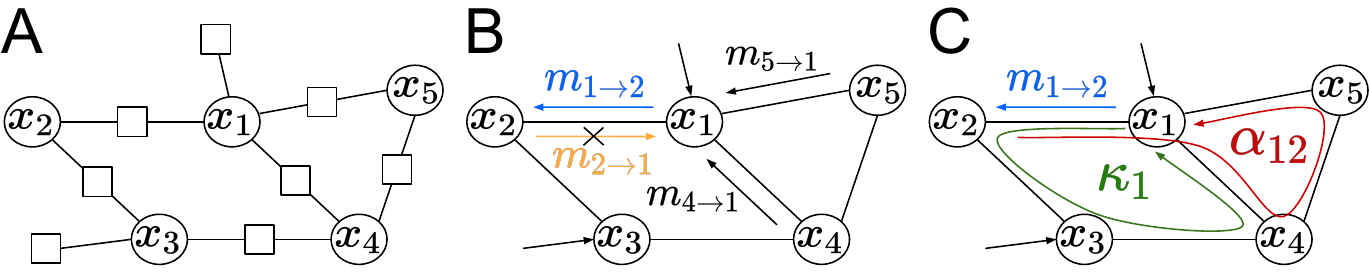}
  \vspace*{0.05mm}
  \caption{\textbf{Belief Propagation and Circular Belief Propagation algorithms applied to a probabilistic graph}. 
  \textbf{(A)} The probability distribution $p(\mathbf{x})$ is represented by a factor graph with pairwise potentials $\psi_{ij}$ and unitary potentials $\psi_i$.
  \textbf{(B)} BP aims at estimating marginals $p_i(x_i)$ by exchanging messages in the graph. The message $m_{1 \to 2}$ 
  depends on three components: the messages received by node $x_1$ from its neighbors except $x_2$, including the external message (see full black lines), 
  and the interaction
  $\psi_{12}$. 
  Estimated marginal $b_i(x_i)$ of a node $x_i$ 
  is formed based on all messages received by the node. 
  BP is not exact when applied to cyclic graphs, for two reasons. 
  First, messages get counted multiple times: $m_{1 \to 2}$ naturally travels back to 
  $x_1$ because of the cycle $x_1-x_2-x_3-x_4$. Second, opposite messages are correlated: $m_{1 \to 2}$ depends on $m_{5 \to 1}$ which depends on $m_{4 \to 5}$ which depends on $m_{1 \to 4}$ which depends on $m_{2 \to 1}$. 
  \textbf{(C)} Contrary to BP, Circular BP (partially) takes $m_{2 \to 1}$ into account to compute $m_{1 \to 2}$. 
  Parameter $\bm{\kappa}$ fights 
  the belief amplifications caused by messages being reverberated.
  Parameters $\bm{\alpha}$ decorrelates opposite messages. 
  }
  \label{fig:BP-unexact-cylic-graph} 
\end{figure}

A crucial feature of the BP algorithm is the message exclusion principle ($k \in \mathcal{N}(i) \setminus j$ in Equation \eqref{eq:BP-message}): 
to compute $m_{i \to j}$, all messages $m_{k \to i}$ coming to node $x_i$ are taken into account and combined, except the message in the opposite direction $m_{j \to i}$. In other words, node $x_i$ sends to $x_j$ everything it knows ($b_i$) except what $x_j$ communicated to $x_i$. 
This feature intuitively explains why BP produces exact marginals in the case of acyclic graphs: 
information propagates into the network without ``coming back". 
However, in the presence of cycles, the same evidence travels multiple times through loops of the graph, and is mistaken for new evidence \citep{Pearl1988}; see legend of Figure \ref{fig:BP-unexact-cylic-graph}. 
This intuitively explains why BP (also called \emph{Loopy} BP in this case) is incorrect in the general case. More precisely, the algorithm does not necessarily converge on cyclic graphs, and when it converges it produces incorrect marginals. 
A number of empirical studies \citep{Murphy1999, Weiss2000, Mooij2004, Litvak2009a} have led 
to a better understanding of the convergence and performance of BP depending on the probabilistic graph. 

BP has initially been used because of its empirical performance, but lacked a deep theoretical comprehension as to why the algorithm worked so well in practice. However, in the early 2000s, a theoretical foundation of BP emerged: stable fixed points of BP are minima of the Bethe free energy from statistical physics \citep{Yedidia2001, Heskes2002}. 
The \emph{Bethe free energy} approximates the \emph{Gibbs free energy} $G(b) := D_{KL}(b \Vert p) - \log(Z) = \sum_\mathbf{x} b(\mathbf{x}) E(\mathbf{x}) + \sum_\mathbf{x} b(\mathbf{x}) \log(b(\mathbf{x}))$ where the energy $E(\mathbf{x}) := - \log(p(\mathbf{x}) Z)$, by considering that the variational distribution $b(\mathbf{x})$ can be represented by an acyclic probabilistic graph 
(\emph{Bethe approximation}), as further explained in the Appendices. Minimizing the Gibbs free energy also minimizes the distance (KL divergence) between $b$ and $p$, and the idea (or hope) 
is that marginals of $b$ and $p$ are therefore similar: $b_i(x_i) \approx p_i(x_i)$.
This tight link between BP and the Bethe free energy 
opened the way to investigate 
the goodness of BP through the estimation of the BP error \citep{Wainwright2003b, Taga2006, Ihler2007, Mooij2009, Shi2010}, and the properties of the Bethe free energy and its fixed points \citep{Heskes2003, Heskes2004, Watanabe2009, Watanabe2011, Weller2014, Knoll2017}. 


\section{Circular Belief Propagation}\label{sec:Circular-BP}

\subsection{Definition}\label{subsubsec:definition-CBP} 

The Circular Belief Propagation (CBP) algorithm is an extension of the Belief Propagation (BP) algorithm and performs message-passing on probabilistic graphs to carry out approximate inference. 
It consists of initializing messages $m_{i \to j}$ to some distribution, and then running the following updates:
\begin{equation}\label{eq:eCBP-message}
    m_{i \to j}^{\text{new}}(x_j) \propto \sum_{x_i} \psi_{ij}(x_i,x_j)^{\beta_{ij}} \Big(\psi_i(x_i)^{\gamma_i} \times \prod\limits_{k \in \mathcal{N}(i) \setminus j} m^{\text{old}}_{k \to i}(x_i) \times m^{\text{old}}_{j \to i}(x_i)^{1 - \alpha_{ij} / \kappa_i}\Big)^{\kappa_i}
\end{equation} 
After convergence (or a given number of iterations), approximate marginals are computed using:
\begin{equation}\label{eq:eCBP-belief}
    b_i(x_i) \propto \bigg(\psi_i(x_i)^{\gamma_i} \prod\limits_{k \in \mathcal{N}(i)} m_{k \to i}(x_i)\bigg)^{\kappa_i}
\end{equation}
BP corresponds to the particular case $(\bm{\alpha}, \bm{\kappa}, \bm{\beta}, \bm{\gamma}) = (\bm{1}, \bm{1}, \bm{1}, \bm{1})$. The main difference between CBP and BP is the occurrence of the message in the opposite direction $m_{j \to i}$ to compute $m_{i \to j}$; see Figure \ref{fig:BP-unexact-cylic-graph}B. 
Note that $\alpha_{ij}$ and $\beta_{ij}$ are assigned to the \emph{undirected} edge $(i,j)$, while $\kappa_i$ and $\gamma_i$ are assigned to the variable node $x_i$.

\subsection{Circular BP for binary distributions}\label{subsec:BP-CBP-Ising-model} 

For probability distributions $p(\bm{x})$ over binary variables ($x_i \in \{-1,+1\}$)
, CBP takes a very simple form in the log-domain.
For simplicity, we consider here a particular class of such distributions 
called \emph{Ising models} (also known as \emph{Boltzmann machines}) from statistical physics \citep{Ising1925, Baxter1982} in which 
pairwise factors take a specific form: $\psi_{ij}(x_i, x_j) \propto \exp(J_{ij} x_i x_j)$. 
CBP writes as follows:
\begin{empheq}[left=\empheqlbrace]{align}
  &M_{i \to j}^{\text{new}} = f(B_i - \alpha_{ij} M_{j \to i},\, \beta_{ij} J_{ij})\label{eq:eCBP-message-log}\\
  &B_i = \kappa_i \Big(\sum\limits_{j \in \mathcal{N}(i)} M_{j \to i} + \gamma_i M_{\text{ext} \to i}\Big)\label{eq:eCBP-belief-log}
\end{empheq}
(see Appendix \ref{subsec:eCBP} for the demonstration). $M_{i \to j} \equiv \frac{1}{2} \log\big(\frac{m_{i \to j}(x_j=+1)}{m_{i \to j}(x_j=-1)}\big)$ represents the information about variable $x_j$ brought by variable $x_i$, and $B_i \equiv \frac{1}{2} \log\big(\frac{b_i(x_i=+1)}{b_i(x_i=-1)}\big)$ is by definition half of the log-likelihood ratio. 
The approximate marginal probabilities are given by $b_i(x_i=\pm 1) = \sigma(\pm 2 B_i)$, i.e., $b_i(x_i) \propto \exp(B_i x_i)$.
Lastly, $M_{\text{ext} \to i} \equiv \frac{1}{2} \log\big(\frac{\psi_i(x_i = +1)}{\psi_i(x_i = -1)}\big)$ represents alternatively prior information over a variable $x_i$, or here a noisy sensory observation providing information about the variable (\textit{ext} stands for ``external"). 

Function $f$ takes a simple form (see also \citet{Mooij2007}): 
\begin{equation}\label{eq:function-fij-Ising-model}
    f(x,\, J) = \arctanh\big(\tanh(J) \tanh(x)\big)
\end{equation}
$f$ is a sigmoidal function of $x$ controlled by parameter $J$ representing the bounded ``trust" $J$ between nodes; see Figure \ref{fig:BP-and-CBP}B. 
Note that $f$ resembles the usual non-linearity $f(x, w) = w \tanh(x)$.

\subsection{Interpretation of Circular BP}
Parameters $\bm{\alpha}$, $\bm{\kappa}$, $\bm{\beta}$ and $\bm{\gamma}$ can be interpreted easily.
$\bm{\beta}$ and $\bm{\gamma}$ act as rescaling factors (respectively for the pairwise and unitary factors). 
$\bm{\kappa}$ rescales the beliefs in order to counter the effect of message amplifications in loops (double-counting); see Figure \ref{fig:BP-unexact-cylic-graph}C.
But the specificity of BP is the parameter $\bm{\alpha}$ which controls how the opposite message $m_{j \to i}$ is taken into account for the computation of $m_{i \to j}$: positively ($\alpha_{ij} < \kappa_i$), negatively ($\alpha_{ij} > \kappa_i$) or not at all ($\alpha_{ij} = \kappa_i$) as in BP. $\bm{\alpha}$ therefore acts as a \emph{loop correction factor}.
Equation \eqref{eq:eCBP-message-log} means that node $i$, encoding for variable $x_i$, sends to node $j$ everything it knows ($B_i$) except a \emph{rescaled} version of what $j$ communicated to $i$ ($M_{j \to i}$).

\section{Related work}\label{sec:related-work}


\subsection{Circular BP from \citet{Jardri2013}}

A particular case of Circular BP was defined in \citet{Jardri2013}, which corresponds to $(\bm{\kappa}, \bm{\beta}, \bm{\gamma}) = (\bm{1}, \bm{1}, \bm{1})$.
The important conceptual difference is that the authors propose 
a way of impairing BP, not of improving it; see more in the discussion.

\subsection{Link to reweighted Bethe free energy}

Circular BP tightly relates to the family of reweighted BP algorithms for approximate inference, including Fractional BP \citep{Wiegerinck2002}, Reweighted BP \citep{Loh2014}, Tree-reweighted BP \citep{Wainwright2002, Wainwright2003,  Wainwright2005}, Convex BP \citep{Hazan2008, Hazan2010}, Power Expectation Propagation \citep{Minka2002, Minka2004}, and $\alpha$-BP \citep{Liu2019, Liu2020} 
(see the Appendices 
for more detail). 
All these reweighted BP algorithms are variational inference techniques which aim at minimizing a particular reweighted Bethe free energy approximating the Gibbs free energy.
First, the variational entropy component $-\sum_\mathbf{x} b(\mathbf{x}) \log(b(\mathbf{x}))$ of the Gibbs free energy is approximated as if the variational distribution $b(x)$ was: 
\begin{equation}\label{eq:eFBP-approx-b-wrt-marginals}
    b(\mathbf{x}) \approx \prod\limits_{i,j}\Big(\frac{b_{ij}(x_i,x_j)}{b_i(x_i) b_j(x_j)}\Big)^{1/\alpha_{ij}} \prod\limits_i \big(b_i(x_i)\big)^{1/\kappa_i}
\end{equation}
(which corresponds to BP's Bethe approximation for $\bm{\alpha} = \bm{1}$ and $\bm{\kappa} = \bm{1}$). Second, the variational average energy component $\sum_\mathbf{x} b(\mathbf{x}) E(\mathbf{x})$ of the Gibbs free energy is approximated as if $p(x)$ was: 
\begin{equation}\label{eq:eFBP-approx-b-wrt-potentials}
    p(\mathbf{x}) \approx \prod\limits_{i,j}\big(\psi_{ij}(x_i,x_j)\big)^{\beta_{ij}} \prod\limits_i \big(\psi_{i}(x_i)\big)^{\gamma_i}
\end{equation}
Altogether, these hypotheses result in a parametric approximation of the Gibbs free energy, which generalizes the Bethe free energy from BP. 
The algorithm aiming at minimizing this resulting approximate Gibbs free energy writes for Ising models:
\begin{empheq}[left=\empheqlbrace]{align}
  &M_{i \to j}^{\text{new}} = \frac{1}{\alpha_{ij}} f\Big(B_i - \alpha_{ij} M_{j \to i},\, \alpha_{ij} \beta_{ij} J_{ij}\Big) \label{eq:FBP-message-log}\\ 
  &B_i = \kappa_i \Big(\sum_{j \in \mathcal{N}(i)} M_{j \to i} + \gamma_i M_{\text{ext} \to i}\Big) \label{eq:FBP-belief-log}
\end{empheq}
where function $f$ is the same as above. Message update equations are similar between Reweighted BP algorithms and Circular BP (see Equations \eqref{eq:eCBP-message-log} and \eqref{eq:eCBP-belief-log}) and therefore so are the results of the algorithms (see Figure \ref{fig:FBP-vs-CBP}). The algorithms differ in the specific form of the message update for parameter $\bm{\alpha}$. In Reweighted BP algorithms, $\bm{\alpha}$ corresponds to a weight in the reweighted Bethe free energy, the quantity that these algorithms try to minimize. Instead, in Circular BP, $\bm{\alpha}$ relates to 
the idea of cancelling the dependencies between opposite messages arising because of loops (see Figure \ref{fig:BP-unexact-cylic-graph}C). 


To find parameters for which these Reweighted BP algorithms outperform BP, several techniques are used.
Tree-Reweighted BP \citep{Wainwright2005} uses a gradient ascent method to optimize $\bm{\alpha}$ 
, but involves solving a max-weight spanning tree problem at each iteration. 
Fractional BP \citep{Wiegerinck2002} 
fits $\bm{\alpha}$ by using an unsupervised learning rule based on linear response theory. 



\section{Convergence properties of Circular BP}\label{sec:conv-results}

Convergence of Circular BP is crucial to carry out approximate inference. In general, if message-passing does not converge, then beliefs have very little to do with the correct marginals \citep{Murphy1999}. 
It is therefore important to control the convergence properties, and if possible, to ensure the convergence of our proposed algorithm.
We state here sufficient conditions for the convergence of Circular BP in an Ising model, and provide ways to find such parameters. 
Notably, Theorem \ref{theorem:convergence-is-possible} shows that choosing $\bm{\alpha}$ and $\bm{\kappa}$ uniformly, equal and small enough guarantees the convergence of CBP, for any set of parameters $(\bm{\gamma}, \bm{\beta})$. More precisely, in this case, the algorithm has only one fixed point and converges to it with at least a linear rate.



We start by defining matrix $\bm{A}$ whose coefficients are:
\begin{equation}\label{eq:def-matrix-A}
    A_{i \to j, k \to l}  = \abs{\kappa_i} \tanh\abs{\beta_{ij} J_{ij}} \delta_{il} \mathbbm{1}_{\mathcal{N}(i)}(k) \abs{1 - \frac{\alpha_{ij}}{\kappa_i}}_{j = k}
\end{equation}
where $\mathbbm{1}_{\mathcal{N}(i)}(k) = 1$ if $k \in \mathcal{N}(i)$, otherwise $= 0$, $\delta_{il} = 1$ if $i = l$, otherwise $= 0$, and $\abs{1 - \frac{\alpha_{ij}}{\kappa_i}}_{j = k} = 1 - \frac{\alpha_{ij}}{\kappa_i}$ if $j = k$, otherwise $1$. 

\begin{theorem}\label{theorem:conv-norm}
    If for any induced operator norm $\lVert \cdot \rVert$ (sometimes called natural matrix norm), $\lVert A \rVert < 1$, then CBP has a unique fixed point and CBP converges to it with at least a linear rate.
\end{theorem}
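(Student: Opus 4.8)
The plan is to show that the Circular BP message-update map in the log-domain, viewed as a function $F:\mathbb{R}^{E}\to\mathbb{R}^{E}$ on the space of directed-edge messages $\bm{M}=(M_{i\to j})$, is a contraction (or at least a Lipschitz map with constant $<1$) with respect to the vector norm $\lVert\cdot\rVert$ dual to the induced operator norm in the hypothesis, and then invoke the Banach fixed-point theorem together with the standard fact that a contraction has a unique fixed point to which every orbit converges geometrically. The key structural observation is that substituting the belief equation \eqref{eq:eCBP-belief-log} into the message equation \eqref{eq:eCBP-message-log} expresses $M_{i\to j}^{\text{new}}$ as
\[
M_{i\to j}^{\text{new}} = f\Big(\textstyle\sum_{k\in\mathcal{N}(i)\setminus j}\kappa_i M_{k\to i} + \big(\kappa_i - \alpha_{ij}\big)M_{j\to i} + \kappa_i\gamma_i M_{\text{ext}\to i},\ \beta_{ij}J_{ij}\Big),
\]
so each output coordinate depends on the input coordinates $M_{k\to i}$ only (those with head equal to the tail $i$ of the edge $(i\to j)$), with linear coefficient $\kappa_i$ for $k\neq j$ and $\kappa_i-\alpha_{ij}$ for $k=j$; the term $M_{\text{ext}\to i}$ is a constant and drops out of the differencing.

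\textbf{Main steps.} First I would record the elementary bound on $f$ from \eqref{eq:function-fij-Ising-model}: since $f(x,J)=\arctanh(\tanh(J)\tanh(x))$, for fixed $J$ one has $\lvert\partial_x f(x,J)\rvert = \tanh\lvert J\rvert\cdot\frac{1-\tanh^2 x}{1-\tanh^2 J\tanh^2 x}\le \tanh\lvert J\rvert$, hence $x\mapsto f(x,J)$ is $\tanh\lvert J\rvert$-Lipschitz. Second, applying this to the composed update above and the mean value theorem, for any two message vectors $\bm{M},\bm{M}'$ I would obtain
\[
\big\lvert F(\bm{M})_{i\to j} - F(\bm{M}')_{i\to j}\big\rvert \le \tanh\lvert\beta_{ij}J_{ij}\rvert\Big(\textstyle\sum_{k\in\mathcal{N}(i)\setminus j}\lvert\kappa_i\rvert\,\lvert M_{k\to i}-M'_{k\to i}\rvert + \lvert\kappa_i-\alpha_{ij}\rvert\,\lvert M_{j\to i}-M'_{j\to i}\rvert\Big),
\]
and the right-hand side is exactly $\big(\bm{A}\,\lvert\bm{M}-\bm{M}'\rvert\big)_{i\to j}$ with $\bm{A}$ as defined in \eqref{eq:def-matrix-A}, once one checks the index bookkeeping: the factor $\lvert\kappa_i\rvert\tanh\lvert\beta_{ij}J_{ij}\rvert\,\delta_{il}\,\mathbbm{1}_{\mathcal{N}(i)}(k)$ selects the allowed pairs, and the factor $\lvert 1-\alpha_{ij}/\kappa_i\rvert_{j=k}$ converts $\lvert\kappa_i\rvert$ into $\lvert\kappa_i-\alpha_{ij}\rvert$ precisely on the diagonal-type term $k=j$. (A small verification is needed that $A_{i\to j,\,k\to l}$ as written allows $k=j$; since $j\in\mathcal{N}(i)$ whenever the edge $(i,j)$ exists, the indicator is $1$ there and the correction factor applies.) Third, taking the norm $\lVert\cdot\rVert$ dual to the given induced operator norm and using $\lVert\bm{A}\bm{v}\rVert\le\lVert\bm{A}\rVert\,\lVert\bm{v}\rVert$ together with monotonicity of that norm under the entrywise inequality $\lvert F(\bm{M})-F(\bm{M}')\rvert\le \bm{A}\lvert\bm{M}-\bm{M}'\rvert$ (valid for the $\ell^1$, $\ell^\infty$, weighted variants of these, which are the induced-norm cases of interest), I would conclude $\lVert F(\bm{M})-F(\bm{M}')\rVert\le\lVert\bm{A}\rVert\,\lVert\bm{M}-\bm{M}'\rVert$, i.e. $F$ is a contraction with factor $\lVert\bm{A}\rVert<1$. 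Finally, $\mathbb{R}^{E}$ with any such norm is complete, so Banach's theorem gives the unique fixed point and linear convergence; a fixed point of $F$ is, by construction of the substitution, exactly a CBP fixed point.

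\textbf{Main obstacle.} The genuinely delicate point is the passage from the scalar entrywise inequality $\lvert F(\bm{M})-F(\bm{M}')\rvert \le \bm{A}\,\lvert\bm{M}-\bm{M}'\rvert$ to a bound on an \emph{arbitrary} induced operator norm: monotonicity of a norm under nonnegative matrices acting on nonnegative vectors is automatic for absolute/monotone norms such as $\ell^1$, $\ell^\infty$ and their weighted versions, but for a general induced norm one must argue more carefully — the cleanest route is to note $\lVert F(\bm{M})-F(\bm{M}')\rVert \le \big\lVert\,\bm{A}\,\lvert\bm{M}-\bm{M}'\rvert\,\big\rVert$ only when $\lVert\cdot\rVert$ is monotone, and then to invoke that every induced $p$-norm ($1\le p\le\infty$), and more generally every induced norm arising from an absolute vector norm, is monotone; alternatively, restrict the statement's proof to these cases, which already covers the row-sum ($\ell^\infty$) and column-sum ($\ell^1$) criteria that are the practically relevant corollaries. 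I would also take care, in Step 2, that differencing kills the additive constant $\kappa_i\gamma_i M_{\text{ext}\to i}$ and that the initialization and normalization conventions make $F$ well-defined as a map on $\mathbb{R}^{E}$ rather than on a quotient — both are routine but worth a line.
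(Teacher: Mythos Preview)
Your proposal is correct and follows essentially the same route as the paper's proof: bound the Jacobian of the log-domain update $F$ entrywise by the nonnegative matrix $A$ (the paper does this by computing $F'(\mathbf{M})_{i\to j,k\to l}=\tilde A_{i\to j,k\to l}B_{i\to j}(\mathbf M)$ with $\sup_{\mathbf M}|B_{i\to j}(\mathbf M)|=1$, which is exactly your Lipschitz bound $|\partial_x f(x,J)|\le\tanh|J|$ in disguise), then invoke a contraction argument and Banach. The paper outsources the passage ``entrywise Jacobian bound $\Rightarrow$ $\lVert\cdot\rVert$-contraction'' to Lemma~2 of \citet{Mooij2007}; you unpack it yourself and, in doing so, correctly flag the one nontrivial point --- that this step needs the underlying vector norm to be absolute/monotone --- which the paper leaves implicit in the citation.
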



\begin{theorem}\label{theorem:conv-spectral-radius}
    If $ \forall (i,j)$, $\alpha_{ij}/\kappa_i \leq 1$ and the spectral radius of A, $\rho(A) < 1$, then Circular BP converges to the unique fixed point.
\end{theorem}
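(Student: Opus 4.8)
The plan is to write the CBP iteration as a fixed-point map $M^{\text{new}}=F(M)$ on the finite-dimensional space of messages indexed by directed edges, to bound its Jacobian entrywise by the matrix $A$ of \eqref{eq:def-matrix-A}, and then to upgrade the spectral condition $\rho(A)<1$ into a genuine contraction in a well-chosen weighted norm, at which point Theorem~\ref{theorem:conv-norm} (or Banach's theorem directly) closes the argument.

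First I would substitute \eqref{eq:eCBP-belief-log} into \eqref{eq:eCBP-message-log} to obtain, with the external messages held fixed, the synchronous update $M_{i\to j}^{\text{new}}=F_{i\to j}(M)$ with
\[
F_{i\to j}(M)=f\!\Big(\kappa_i\gamma_i M_{\text{ext}\to i}+\sum_{k\in\mathcal{N}(i)\setminus j}\kappa_i M_{k\to i}+(\kappa_i-\alpha_{ij})\,M_{j\to i},\ \beta_{ij}J_{ij}\Big),
\]
a smooth function of $M$. Using \eqref{eq:function-fij-Ising-model}, one has $\partial_x f(x,J)=\tanh(J)\,(1-\tanh^2 x)/(1-\tanh^2 J\tanh^2 x)$, hence $\abs{\partial_x f(x,J)}\le\tanh\abs{J}$, since $0\le 1-\tanh^2 x\le 1-\tanh^2 J\tanh^2 x$. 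Differentiating $F$ then gives $\partial F_{i\to j}/\partial M_{k\to l}=0$ unless $l=i$ and $k\in\mathcal{N}(i)$, and for $l=i$, $k\in\mathcal{N}(i)$ a bound of $\abs{\kappa_i}\tanh\abs{\beta_{ij}J_{ij}}$ when $k\ne j$ and of $\abs{\kappa_i}\tanh\abs{\beta_{ij}J_{ij}}\,\abs{1-\alpha_{ij}/\kappa_i}$ when $k=j$. The hypothesis $\alpha_{ij}/\kappa_i\le 1$ makes $\abs{1-\alpha_{ij}/\kappa_i}=1-\alpha_{ij}/\kappa_i$, so these bounds are exactly the entries of $A$: thus $\abs{DF(M)}\le A$ entrywise for every $M$, and moreover $A\ge 0$ entrywise.

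Next I would turn $\rho(A)<1$ into a contraction. Since $A\ge 0$, $(I-A)^{-1}=\sum_{n\ge 0}A^n$ is entrywise nonnegative, so $v:=(I-A)^{-1}\mathbf{1}$ satisfies $v\ge\mathbf{1}>0$ componentwise and $Av=v-\mathbf{1}\le c\,v$ componentwise with $c:=1-1/\max_i v_i\in[0,1)$. With the weighted max-norm $\lVert x\rVert_v:=\max_i\abs{x_i}/v_i$, any matrix $B$ with $\abs{B}\le A$ entrywise satisfies $\abs{(Bx)_i}\le\sum_j A_{ij}\abs{x_j}\le(Av)_i\,\lVert x\rVert_v\le c\,v_i\,\lVert x\rVert_v$, i.e.\ $\lVert Bx\rVert_v\le c\,\lVert x\rVert_v$. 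Taking $B=A$ gives $\lVert A\rVert_v<1$, so Theorem~\ref{theorem:conv-norm} applies directly; equivalently, taking $B=DF(M)$ gives $\sup_M\lVert DF(M)\rVert_v\le c$, so by the mean-value inequality $F$ is a $c$-contraction on the message space, and Banach's theorem yields the unique fixed point and convergence to it at least at a linear rate.

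The hard part is this last step: $\rho(A)<1$ is purely a statement about eigenvalues and gives no norm for free, so one has to manufacture the weighted norm in which $A$—and with it the Jacobian of $F$—is a strict contraction. This is precisely where $\alpha_{ij}/\kappa_i\le 1$ is essential, as it guarantees $A\ge 0$: nonnegativity is what makes $(I-A)^{-1}$ nonnegative (so the Perron-type weight $v$ exists) and what makes $A$ coincide with the entrywise Jacobian bound $\abs{DF}\le A$ rather than only with its absolute value $\abs{A}$; without it one would instead need the stronger condition $\rho(\abs{A})<1$. Finally, the argument above is stated for the synchronous update, but the same matrix $A$ governs convergence for sequential or other schedules, and an analogous contraction argument applies there.
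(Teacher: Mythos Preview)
Your proof is correct and follows essentially the same route as the paper's: bound the Jacobian of the update map entrywise by the nonnegative matrix $A$, then use $\rho(A)<1$ together with $A\ge 0$ to obtain a contraction in a suitable weighted norm. The paper does this by invoking Theorem~2 of \citet{Mooij2007} (noting that under $\alpha_{ij}/\kappa_i\le 1$ the factorization $F'(M)=\mathrm{diag}(B(M))\,\tilde A$ has $\tilde A=A\ge 0$ and $|B(M)|\le 1$), whereas you spell out the underlying Perron-type construction explicitly via $v=(I-A)^{-1}\mathbf{1}$ and the weighted max-norm $\lVert\cdot\rVert_v$; this is exactly the mechanism behind the cited result, so the two arguments coincide in substance.
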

The proofs of Theorems \ref{theorem:conv-norm} and \ref{theorem:conv-spectral-radius} are provided in the Appendices and follow closely the ones provided in \citet{Mooij2007} for BP, that is, the special case $(\bm{\alpha}, \bm{\kappa}, \bm{\beta}, \bm{\gamma}) = (\bm{1}, \bm{1}, \bm{1}, \bm{1})$. 

A consequence of Theorem \ref{theorem:conv-spectral-radius} is the following fundamental result, which distinguishes Circular BP from related approaches like Power EP, Fractional BP and $\alpha$-BP: 
\begin{theorem}\label{theorem:convergence-is-possible}
    For a given weighted graph (with weights $J_{ij}$), it is possible to find parameters $\bm{\alpha}$ and $\bm{\kappa}$ such that Circular BP converges for any external input $\bm{M_{\text{ext}}}$ and any choice of parameters $(\bm{\gamma}, \bm{\beta})$.
\end{theorem}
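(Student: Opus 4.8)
The plan is to deduce the result directly from Theorem~\ref{theorem:conv-spectral-radius} by exhibiting an explicit family of parameters. Take $\bm{\alpha}$ and $\bm{\kappa}$ uniform and equal, i.e.\ $\alpha_{ij} = \kappa_i = \kappa$ for all $i,j$, with $\kappa>0$ to be fixed at the end. Then $\alpha_{ij}/\kappa_i = 1 \le 1$, so the first hypothesis of Theorem~\ref{theorem:conv-spectral-radius} is satisfied automatically and it only remains to arrange $\rho(A)<1$. With this choice the factor $\abs{1-\alpha_{ij}/\kappa_i}_{j=k}$ in~\eqref{eq:def-matrix-A} equals $0$ when $j=k$ and $1$ otherwise, so
\begin{equation*}
  A_{i \to j,\, k \to l} = \kappa\, \tanh\abs{\beta_{ij} J_{ij}}\, \delta_{il}\, \mathbbm{1}_{\mathcal{N}(i)}(k)\, \mathbbm{1}_{k \neq j}.
\end{equation*}
Note that $A$ depends neither on $\bm{M_{\text{ext}}}$ nor on $\bm{\gamma}$, so once we establish convergence to a unique fixed point it will hold for every external input and every $\bm{\gamma}$, as the statement requires.

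Next I would bound $\rho(A)$ uniformly in $\bm{\beta}$. Write $A = \kappa\, B(\bm{\beta})$ with $B(\bm{\beta})_{i\to j,\,k\to l} = \tanh\abs{\beta_{ij}J_{ij}}\,\delta_{il}\,\mathbbm{1}_{\mathcal{N}(i)}(k)\,\mathbbm{1}_{k\neq j}\ge 0$. Since $\tanh\abs{t}<1$ for every finite $t$, we have $0 \le B(\bm{\beta}) \le C$ entrywise, where $C_{i\to j,\,k\to l} := \delta_{il}\,\mathbbm{1}_{\mathcal{N}(i)}(k)\,\mathbbm{1}_{k\neq j}$ depends only on the graph (it is the non-backtracking adjacency matrix of the directed line graph). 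By monotonicity of the Perron root on nonnegative matrices, $\rho(B(\bm{\beta})) \le \rho(C)$, hence $\rho(A) = \kappa\,\rho(B(\bm{\beta})) \le \kappa\,\rho(C)$ for all $\bm{\beta}$.

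It then suffices to choose $\kappa$. If $\rho(C)=0$ (the graph is acyclic with maximum degree $\le 1$, so $C$ is nilpotent) any $\kappa>0$ works; otherwise take any $0<\kappa<1/\rho(C)$, which gives $\rho(A)<1$ for every $\bm{\beta}$, and Theorem~\ref{theorem:conv-spectral-radius} yields convergence of Circular BP to its unique fixed point for all $\bm{M_{\text{ext}}}$, $\bm{\gamma}$ and $\bm{\beta}$. (One can also bypass the spectral argument and apply Theorem~\ref{theorem:conv-norm} with the $\infty$-norm: the $(i\to j)$-th row sum of $A$ equals $\kappa\,\tanh\abs{\beta_{ij}J_{ij}}\,(\abs{\mathcal{N}(i)}-1)\le \kappa\,(d_{\max}-1)$ with $d_{\max}$ the maximum degree, so $\kappa<1/(d_{\max}-1)$ suffices, again uniformly in $\bm{\beta},\bm{\gamma},\bm{M_{\text{ext}}}$.)

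The only genuine subtlety is that a \emph{single} choice of $\kappa$ must work simultaneously for all $\bm{\beta}$ (and all $\bm{\gamma}$, $\bm{M_{\text{ext}}}$); this is exactly what the entrywise domination $B(\bm{\beta})\le C$ together with the independence of $A$ from $\bm{\gamma}$ and $\bm{M_{\text{ext}}}$ provide, using only the elementary fact $\sup_t \tanh\abs{t}=1$. The remaining content is bookkeeping, plus handling the degenerate case $\rho(C)=0$, which corresponds precisely to graphs on which BP is already exact.
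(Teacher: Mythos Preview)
Your proof is correct and follows the same core idea as the paper: set $\alpha_{ij}=\kappa_i=\kappa>0$ uniformly so that $\alpha_{ij}/\kappa_i=1$, and then make $\kappa$ small enough to force $\rho(A)<1$ and invoke Theorem~\ref{theorem:conv-spectral-radius}. The difference is in how the smallness of $\rho(A)$ is argued. The paper simply observes that as $v\to 0$ all entries of $A$ tend to $0$ and appeals to continuity of the spectral radius; you instead give an explicit, uniform bound via the entrywise domination $B(\bm{\beta})\le C$ and Perron--Frobenius monotonicity (or alternatively the $\ell_\infty$ row-sum bound $\kappa(d_{\max}-1)$). Your route is slightly more informative: it yields a concrete threshold $\kappa<1/\rho(C)$ (or $\kappa<1/(d_{\max}-1)$) and, crucially, makes transparent that a \emph{single} $\kappa$ works for all $\bm{\beta}$ simultaneously, a point the paper's limit argument leaves implicit since the matrix $A$ there still depends on $\bm{\beta}$.

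One small inaccuracy in a parenthetical remark: $\rho(C)=0$ holds whenever the graph is a forest (the non-backtracking operator is nilpotent on any acyclic graph), not only when the maximum degree is at most $1$. This does not affect the argument, since you correctly note that any $\kappa>0$ works in that case.
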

\begin{proof}
    Let us take $\alpha_{ij} = \kappa_i \equiv v \in \mathbb{R}_{+}$. In this case, $A_{i \to j, k \to l}  = v \tanh\abs{\beta_{ij} J_{ij}} \delta_{il} \mathbbm{1}_{\mathcal{N}(i) \setminus j}(k)$. When $v \to 0$, all coefficients of A go to zero. 
    The spectral norm (induced by the $l_2$-norm) is a continuous application, and $\rho(\bm{0}) = 0$ 
    , thus $\rho(A) \to 0$ when $v \to 0$.
    We conclude by using Theorem \ref{theorem:conv-spectral-radius} as $\alpha_{ij} / \kappa_i = 1 \leq 1$: there exists $v^\star$ such that for all $p < p^\star$, CBP converges. 
\end{proof}

\section{Approach for optimizing the parameters}\label{sec:approach-learning-CBP}

The goal is, given the interactions $\bm{J} = \{J_{ij}\}$ between variables of the probability distribution, to perform approximate inference for any external input $\bm{M_{\text{ext}}} = \{M_{\text{ext} \to i}\}$.\footnote{Note that the training does not aim at generalizing to different graph structures or different graph weights, as the fitted parameters $(\bm{\alpha}, \bm{\kappa}, \bm{\beta}, \bm{\gamma})$ depend on the interactions $\bm{J}$.}
We consider two ways for finding the optimal parameters of a given weighted graph. The supervised learning method minimizes the MSE loss between the approximate and true marginals, but can only be applied to small or sparsely connected graph where exact inference can be performed on the training examples. Additionally, we propose an unsupervised learning method where parameters are estimated using local learning rules, and that can generalize to arbitrarily large and complex graphs.

\subsection{Supervised learning procedure}


Parameters of the model $(\bm{\alpha}, \bm{\kappa}, \bm{\beta}, \bm{\gamma})$ 
are learnt so that the approximate marginals or beliefs $\{b_i(x_i)\}$ are as close as possible to the true marginals $\{p_i(x_i)\}$. 
We fit these $2 n_{\text{nodes}} + 2 n_{\text{edges}}$ parameters 
by minimizing 
the MSE loss or squared L2 norm $L(b,p) = \frac{1}{n_{\text{nodes}}} \sum\limits_{i=1}^{n_{\text{nodes}}} [b_i(x_i = +1) - p_i(x_i = +1)]^2$
between beliefs $\{b_i(x_i)\}$ obtained with CBP and the true marginals $\{p_i(x_i)\}$. 
True marginals ${p_i(x_i)}$ are computed using the Junction Tree algorithm, an exact inference algorithm implemented on the pgmpy library \citep{pgmpy2015}. Approximate marginals $b_i(x_i)$ are obtained by running CBP for $T = 100$ time steps.
The model is trained on PyTorch \citep{PyTorch2019} using backpropagation through time with a gradient-descent based method, resilient backpropagation \citep{Rprop1993}. Pytorch does automatic differentiation to update parameters.
We use a learning rate of $0.001$ and stop the learning once the validation loss saturates. 
Parameters 
are initialized such that CBP converges with this choice of parameters (see section \ref{sec:conv-results}). 
One initialization tested, among others, is to initialize $\bm{\beta}$ and $\bm{\gamma}$ at the value $\bm{1}$, while $\bm{\alpha}$ and $\bm{\kappa}$ are initialized at the same small enough value $v$: 
we decrease $v$ from value 1 until $\rho(\bm{A}) < 1$
, which ensures that CBP converges (see Theorem \ref{theorem:convergence-is-possible}). 
The model is also trained using function \emph{least squares} of SciPy 
\citep{SciPy2020}. We select the parameters of the method (PyTorch or SciPy) performing better on the validation set. 

\subsection{Unsupervised learning procedure}\label{sec:unsupervised-learning-eCBP} 

The inconvenient of the supervised learning procedure is the need to generate training examples, i.e., to compute the true marginals. This has an exponential complexity in the size of the graph and therefore does not scale to big dense graphs. 
We propose a way of learning parameters $\bm{\alpha}$ and $\bm{\kappa}$ with: 
\begin{empheq}[left=\empheqlbrace]{align}
  &\Delta \alpha_{ij} = \eta_1 \big[M_{j \to i} (B_i - \alpha_{ij} M_{j \to i})+ M_{i \to j} (B_j - \alpha_{ij} M_{i \to j})\big] \label{eq:unsupervised-learning-rule-beta}\\
  &\Delta \kappa_i = - \eta_2 M_{\text{ext} \to i} (B_i - M_{\text{ext} \to i})\label{eq:unsupervised-learning-rule-gamma} 
\end{empheq}
where we keep $(\bm{\beta}, \bm{\gamma}) = (\bm{1}, \bm{1})$ fixed. Messages and beliefs are taken after $T=100$ iterations, and $\eta_1$ and $\eta_2$ are learning rates. The second part of Equation \eqref{eq:unsupervised-learning-rule-beta} ensures that $\bm{\alpha}$ is symmetric.

The proposed learning rules try to minimize redundancies in the information sent to other nodes. 
We place ourselves in a scenario where there should not be correlations between messages sent in opposite directions: training examples 
are pure noise, that is, are uninformative. 
The learning rule on $\bm{\alpha}$ makes the information that node $i$ sends to $j$  $(B_i-\alpha_{ij} M_{j \to i})$ orthogonal to the information that node $j$ received from $j$ ($M_{j \to i}$). 
The learning rule on $\bm{\kappa}$ aims at countering the amplification of inputs messages, so that overall the information received by node $i$ (including the effects of cycles) is $M_{\text{ext} \to i}$.
In the resulting message-passing algorithm, redundancies caused by cycles are minimized.
See Appendix \ref{sec:unsupervised-learning-appendix} for more details.

\section{Numerical experiments}\label{sec:learning-CBP} 

\subsection{Synthetic problems}\label{subsec:experimental-setting} 

\paragraph{Experimental setting}

Simulations involve Erdős-Renyi graphs with 9 nodes (see Figure \ref{fig:CBP-learning-results}) 
generated with connection probabilities $p$ ranging from $0.2$ to $1$. 
For each graph topology or connection probability, 30 Ising models (weighted graphs) are generated. 
An absence of edge means $J_{ij} = 0$, while existing edges had their weights sampled randomly according to $J_{ij} \sim \mathcal{N}(0, 1)$ (spin-glass). As a reminder, $J_{ij}$ is associated to the \emph{unoriented} edge $(i,j)$: $J_{i \to j} = J_{j \to i} \equiv J_{ij}$. 
For each weighted graph, external evidences $\bm{M_{\text{ext}}}$ are generated according to $M_{\text{ext} \to i} \sim \mathcal{N}(0, 1)$. 200 training examples, 100 validation examples and 100 test examples are generated, where an example is a vector $\bm{M_{\text{ext}}}$. 


The Appendices generalize this restrictive setup  
to structured rather than random graphs (see Figure \ref{fig:marginals-CBP-models-structures}) as grids and bipartite graphs, graphs with a different distribution of weights (uniform positive couplings; see Figure \ref{fig:marginals-BCP-models-positive-couplings}), and graphs with a higher number of nodes (100 nodes; see Figure \ref{fig:marginals-BCP-models-bigger-graphs}).


\paragraph{Results of supervised learning}


Marginals obtained after fitting the CBP algorithm can be seen in Figure \ref{fig:CBP-learning-results}. Circular BP performs well qualitatively: 
it generalizes very well to test data for all connection probabilities. On the contrary, for BP, there is a sharp transition from carrying out rather good approximate inference for $p=0.2$ to performing really poorly for $p \geq 0.3$ (at least on some of the 30 weighted graphs) because the system does not converge (frustratation phenomenon).

\begin{figure}
  \centering
  \includegraphics[width=\linewidth]{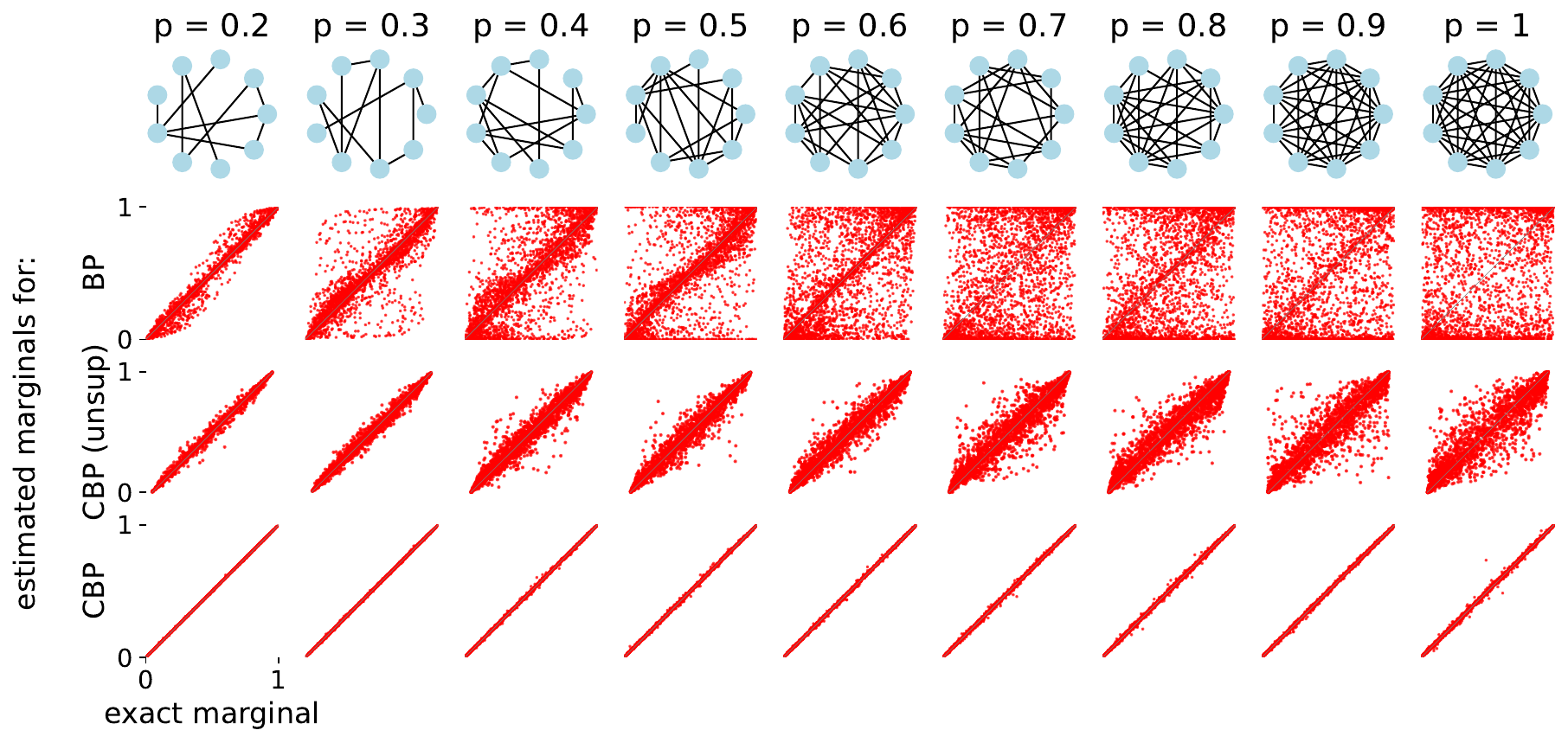}
  \vspace*{-5mm}
  \caption{\textbf{Results of Circular BP on Erdos-Renyi graphs}. 
  Estimated marginals on the test set for BP and Circular BP, for both unsupervised and supervised learning procedures. One point represents the belief of a node, on one of the test examples, and one of the 30 randomly generated graphs.
  }
  \label{fig:CBP-learning-results}
\end{figure}

\begin{figure}[h]
  \centering
  \includegraphics[width=0.7\linewidth]{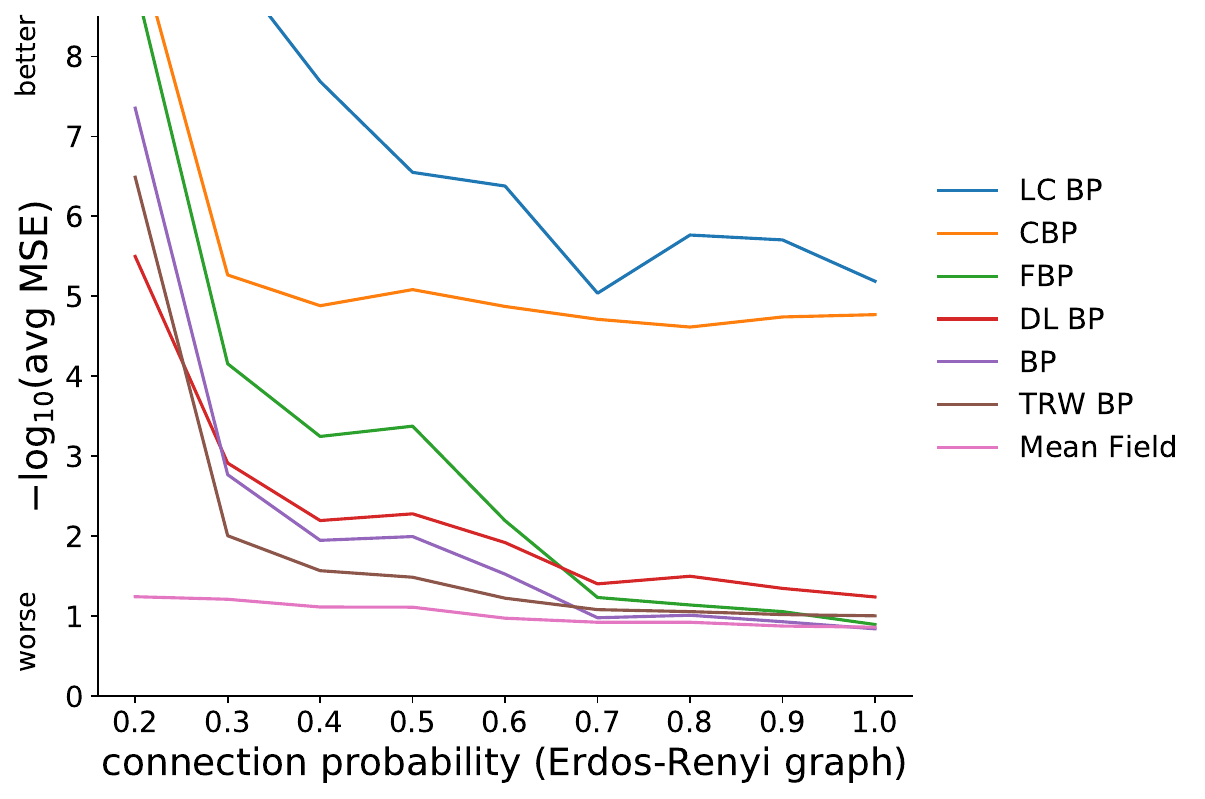}
  \caption{\textbf{Comparison between various algorithms}. 
  Circular BP strongly outperforms the algorithms from the same family (Fractional BP, 
  BP, Tree-Reweighted BP and Mean-Field) and the more complex and Double-Loop BP. CBP has comparable performance for dense graphs with another more complex algorithm, Loop-Corrected BP. 
  The score measure is given in Equation \eqref{eq:def-score-measure}.
  }
  \label{fig:errors-BP-CBP-FBP-eBP-eCBP-eFBP}
\end{figure}


We compare quantitatively Circular BP to various approximate inference algorithms in Figure \ref{fig:errors-BP-CBP-FBP-eBP-eCBP-eFBP}.
Circular BP outperforms Fractional BP, itself outperforming BP. Logically, all algorithms show a decreased performance for increased graph density, but at a higher rate for BP.  
Additionally, we compared Circular BP to two more complex approximate inference algorithms: Loop-corrected BP \citep{Mooij2007b}, which approximates the cavity distributions for each variable in a two-step way, 
and Double-Loop BP \citep{Heskes2003}
, which uses constrained minimization of the Bethe free energy itself in a double-loop procedure guaranteeing the convergence of the algorithm
. We used the libDAI implementation of these algorithms \citep{libDAI2010} (option ``LCBP\_FULLCAVin\_SEQRND" for LC BP and ``HAK\_BETHE" for DL BP). 
Circular BP performs worse than Loop-corrected BP although reaches comparable performance for dense graphs, and strongly outperforms Double-Loop BP.

\paragraph{Results of unsupervised learning} 
The unsupervised learning rule achieves good performance in all tested cases; see Figure \ref{fig:CBP-learning-results}. 
We used 5000 training examples, and learning rates starting from $\eta_1 = 0.03$ and $\eta_2 = 0.0003$ (which were both decreased by half after one third of the optimization, and after two thirds of the optimization).
Note that we add some damping to the algorithm (see Appendices), $\bm{\epsilon} = \bm{0.7}$
, contrary to the supervised learning case.

\subsection{Application to computer vision}

We tackle a denoising problem on the MNIST database of handwritten digits \citep{Deng2012}.
Initial images $\{\bm{\xi}^{k}\}$ are vectors of size $n=784$ ($28$ by $28$ pixels) composed of gray levels initially from $0$ to $255$, rescaled between $-1$ and $+1$. Noisy images $\{\tilde{\bm{\xi}}^k\}$ have half of their pixels set at value zero. 

To reconstruct images, we first use the Hopfield network \citep{Hopfield1982} 
with weights defined by the pseudoinverse rule \citep{Personnaz1986} based on $N=60000$ training images: $\bm{\theta} = \frac{1}{n} \bm{\xi} \bm{\xi}^I$ where $\bm{\xi}^I$ is the pseudo-inverse of matrix $\bm{\xi}$ (whose columns are the training examples $\{\bm{\xi}^k\}$). 
The Hopfield network consists of initializing $\bm{x}_0 = \tilde{\bm{\xi}}^k$ and running $\bm{x}_{t+1} = \sgn(\bm{\theta} \bm{x}_t)$ until convergence; see Figure \ref{fig:applications-CBP}B for 5 examples of reconstruction of noisy test data. 

Interestingly, the energy function of a Hopfield network has the same form as the one of an Ising model. Therefore, probabilistic inference methods can be used for this task. 
We apply BP to an all-to-all connected graph of 784 nodes with interactions $\bm{J} = 5 \bm{\theta}$ (rescaling of the Hopfield weights) and external inputs $\bm{M}_{\text{ext}}^k = 3 \tilde{\bm{\xi}}^k$ (rescaling of the noisy images); Figure \ref{fig:applications-CBP}B shows the marginals rescaled between $-1$ and $+1$.
Further, we apply CBP, trained using the unsupervised learning procedure described above (with damping $\epsilon = 0.8$ on CBP) with 1000 training examples consisting of pure noise: $\tilde{\bm{\xi}}^k \sim \mathcal{N}(\bm{0}, \textbf{Id})$. CBP performs better 
than BP and the Hopfield network; see Figure \ref{fig:applications-CBP}B.
This shows the applicability of the unsupervised learning method on a large dense graph (784 nodes).

\begin{figure}[h]
    \centering
    \includegraphics[width=6cm]{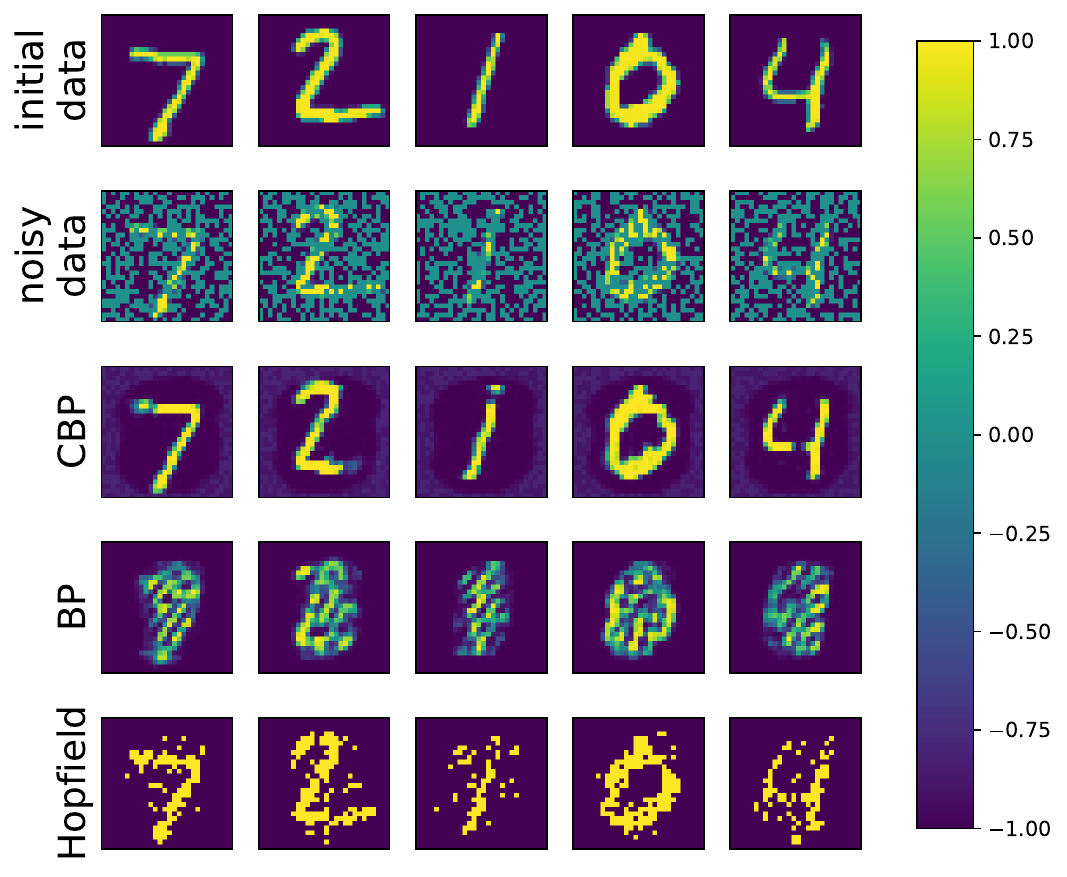}%
    \caption{Potential application of Circular BP: Denoising hand-written digits for computer vision. Noisy data is presented to the Hopfield network / BP / CBP, which reconstruct a denoised version of the signal. 
    }
    \label{fig:applications-CBP}
\end{figure}

\section{Discussions and conclusion}\label{sec:discussion}


In this paper, we present an approximate inference method on undirected graphical models, the \emph{Circular Belief Propagation} algorithm, defined for general probability distributions on any type of variables (discrete or continuous) and possibly with higher-order interactions. 

The algorithm is easily interpretable: CBP consists of disturbing the exchange of information in cycles of length two in order to partly compensate for the detrimental effect of cycles of length 3 and more, which create spurious correlations between messages in BP. 
One can also talk of ``anti-Circular BP".
We motivate the Circular BP algorithm theoretically, by relating it to Fractional BP and other Reweighted BP algorithms based on a reweighting of terms in BP's Bethe free energy, although Circular BP itself does not come from such an reweighting, nor minimizes a free energy. 

We propose a way of learning the parameters of Circular BP using a supervised learning procedure (see section \ref{sec:learning-CBP}). However, one limitation of this procedure is the need to generate exact marginals, which is infeasible in large graphs. The unsupervised learning method to learn the corrective multiplicative factors brings a solution to this issue. 
It is less accurate than the supervised learning rule, but it is not limited by the network size and complexity.
The unsupervised learning rules are intuitive: they 
relate to the principles of efficient information transmission, by trying to remove redundant probabilistic information created by cycles.
They maintain the gain of activity and remove the prediction of the incoming information, similar to predictive coding \citep{Rao1999}. Indeed, Circular BP can be written $M_{i \to j} = f(B_i - \hat{B}_i^j, \beta_{ij} J_{ij})$ where $\hat{B}_i^j$ is the prediction of the state of $x_i$ by variable $x_j$.

We show the applications of the model to a variety of problems involving binary variables, both synthetic and from the real world. 
Overall, CBP improves the quality of probabilistic inference in cyclic graphs compared to BP. It also converges on problems on which BP fails.
CBP achieves near-optimal performance, even if BP does not converge 
(see Figure \ref{fig:CBP-learning-results}) thanks to the existence of parameters ensuring the convergence of the algorithm. 
In all cases, Circular BP hugely outperforms existing message-passing algorithms operating on the probabilistic graph itself, like BP and Fractional BP. 
We show in simulations, by considering particular probability distributions (Ising models) that the algorithm outperforms not only BP but also other message-passing algorithms such as Fractional BP \citep{Wiegerinck2002} and Power EP \citep{Minka2004}. 
Results of inference using Circular BP are impressively good when using supervised learning.  


\paragraph{Applications of the method}
The paper presents results in the restricted binary case. 
However, BP is used in this case in many real-world applications. 
This includes the MIMO (multiple input, multiple output) detection problem for wireless communication where variants of BP are in use in Wi-Fi and 5G protocols \citep{Som2010, Yang2015, Liu2018}
, error-correcting codes \citep{McEliece1998}, 
and computer vision \citep{George2017}, as well as many applications of Restricted Boltzmann Machines like sampling and learning \citep{LazaroGredilla2021}.
The improvement of Circular BP over BP and its guarantee of existence of convergence conditions may enable approximate probabilistic inference in a wide range of new applications. Note that some applications could be harmful to society: better inference systems could be used either for the best or for the worst.

\paragraph{Inference in the brain}

It has been proposed that the BP algorithm is implemented in the brain to carry out inferences based on partial sensory (and/or cognitive) information \citep{Shon2005, Deneve2008, Steimer2009}. Circular BP, as an extension of BP, could also be implemented in the brain.
\citet{Jardri2013} use a particular case of CBP to
account for aberrant beliefs in general, and psychosis in particular \citep{Jardri2013, DeneveJardri2016, Jardri2017, Leptourgos2017, Leptourgos2021, Bouttier2021}, based on the fact that $\alpha < 1$ brings overconfidence to the estimated marginals compared to BP. 
Importantly, the algorithm from \citet{Jardri2013} is not motivated normatively but only at the intuitive level 
\citep{Deneve2005, Deneve2008} as a way of allowing for information reverberations with the underlying idea of excitation-inhibition imbalance (by introducing positive feedback between excitatory populations).


\paragraph{Machine learning research}

This work paves the way for future research on approximate inference. 
Message-passing algorithms have recently been receiving a lot of attention from the machine learning community with the success of Graph Neural Networks (GNN), 
Message-Passing Neural Networks, and variants inspired from BP \citep{Gilmer2017, Kuck2020, Satorras2021}. 
We believe that the idea behind Circular BP (removing only part of the opposite message) can improve further these message-passing algorithms. 
In fact, Equation \eqref{eq:eCBP-message-log} 
provides a general recipe to use correction in message-passing algorithms, including mean-field methods, which typically write $x_i = \sum_j f(x_j,\, J_{ji})$ and could be translated into $x_i = \sum_j m_{ji}$ together with $m_{ji} = f(x_j - \alpha_{ji} m_{ij},\, J_{ji})$.
This ``corrective multiplicative factor trick" can be used on top of 
algorithms which already improve BP while conserving the same form,
like \citet{Kuck2020} or the message-GNN of \citet{Yoon2018},
potentially leading to further improvement of these methods.





\bibliography{biblio} 

\begin{thebibliography}{78}
\providecommand{\natexlab}[1]{#1}
\providecommand{\url}[1]{\texttt{#1}}
\expandafter\ifx\csname urlstyle\endcsname\relax
  \providecommand{\doi}[1]{doi: #1}\else
  \providecommand{\doi}{doi: \begingroup \urlstyle{rm}\Url}\fi

\bibitem[Ankan and Panda(2015)]{pgmpy2015}
Ankur Ankan and Abinash Panda.
\newblock pgmpy: Probabilistic graphical models using python.
\newblock In \emph{Proceedings of the 14th Python in Science Conference (SCIPY 2015)}. Citeseer, 2015.

\bibitem[Baxter(1982)]{Baxter1982}
R.~J. Baxter.
\newblock \emph{{Exactly solved models in statistical mechanics}}.
\newblock 1982.
\newblock ISBN 978-0-486-46271-4.

\bibitem[Bouttier et~al.(2021)Bouttier, Duttagupta, Den{\`{e}}ve, and Jardri]{Bouttier2021}
Vincent Bouttier, Suhrit Duttagupta, Sophie Den{\`{e}}ve, and Renaud Jardri.
\newblock Circular inference predicts nonuniform overactivation and dysconnectivity in brain-wide connectomes.
\newblock \emph{Schizophrenia Research}, 2021.
\newblock ISSN 0920-9964.
\newblock \doi{https://doi.org/10.1016/j.schres.2020.12.045}.

\bibitem[Dempster et~al.(1977)Dempster, Laird, and Rubin]{Dempster1977}
A.~P. Dempster, N.~M. Laird, and D.~B. Rubin.
\newblock Maximum likelihood from incomplete data via the em algorithm.
\newblock \emph{Journal of the Royal Statistical Society, Series B}, 39\penalty0 (1):\penalty0 1--38, 1977.

\bibitem[Den{\`{e}}ve(2005)]{Deneve2005}
Sophie Den{\`{e}}ve.
\newblock {Bayesian inference in spiking neurons}.
\newblock In \emph{Advances in neural information processing systems}, 2005.

\bibitem[Den{\`{e}}ve(2008)]{Deneve2008}
Sophie Den{\`{e}}ve.
\newblock {Bayesian Spiking Neurons I: Inference}.
\newblock \emph{Neural Computation}, 20\penalty0 (1):\penalty0 91--117, 2008.
\newblock ISSN 0899-7667.
\newblock \doi{10.1162/neco.2008.20.1.91}.

\bibitem[Den{\`{e}}ve and Jardri(2016)]{DeneveJardri2016}
Sophie Den{\`{e}}ve and Renaud Jardri.
\newblock {Circular inference: mistaken belief, misplaced trust}.
\newblock \emph{Current Opinion in Behavioral Sciences}, 11:\penalty0 40--48, oct 2016.
\newblock ISSN 2352-1546.
\newblock \doi{10.1016/J.COBEHA.2016.04.001}.

\bibitem[Deng(2012)]{Deng2012}
Li~Deng.
\newblock The mnist database of handwritten digit images for machine learning research.
\newblock \emph{IEEE Signal Processing Magazine}, 29\penalty0 (6):\penalty0 141--142, 2012.

\bibitem[Garcia~Satorras and Welling(2021)]{Satorras2021}
V{\'i}ctor Garcia~Satorras and Max Welling.
\newblock Neural enhanced belief propagation on factor graphs.
\newblock In Arindam Banerjee and Kenji Fukumizu, editors, \emph{Proceedings of The 24th International Conference on Artificial Intelligence and Statistics}, volume 130 of \emph{Proceedings of Machine Learning Research}, pages 685--693. PMLR, 2021.

\bibitem[George et~al.(2017)George, Lehrach, Kansky, Lázaro-Gredilla, Laan, Marthi, Lou, Meng, Liu, Wang, Lavin, and Phoenix]{George2017}
Dileep George, Wolfgang Lehrach, Ken Kansky, Miguel Lázaro-Gredilla, Christopher Laan, Bhaskara Marthi, Xinghua Lou, Zhaoshi Meng, Yi~Liu, Huayan Wang, Alex Lavin, and D.~Scott Phoenix.
\newblock A generative vision model that trains with high data efficiency and breaks text-based captchas.
\newblock \emph{Science}, 358\penalty0 (6368):\penalty0 eaag2612, 2017.
\newblock \doi{10.1126/science.aag2612}.

\bibitem[Gilmer et~al.(2017)Gilmer, Schoenholz, Riley, Vinyals, and Dahl]{Gilmer2017}
Justin Gilmer, Samuel~S. Schoenholz, Patrick~F. Riley, Oriol Vinyals, and George~E. Dahl.
\newblock {Neural Message Passing for Quantum Chemistry}.
\newblock \emph{34th International Conference on Machine Learning, ICML 2017}, 3:\penalty0 2053--2070, apr 2017.

\bibitem[Hazan and Shashua(2008)]{Hazan2008}
Tamir Hazan and Amnon Shashua.
\newblock {Convergent Message-Passing Algorithms for Inference over General Graphs with Convex Free Energies}.
\newblock In \emph{Uncertainty in Artificial Intelligence}, 2008.

\bibitem[Hazan and Shashua(2010)]{Hazan2010}
Tamir Hazan and Amnon Shashua.
\newblock {Norm-product belief propagation: Primal-dual message-passing for approximate inference}.
\newblock \emph{IEEE Transactions on Information Theory}, 56\penalty0 (12):\penalty0 6294--6316, dec 2010.
\newblock ISSN 00189448.
\newblock \doi{10.1109/TIT.2010.2079014}.

\bibitem[Heskes(2002)]{Heskes2002}
Tom Heskes.
\newblock {Stable Fixed Points of Loopy Belief Propagation Are Minima of the Bethe Free Energy}.
\newblock In \emph{Advances in neural information processing systems}, 2002.

\bibitem[Heskes(2004)]{Heskes2004}
Tom Heskes.
\newblock {On the uniqueness of loopy belief propagation fixed points}.
\newblock \emph{Neural Computation}, 16\penalty0 (11):\penalty0 2379--2413, 2004.
\newblock ISSN 08997667.
\newblock \doi{10.1162/0899766041941943}.

\bibitem[Heskes et~al.(2002)Heskes, Albers, and Kappen]{Heskes2003}
Tom Heskes, Kees Albers, and Bert Kappen.
\newblock Approximate inference and constrained optimization.
\newblock In \emph{Proceedings of the Nineteenth Conference on Uncertainty in Artificial Intelligence}, UAI'03, page 313–320, San Francisco, CA, USA, 2002. Morgan Kaufmann Publishers Inc.
\newblock ISBN 0127056645.

\bibitem[Hopfield(1982)]{Hopfield1982}
J~J Hopfield.
\newblock Neural networks and physical systems with emergent collective computational abilities.
\newblock \emph{Proceedings of the National Academy of Sciences}, 79\penalty0 (8):\penalty0 2554--2558, 1982.
\newblock \doi{10.1073/pnas.79.8.2554}.

\bibitem[Ihler and McAllester(2009)]{Ihler2009}
Alexander Ihler and David McAllester.
\newblock Particle belief propagation.
\newblock In David van Dyk and Max Welling, editors, \emph{Proceedings of the Twelth International Conference on Artificial Intelligence and Statistics}, volume~5 of \emph{Proceedings of Machine Learning Research}, pages 256--263, Hilton Clearwater Beach Resort, Clearwater Beach, Florida USA, 16--18 Apr 2009. PMLR.

\bibitem[Ihler(2007)]{Ihler2007}
Alexander~T Ihler.
\newblock {Accuracy Bounds for Belief Propagation}.
\newblock In \emph{Proceedings of the Twenty-Third Conference on Uncertainty in Artificial Intelligence}, UAI'07, pages 183--190, Arlington, Virginia, USA, 2007. AUAI Press.
\newblock ISBN 0974903930.

\bibitem[{Ising}(1925)]{Ising1925}
Ernst {Ising}.
\newblock {Beitrag zur Theorie des Ferromagnetismus}.
\newblock \emph{Zeitschrift fur Physik}, 31\penalty0 (1):\penalty0 253--258, 1925.
\newblock \doi{10.1007/BF02980577}.

\bibitem[Jardri and Den{\`{e}}ve(2013)]{Jardri2013}
Renaud Jardri and Sophie Den{\`{e}}ve.
\newblock {Circular inferences in schizophrenia}.
\newblock \emph{Brain}, 136\penalty0 (11):\penalty0 3227--3241, 2013.
\newblock ISSN 14602156.
\newblock \doi{10.1093/brain/awt257}.

\bibitem[Jardri et~al.(2017)Jardri, Duverne, Litvinova, and Den{\`{e}}ve]{Jardri2017}
Renaud Jardri, Sandrine Duverne, Alexandra~S. Litvinova, and Sophie Den{\`{e}}ve.
\newblock {Experimental evidence for circular inference in schizophrenia}.
\newblock \emph{Nature Communications}, 8:\penalty0 14218, jan 2017.
\newblock ISSN 2041-1723.
\newblock \doi{10.1038/ncomms14218}.

\bibitem[Knoll and Pernkopf(2017)]{Knoll2017}
Christian Knoll and Franz Pernkopf.
\newblock {On Loopy Belief Propagation - Local Stability Analysis for Non-Vanishing Fields}.
\newblock In \emph{Proceedings of the conference on Uncertainty in Artificial Intelligence}, 2017.

\bibitem[Koller and Friedman(2009)]{Koller2009}
D.~Koller and N.~Friedman.
\newblock \emph{Probabilistic Graphical Models: Principles and Techniques}.
\newblock Adaptive computation and machine learning. MIT Press, 2009.
\newblock ISBN 9780262013192.

\bibitem[Kschischang et~al.(2001)Kschischang, Frey, and Loeliger]{Kschischang2001}
F.R. Kschischang, B.J. Frey, and H.-A. Loeliger.
\newblock Factor graphs and the sum-product algorithm.
\newblock \emph{IEEE Transactions on Information Theory}, 47\penalty0 (2):\penalty0 498--519, 2001.
\newblock \doi{10.1109/18.910572}.

\bibitem[Kuck et~al.(2020)Kuck, Chakraborty, Tang, Luo, Song, Sabharwal, and Ermon]{Kuck2020}
Jonathan Kuck, Shuvam Chakraborty, Hao Tang, Rachel Luo, Jiaming Song, Ashish Sabharwal, and Stefano Ermon.
\newblock Belief propagation neural networks.
\newblock In H.~Larochelle, M.~Ranzato, R.~Hadsell, M.~F. Balcan, and H.~Lin, editors, \emph{Advances in Neural Information Processing Systems}, volume~33, pages 667--678. Curran Associates, Inc., 2020.

\bibitem[Lazaro-Gredilla et~al.(2021)Lazaro-Gredilla, Dedieu, and George]{LazaroGredilla2021}
Miguel Lazaro-Gredilla, Antoine Dedieu, and Dileep George.
\newblock {Perturb-and-max-product: Sampling and learning in discrete energy-based models}.
\newblock In M~Ranzato, A~Beygelzimer, Y~Dauphin, P~S Liang, and J~Wortman Vaughan, editors, \emph{Advances in Neural Information Processing Systems}, volume~34, pages 928--940. Curran Associates, Inc., 2021.

\bibitem[Leptourgos et~al.(2017)Leptourgos, Den{\`{e}}ve, and Jardri]{Leptourgos2017}
Pantelis Leptourgos, Sophie Den{\`{e}}ve, and Renaud Jardri.
\newblock {Can circular inference relate the neuropathological and behavioral aspects of schizophrenia?}
\newblock \emph{Current Opinion in Neurobiology}, 46:\penalty0 154--161, oct 2017.
\newblock ISSN 0959-4388.
\newblock \doi{10.1016/J.CONB.2017.08.012}.

\bibitem[Leptourgos et~al.(2021)Leptourgos, Bouttier, Den{\`{e}}ve, and Jardri]{Leptourgos2021}
Pantelis Leptourgos, Vincent Bouttier, Sophie Den{\`{e}}ve, and Renaud Jardri.
\newblock {From hallucinations to synaesthesia: a circular inference account of unimodal and multimodal erroneous percepts in clinical and drug-induced psychosis}.
\newblock \emph{PsyArXiv}, 2021.

\bibitem[Litvak et~al.(2009)Litvak, Karlinsky, and Ullman]{Litvak2009a}
Shai Litvak, Leonid Karlinsky, and Shimon Ullman.
\newblock {Properties of Cortical Networks Improve Inference in Highly Interconnected Graphical Models}.
\newblock 2009.

\bibitem[Liu et~al.(2019)Liu, Moghadam, Rasmussen, Huang, and Chatterjee]{Liu2019}
Dong Liu, Nima~N. Moghadam, Lars~K. Rasmussen, Jinliang Huang, and Saikat Chatterjee.
\newblock {$\alpha$ Belief Propagation as Fully Factorized Approximation}.
\newblock In \emph{2019 IEEE Global Conference on Signal and Information Processing (GlobalSIP)}, pages 1--5, 2019.
\newblock \doi{10.1109/GlobalSIP45357.2019.8969545}.

\bibitem[Liu et~al.(2020)Liu, Vu, Li, and Rasmussen]{Liu2020}
Dong Liu, Minh~Th{\`{a}}nh Vu, Zuxing Li, and Lars~K. Rasmussen.
\newblock {$\alpha$ Belief propagation for approximate inference}.
\newblock \emph{arXiv}, 2020.
\newblock ISSN 23318422.

\bibitem[Liu and Li(2018)]{Liu2018}
Xiangfeng Liu and Ying Li.
\newblock Deep mimo detection based on belief propagation.
\newblock In \emph{2018 IEEE Information Theory Workshop (ITW)}, pages 1--5, 2018.
\newblock \doi{10.1109/ITW.2018.8613336}.

\bibitem[Loh and Wibisono(2014)]{Loh2014}
Po-Ling Loh and Andre Wibisono.
\newblock {Concavity of reweighted Kikuchi approximation}.
\newblock In Z~Ghahramani, M~Welling, C~Cortes, N~Lawrence, and K~Q Weinberger, editors, \emph{Advances in Neural Information Processing Systems}, volume~27. Curran Associates, Inc., 2014.

\bibitem[McEliece et~al.(1998)McEliece, MacKay, and Cheng]{McEliece1998}
R.J. McEliece, D.J.C. MacKay, and Jung-Fu Cheng.
\newblock Turbo decoding as an instance of pearl's "belief propagation" algorithm.
\newblock \emph{IEEE Journal on Selected Areas in Communications}, 16\penalty0 (2):\penalty0 140--152, 1998.
\newblock \doi{10.1109/49.661103}.

\bibitem[Minka(2004)]{Minka2004}
Thomas Minka.
\newblock {Power EP}.
\newblock Technical report, 2004.

\bibitem[Minka(2005)]{Minka2005}
Thomas Minka.
\newblock {Divergence measures and message passing}.
\newblock Technical report, 2005.

\bibitem[Minka and Lafferty(2002)]{Minka2002}
Thomas Minka and John Lafferty.
\newblock Expectation-propagation for the generative aspect model.
\newblock In \emph{Proceedings of the Eighteenth Conference on Uncertainty in Artificial Intelligence}, UAI'02, page 352–359, San Francisco, CA, USA, 2002. Morgan Kaufmann Publishers Inc.
\newblock ISBN 1558608974.

\bibitem[Minka(2001)]{Minka2001a}
Thomas~P. Minka.
\newblock {Expectation propagation for approximate Bayesian inference}.
\newblock In \emph{Uncertainty in Artificial Intelligence}, volume~17, pages 362--369, 2001.

\bibitem[Mongillo and Deneve(2008)]{Mongillo2008}
Gianluigi Mongillo and Sophie Deneve.
\newblock {Online Learning with Hidden Markov Models}.
\newblock \emph{Neural Computation}, 20\penalty0 (7):\penalty0 1706--1716, 07 2008.
\newblock ISSN 0899-7667.
\newblock \doi{10.1162/neco.2008.10-06-351}.

\bibitem[Mooij(2010)]{libDAI2010}
Joris~M. Mooij.
\newblock lib{DAI}: A free and open source {C++} library for discrete approximate inference in graphical models.
\newblock \emph{Journal of Machine Learning Research}, 11:\penalty0 2169--2173, August 2010.

\bibitem[Mooij and Kappen(2009)]{Mooij2009}
Joris~M Mooij and Hilbert Kappen.
\newblock {Bounds on marginal probability distributions}.
\newblock In D~Koller, D~Schuurmans, Y~Bengio, and L~Bottou, editors, \emph{Advances in Neural Information Processing Systems}, volume~21. Curran Associates, Inc., 2009.

\bibitem[Mooij and Kappen(2007{\natexlab{a}})]{Mooij2007}
Joris~M. Mooij and Hilbert~J. Kappen.
\newblock Sufficient conditions for convergence of the sum–product algorithm.
\newblock \emph{IEEE Transactions on Information Theory}, 53\penalty0 (12):\penalty0 4422--4437, 2007{\natexlab{a}}.
\newblock \doi{10.1109/TIT.2007.909166}.

\bibitem[Mooij and Kappen(2007{\natexlab{b}})]{Mooij2007b}
Joris~M. Mooij and Hilbert~J. Kappen.
\newblock Loop corrections for approximate inference on factor graphs.
\newblock \emph{J. Mach. Learn. Res.}, 8:\penalty0 1113–1143, December 2007{\natexlab{b}}.
\newblock ISSN 1532-4435.

\bibitem[Mooij and Kappen(2004)]{Mooij2004}
Joris~Marten Mooij and Hilbert~J Kappen.
\newblock {Validity estimates for loopy Belief Propagation on binary real-world networks}.
\newblock In \emph{Advances in neural information processing systems}, 2004.

\bibitem[Murphy et~al.(1999)Murphy, Weiss, and Jordan]{Murphy1999}
Kevin~P. Murphy, Yair Weiss, and Michael~I. Jordan.
\newblock Loopy belief propagation for approximate inference: An empirical study.
\newblock In \emph{Proceedings of the Fifteenth Conference on Uncertainty in Artificial Intelligence}, UAI'99, page 467–475, San Francisco, CA, USA, 1999. Morgan Kaufmann Publishers Inc.
\newblock ISBN 1558606149.

\bibitem[Paszke et~al.(2019)Paszke, Gross, Massa, Lerer, Bradbury, Chanan, Killeen, Lin, Gimelshein, Antiga, Desmaison, Kopf, Yang, DeVito, Raison, Tejani, Chilamkurthy, Steiner, Fang, Bai, and Chintala]{PyTorch2019}
Adam Paszke, Sam Gross, Francisco Massa, Adam Lerer, James Bradbury, Gregory Chanan, Trevor Killeen, Zeming Lin, Natalia Gimelshein, Luca Antiga, Alban Desmaison, Andreas Kopf, Edward Yang, Zachary DeVito, Martin Raison, Alykhan Tejani, Sasank Chilamkurthy, Benoit Steiner, Lu~Fang, Junjie Bai, and Soumith Chintala.
\newblock Pytorch: An imperative style, high-performance deep learning library.
\newblock In H.~Wallach, H.~Larochelle, A.~Beygelzimer, F.~d\textquotesingle Alch\'{e}-Buc, E.~Fox, and R.~Garnett, editors, \emph{Advances in Neural Information Processing Systems 32}, pages 8024--8035. Curran Associates, Inc., 2019.

\bibitem[Pearl(1988)]{Pearl1988}
Judea Pearl.
\newblock \emph{Probabilistic Reasoning in Intelligent Systems: Networks of Plausible Inference}.
\newblock Morgan Kaufmann Publishers Inc., San Francisco, CA, USA, 1988.
\newblock ISBN 0934613737.

\bibitem[Personnaz et~al.(1986)Personnaz, Guyon, and Dreyfus]{Personnaz1986}
L.~Personnaz, I.~Guyon, and G.~Dreyfus.
\newblock Collective computational properties of neural networks: New learning mechanisms.
\newblock \emph{Phys. Rev. A}, 34:\penalty0 4217--4228, 1986.
\newblock \doi{10.1103/PhysRevA.34.4217}.

\bibitem[Peterson and Anderson(1987)]{Peterson1987}
C.~Peterson and J.~R. Anderson.
\newblock A mean field theory learning algorithm for neural networks.
\newblock \emph{Complex Systems}, 1:\penalty0 995--1019, 1987.

\bibitem[Rao and Ballard(1999)]{Rao1999}
Rajesh~P.N. Rao and Dana~H. Ballard.
\newblock {Predictive coding in the visual cortex: a functional interpretation of some extra-classical receptive-field effects}.
\newblock \emph{Nature Neuroscience}, 2\penalty0 (1):\penalty0 79--87, 1999.
\newblock ISSN 1097-6256.
\newblock \doi{10.1038/4580}.

\bibitem[Riedmiller and Braun(1993)]{Rprop1993}
M.~Riedmiller and H.~Braun.
\newblock A direct adaptive method for faster backpropagation learning: the rprop algorithm.
\newblock In \emph{IEEE International Conference on Neural Networks}, pages 586--591 vol.1, 1993.
\newblock \doi{10.1109/ICNN.1993.298623}.

\bibitem[Roosta et~al.(2008)Roosta, Wainwright, and Sastry]{Roosta2008}
Tanya~G. Roosta, Martin~J. Wainwright, and Shankar~S. Sastry.
\newblock {Convergence analysis of reweighted sum-product algorithms}.
\newblock \emph{IEEE Transactions on Signal Processing}, 56\penalty0 (9):\penalty0 4293--4305, 2008.
\newblock \doi{10.1109/TSP.2008.924136}.

\bibitem[Shi et~al.(2010)Shi, Schonfeld, and Tuninetti]{Shi2010}
Xiangqiong Shi, Dan Schonfeld, and Daniela Tuninetti.
\newblock Message error analysis of loopy belief propagation for the sum-product algorithm.
\newblock \emph{CoRR}, abs/1009.2305, 2010.

\bibitem[Shon and Rao(2005)]{Shon2005}
Aaron~P. Shon and Rajesh~P.N. Rao.
\newblock {Implementing belief propagation in neural circuits}.
\newblock \emph{Neurocomputing}, 65-66:\penalty0 393--399, 2005.
\newblock ISSN 09252312.
\newblock \doi{10.1016/j.neucom.2004.10.035}.

\bibitem[Som et~al.(2010)Som, Datta, Chockalingam, and Rajan]{Som2010}
Pritam Som, Tanumay Datta, A.~Chockalingam, and B.~Sundar Rajan.
\newblock Improved large-mimo detection based on damped belief propagation.
\newblock In \emph{2010 IEEE Information Theory Workshop on Information Theory (ITW 2010, Cairo)}, pages 1--5, 2010.
\newblock \doi{10.1109/ITWKSPS.2010.5503188}.

\bibitem[Steimer et~al.(2009)Steimer, Maass, and Douglas]{Steimer2009}
Andreas Steimer, Wolfgang Maass, and Rodney Douglas.
\newblock {Belief propagation in networks of spiking neurons}.
\newblock \emph{Neural Computation}, 21\penalty0 (219), 2009.

\bibitem[Sudderth et~al.(2003)Sudderth, Ihler, Freeman, and Willsky]{Sudderth2003}
Erik~B. Sudderth, Alexander~T. Ihler, William~T. Freeman, and Alan~S. Willsky.
\newblock {Nonparametric belief propagation}.
\newblock In \emph{Proceedings of the IEEE Computer Society Conference on Computer Vision and Pattern Recognition}, volume~1, 2003.
\newblock \doi{10.1109/cvpr.2003.1211409}.

\bibitem[Taga and Mase(2006)]{Taga2006}
Nobuyuki Taga and Shigeru Mase.
\newblock Error bounds between marginal probabilities and beliefs of loopy belief propagation algorithm.
\newblock In \emph{Proceedings of the 5th Mexican International Conference on Artificial Intelligence}, MICAI'06, page 186–196, Berlin, Heidelberg, 2006. Springer-Verlag.
\newblock ISBN 3540490264.
\newblock \doi{10.1007/11925231\_18}.

\bibitem[Virtanen et~al.(2020)Virtanen, Gommers, Oliphant, Haberland, Reddy, Cournapeau, Burovski, Peterson, Weckesser, Bright, {van der Walt}, Brett, Wilson, Millman, Mayorov, Nelson, Jones, Kern, Larson, Carey, Polat, Feng, Moore, {VanderPlas}, Laxalde, Perktold, Cimrman, Henriksen, Quintero, Harris, Archibald, Ribeiro, Pedregosa, {van Mulbregt}, and {SciPy 1.0 Contributors}]{SciPy2020}
Pauli Virtanen, Ralf Gommers, Travis~E. Oliphant, Matt Haberland, Tyler Reddy, David Cournapeau, Evgeni Burovski, Pearu Peterson, Warren Weckesser, Jonathan Bright, St{\'e}fan~J. {van der Walt}, Matthew Brett, Joshua Wilson, K.~Jarrod Millman, Nikolay Mayorov, Andrew R.~J. Nelson, Eric Jones, Robert Kern, Eric Larson, C~J Carey, {\.I}lhan Polat, Yu~Feng, Eric~W. Moore, Jake {VanderPlas}, Denis Laxalde, Josef Perktold, Robert Cimrman, Ian Henriksen, E.~A. Quintero, Charles~R. Harris, Anne~M. Archibald, Ant{\^o}nio~H. Ribeiro, Fabian Pedregosa, Paul {van Mulbregt}, and {SciPy 1.0 Contributors}.
\newblock {{SciPy} 1.0: Fundamental Algorithms for Scientific Computing in Python}.
\newblock \emph{Nature Methods}, 17:\penalty0 261--272, 2020.
\newblock \doi{10.1038/s41592-019-0686-2}.

\bibitem[Wainwright et~al.(2005)Wainwright, Jaakkola, and Willsky]{Wainwright2005}
M.~J. Wainwright, T.~S. Jaakkola, and A.~S. Willsky.
\newblock A new class of upper bounds on the log partition function.
\newblock \emph{IEEE Trans. Inf. Theor.}, 51\penalty0 (7):\penalty0 2313–2335, 2005.
\newblock ISSN 0018-9448.
\newblock \doi{10.1109/TIT.2005.850091}.

\bibitem[Wainwright and Jordan(2008)]{WainwrightJordan2008}
Martin~J. Wainwright and Michael~I. Jordan.
\newblock Graphical models, exponential families, and variational inference.
\newblock \emph{Foundations and Trends in Machine Learning}, 1\penalty0 (1–2):\penalty0 1–305, 2008.
\newblock ISSN 1935-8237.
\newblock \doi{10.1561/2200000001}.

\bibitem[Wainwright. et~al.(2002)Wainwright., Jaakkola, and Willsky]{Wainwright2002}
Martin~J. Wainwright., Tommi Jaakkola, and Alan~S. Willsky.
\newblock {Tree-based reparameterization for approximate inference on loopy graphs}.
\newblock \emph{Advances in Neural Information Processing Systems}, 2002.
\newblock ISSN 10495258.

\bibitem[Wainwright. et~al.(2003)Wainwright., Jaakkola, and Willsky]{Wainwright2003}
Martin~J. Wainwright., Tommi Jaakkola, and Alan~S. Willsky.
\newblock {Tree-reweighted belief propagation algorithms and approximate ML estimation by pseudo-moment matching}.
\newblock \emph{Workshop on Artificial Intelligence and Statistics}, 21:\penalty0 97, 2003.

\bibitem[Wainwright et~al.(2003)Wainwright, Jaakkola, and Willsky]{Wainwright2003b}
Martin~J. Wainwright, Tommi Jaakkola, and Alan~S. Willsky.
\newblock Tree-based reparameterization framework for analysis of sum-product and related algorithms.
\newblock \emph{IEEE Transactions on Information Theory}, 49\penalty0 (5):\penalty0 1120--1146, 2003.

\bibitem[Watanabe(2011)]{Watanabe2011}
Yusuke Watanabe.
\newblock {Uniqueness of Belief Propagation on Signed Graphs}.
\newblock In \emph{Advances in neural information processing systems}, 2011.

\bibitem[Watanabe and Fukumizu(2009)]{Watanabe2009}
Yusuke Watanabe and Kenji Fukumizu.
\newblock Graph zeta function in the bethe free energy and loopy belief propagation.
\newblock In Y.~Bengio, D.~Schuurmans, J.~Lafferty, C.~Williams, and A.~Culotta, editors, \emph{Advances in Neural Information Processing Systems}, volume~22. Curran Associates, Inc., 2009.

\bibitem[Weiss(2000)]{Weiss2000}
Yair Weiss.
\newblock {Correctness of Local Probability Propagation in Graphical Models with Loops}.
\newblock \emph{Neural Computation}, 2000.

\bibitem[Weiss(2001)]{Weiss2001}
Yair Weiss.
\newblock Comparing the mean field method and belief propagation for approximate inference in mrfs.
\newblock In Manfred Opper and David Saad, editors, \emph{Advanced Mean Field Methods: Theory and Practice}, pages 229--239. MIT Press, 2001.

\bibitem[Weller et~al.(2014)Weller, Tang, Sontag, and Jebara]{Weller2014}
Adrian Weller, Kui Tang, David Sontag, and Tony Jebara.
\newblock Understanding the bethe approximation: When and how can it go wrong?
\newblock In \emph{Proceedings of the Thirtieth Conference on Uncertainty in Artificial Intelligence}, UAI'14, page 868–877, Arlington, Virginia, USA, 2014. AUAI Press.
\newblock ISBN 9780974903910.

\bibitem[Wiegerinck and Heskes(2002)]{Wiegerinck2002}
Wim Wiegerinck and Tom Heskes.
\newblock {Fractional Belief Propagation}.
\newblock In \emph{Advances in Neural Information Processing Systems}, volume~15, 2002.

\bibitem[Wilson and Cowan(1972)]{Wilson1972}
Hugh~R. Wilson and Jack~D. Cowan.
\newblock {Excitatory and Inhibitory Interactions in Localized Populations of Model Neurons}.
\newblock \emph{Biophysical Journal}, 12\penalty0 (1):\penalty0 1, 1972.
\newblock \doi{10.1016/S0006-3495(72)86068-5}.

\bibitem[Winn and Bishop(2005)]{Winn2005}
John Winn and Christopher~M. Bishop.
\newblock Variational message passing.
\newblock \emph{Journal of Machine Learning Research}, 6:\penalty0 661--694, 2005.

\bibitem[Yang et~al.(2015)Yang, Zhang, Liang, Xu, and You]{Yang2015}
Junmei Yang, Chuan Zhang, Xiao Liang, Shugong Xu, and Xiaohu You.
\newblock Improved symbol-based belief propagation detection for large-scale mimo.
\newblock In \emph{2015 IEEE Workshop on Signal Processing Systems (SiPS)}, pages 1--6, 2015.
\newblock \doi{10.1109/SiPS.2015.7345035}.

\bibitem[Yedidia et~al.(2001)Yedidia, Freeman, and Weiss]{Yedidia2001}
Jonathan~S Yedidia, William~T Freeman, and Yair Weiss.
\newblock {Generalized Belief Propagation}.
\newblock \emph{Advances in neural information processing systems}, 2001.

\bibitem[Yedidia et~al.(2003)Yedidia, Freeman, and Weiss]{Yedidia2003}
Jonathan~S. Yedidia, William~T. Freeman, and Yair Weiss.
\newblock \emph{Understanding Belief Propagation and Its Generalizations}, page 239–269.
\newblock Morgan Kaufmann Publishers Inc., San Francisco, CA, USA, 2003.
\newblock ISBN 1558608117.

\bibitem[Yedidia et~al.(2005)Yedidia, Freeman, and Weiss]{Yedidia2005}
Jonathan~S. Yedidia, William~T. Freeman, and Yair Weiss.
\newblock {Constructing Free Energy Approximations and Generalized Belief Propagation Algorithms}.
\newblock \emph{IEEE Transactions on Information Theory}, 51\penalty0 (7):\penalty0 2282--2312, 2005.

\bibitem[Yoon et~al.(2018)Yoon, Liao, Xiong, Zhang, Fetaya, Urtasun, Zemel, and Pitkow]{Yoon2018}
KiJung Yoon, Renjie Liao, Yuwen Xiong, Lisa Zhang, Ethan Fetaya, Raquel Urtasun, Richard Zemel, and Xaq Pitkow.
\newblock Inference in probabilistic graphical models by graph neural networks.
\newblock In \emph{ICLR Workshop}, 2018.

\end{thebibliography}
\bibliographystyle{plainnat} 
\newpage
\appendix
\onecolumn
\renewcommand\thefigure{S\arabic{figure}} 
\setcounter{figure}{0}
\addcontentsline{toc}{section}{Appendix} 
\part{Appendices}

\parttoc 

\section{BP and Circular BP: additional figures}\label{sec:bp-additional-figures}

We illustrate in Figure \ref{fig:BP-and-CBP} the Circular Belief Propagation algorithm applied on an Ising model, in the special case where $(\bm{\kappa}, \bm{\beta}, \bm{\gamma}) = (\bm{1}, \bm{1}, \bm{1})$, that is, with parameter $\bm{\alpha}$ only. 
In this case, the Circular BP algorithms writes (special cases of Equations \eqref{eq:eCBP-message-log}, 
\eqref{eq:eCBP-belief-log}, and \eqref{eq:function-fij-Ising-model}):
\begin{empheq}[left=\empheqlbrace]{align}
  &M_{i \to j}^{\text{new}} = f(B_i - \alpha_{ij} M_{j \to i},\, J_{ij})\\ 
  &B_i = \sum\limits_{j \in \mathcal{N}(i)} M_{j \to i} + M_{\text{ext} \to i}
\end{empheq}
where
\begin{equation}
    f(x,\, J) = \arctanh\big(\tanh(J) \tanh(x)\big)
\end{equation}

The Belief Propagation algorithm consists of setting the last parameter to the value 1: $\bm{\alpha} = \bm{1}$.


\begin{figure}[h]
  \centering
  \includegraphics[width=0.8\linewidth]{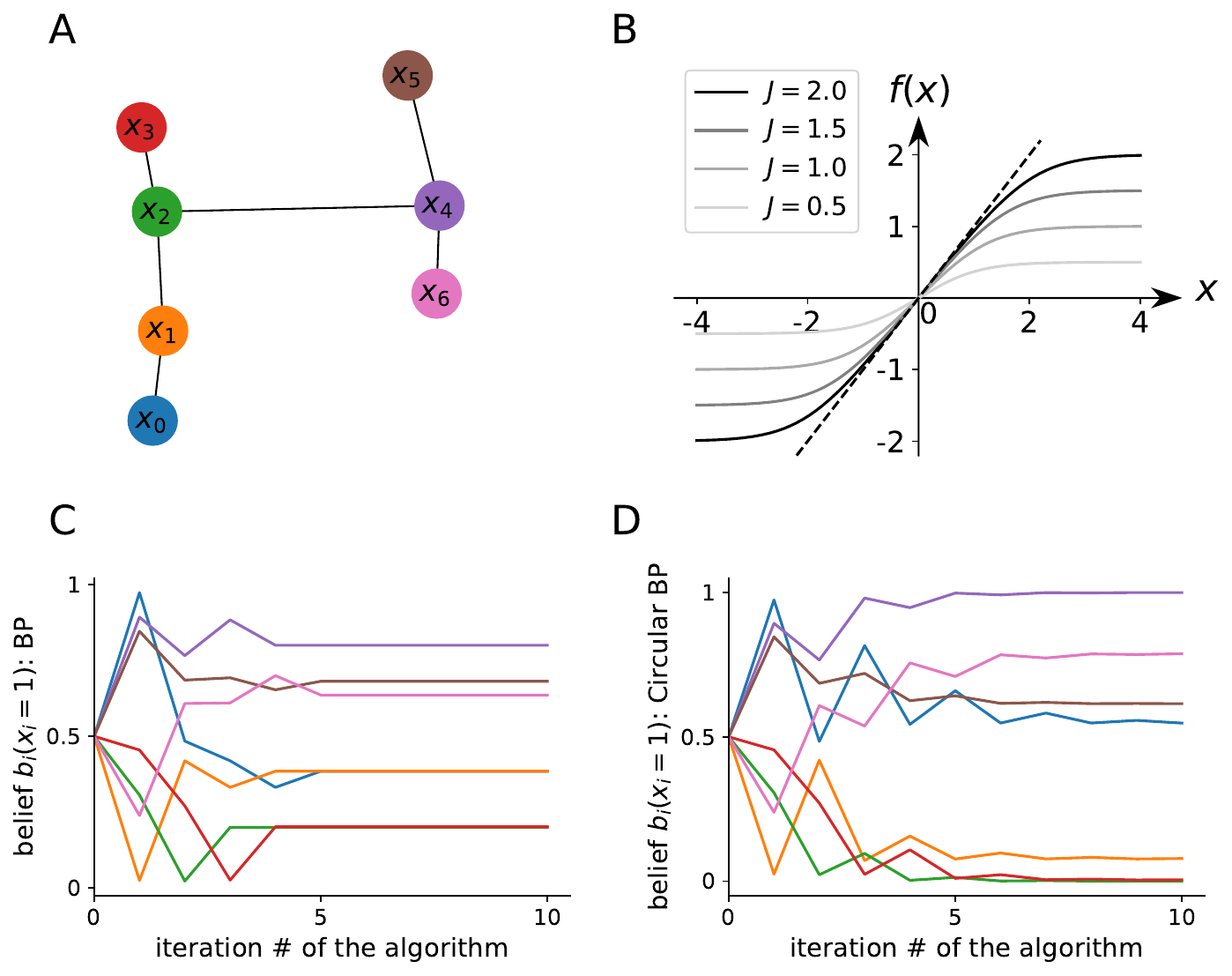} 
  \caption{\textbf{Running the Belief Propagation (BP) algorithm and its variant, the Circular Belief Propagation (CBP) algorithm}.
  \textbf{(A)} Example of acyclic graph taken for the simulation. 
  In the example, the probability distribution corresponds to an Ising model: pairwise potentials $\psi_{ij}(x_i, x_j) \propto \exp(J_{ij} x_i x_j)$ and unitary potentials $\psi_i(x_i) \propto \exp(M_{\text{ext} \to i} x_i)$ where $x_i \in \{-1;+1\}$ (binary case). $J_{ij}$ is generated randomly ($\sim \mathcal{N}(0,3)$), as well as $M_{\text{ext} \to i}$ ($\sim \mathcal{N}(0,2)$).
  \textbf{(B)} The update function $f$ for both BP and CBP 
  is a parametric sigmoidal function close to the hyperbolic tangent. The parameter $J_{ij}$ represents the level of trust between variables $x_i$ and $x_j$.
  \textbf{(C)} Belief Propagation is a message-passing algorithm which consists of running the update equation \eqref{eq:BP-message} 
  until convergence of the messages. The approximate marginals, or beliefs, are defined by Equation \eqref{eq:BP-belief}. 
  Here the beliefs found by BP are exact as the graph has no cycles.
  \textbf{(D)} The Circular Belief Propagation algorithm is a parametric generalization of the Belief Propagation algorithm with parameter $\alpha_{ij}$ assigned to each edge $(i,j)$ of the graph. It is identical to BP for $\bm{\alpha} = \bm{1}$. In the simulation, $\bm{\alpha}$ is taken uniformly over the edges, equal to $0.5$.
  }
  \label{fig:BP-and-CBP}
\end{figure}

\section{Supervised learning of Circular BP - Additional analyses}\label{subsec:additional-analyses}

In this section, we show additional analyses related to supervised learning of CBP parameters.

The supervised learning method for learning parameters of Circular BP $(\bm{\alpha}, \bm{\kappa}, \bm{\beta}, \bm{\gamma})$ is described in the main text. Additionally, many other models than CBP are fitted, using the same learning procedure. This includes special cases of CBP (with one or more of the parameters being fixed) but different classes of algorithms as well, like Fractional BP, the classical tanh model, etc. Each model is initialized when possible with parameters which guarantee convergence of the algorithm (otherwise the learning procedure sometimes does not find a region of convergence, most of the time because there is none but not always); see Appendix \ref{sec:conv-results-appendix} for results on the convergence of CBP depending on its parameters. Convergence can be ensured through the combination of low and positive $\bm{\alpha}$ and $\bm{\kappa}$ as for CBP, but also with low enough $\bm{\beta}$ and lastly, with low enough $\bm{\kappa}$ (even if there is no theoretical result about the spectral radius of matrix $A$ being lower than one if $\bm{\kappa}$ is sufficiently low, it is the case in practice with the generated graphs and thus guarantees convergence).

\subsection{Influence of the interaction strength}
In the numerical experiments presented in the main text, $J_{ij} \sim \mathcal{N}(0, 1)$.
Figure \ref{fig:sigma-Jij-eCBP-and-BP} shows the effect of increasing the strength of interactions weights $\bm{J} = \{J_{ij}\}$ from a given weighted graph. BP clearly shows a performance decrease with increased interaction strength, while CBP remains very good qualitatively.

\begin{figure}[h]
  \centering
  \includegraphics[width=\linewidth]{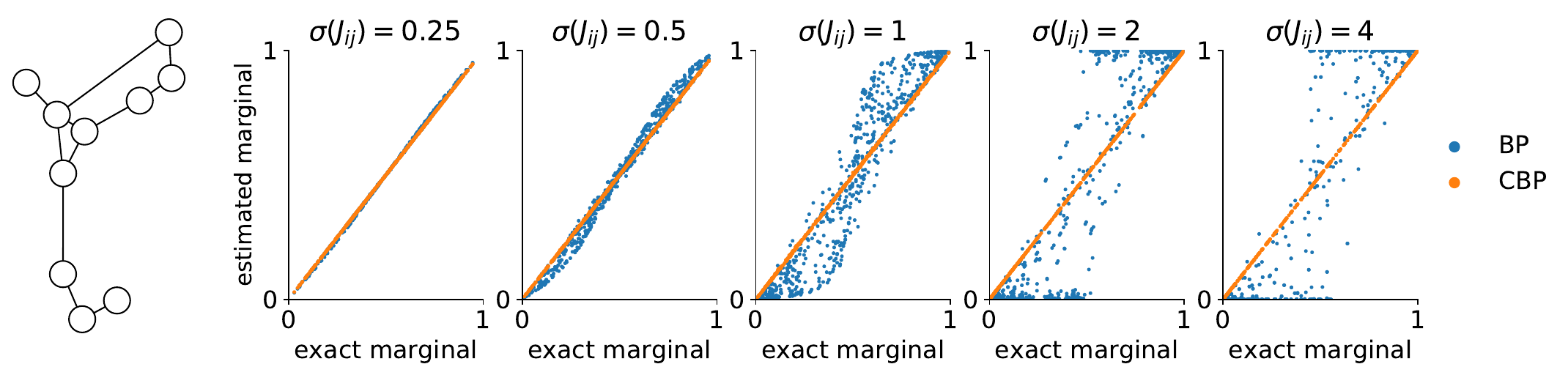}
  \vspace*{-5mm}
  \caption{\textbf{Approximate marginals versus true marginals on the test set, for Belief Propagation and CBP}. We consider a given graph randomly generated (shown on the left) with normally distributed weights $\bm{J} = \{J_{ij}\}$. 
  By increasing the strength of the graph weights, BP gets worse, but CBP still performs inference with very good quality.
  In all cases, CBP outperforms BP. BP shows here overconfidence over the graph, while CBP is on average not overconfident nor underconfident.}
  \label{fig:sigma-Jij-eCBP-and-BP}
\end{figure}

\subsection{Generalization ability}

Here we show that the Circular BP algorithm, using the learning procedure described in the main text, is able to generalize to new data. This shows the goodness of the learning procedure.

\paragraph{Within-set and out-of-set generalization}

A first necessary check for the supervised learning procedure is to make sure that the proposed model is able to learn properly the training data and to generalize to new inputs. 

The training indeed learns to represent the training data and allows the model to generalize well to unseen inputs with identical statistics as in the training set; see Figure \ref{fig:out-of-set-generalization}A. In other words, the Circular BP algorithm is able to generalize to unseen situations, where a ``situation" is a vector $\bm{M_{\text{ext}}}$ of size $n_{\text{nodes}}$. \emph{Unseen} refers to data not included in the training examples. Note that only the external inputs $\bm{M_{\text{ext}}}$ change in the test set: the graph weights $\bm{J}$ are fixed. In other words, parameters $\bm{\alpha}, \bm{\kappa}, \bm{\beta}$, and $\bm{\gamma}$ are learnt given the weights $\bm{J}$, although for any external inputs $\bm{M_{\text{ext}}}$. 

The model also generalizes well to out-of-set inputs, that is, to external inputs $\bm{M_{\text{ext}}}$ with different statistics from the ones used for training; see Figure \ref{fig:out-of-set-generalization}B.



\begin{figure}[h]
  \centering
  \includegraphics[width=\linewidth]{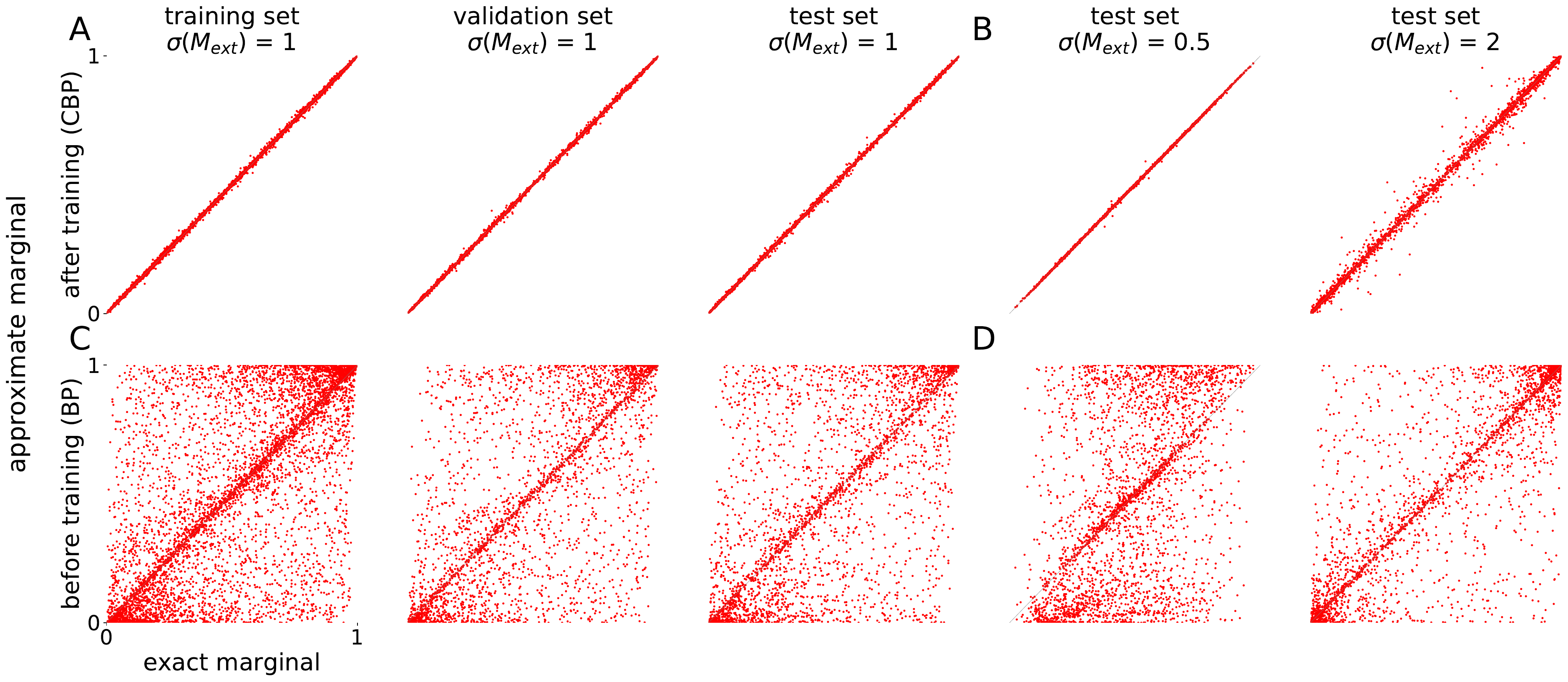}
  \vspace*{-3mm}
  \caption{\textbf{Circular BP generalizes to new data after training}. The topology considered here is the Erdos-Renyi model with $p=0.6$ and 9 nodes, with 30 random weighted graphs generated, each with 100 test examples. \textbf{(A)} The model learns and generalizes well to the test set (within-set generalization). \textbf{(B)} Generalization to examples with different statistics (out-of-set generalization). While $\sigma(\bm{M_{\text{ext}}}) = 1$ on the training set, the model still performs well for lower and higher standard deviations of the input. 
  The performance goes down for increased $\sigma(\bm{M_{\text{ext}}})$ as expected (the system becomes highly non-linear, and the correction brought by CBP is linear) but only slightly. Inferences remain relatively good on such examples for instance compared to BP. \textbf{(C, D)} Same as above but for BP, which shows frustration in (at least) some of the randomly weighted graphs, that is, absence of convergence of the algorithm, in which case the beliefs have very little to do with the correct marginals.}
  \label{fig:out-of-set-generalization}
\end{figure}

\paragraph{Ablation study: Circular BP outperforms its special cases} 

The Circular BP algorithm is expected to outperform all its special cases, including Circular BP from \citet{Jardri2013} (i.e., $(\bm{\kappa}, \bm{\beta}, \bm{\gamma}) = (\bm{1}, \bm{1}, \bm{1})$) and BP (i.e., $(\bm{\alpha}, \bm{\kappa}, \bm{\beta}, \bm{\gamma}) = (\bm{1}, \bm{1}, \bm{1}, \bm{1})$).
This is indeed the case, as shown in Figure \ref{fig:eCBP-vs-special-cases} where models are ordered according to the following score (average over graphs of the log of the MSE averaged over examples) written ``$-\log_{10}(\text{avg MSE})$" in figures:
\begin{equation}\label{eq:def-score-measure}
    \text{score} = - \frac{1}{N_{\text{graphs}}} \sum\limits_{\text{graphs}} \log_{10}\Bigg(\frac{1}{n_{\text{examples}}} \sum_{\text{examples}} \text{MSE}_{\text{graph, example}} \Bigg) 
\end{equation}
where the MSE loss is defined in the main text. 
The performance of all models decreases with the complexity of graphs. Several models like BP perform very poorly for dense graphs because the system becomes frustrated (see Figure \ref{fig:CBP-learning-results}
). It is the true for Circular BP as well, which shows that parameter $\bm{\alpha}$ alone is not enough to make the algorithm converge. On the contrary, other models (all models outperforming or equal to ``CBP nodal") 
show no sign of frustration thanks to additional parameters, as expected by Theorem \ref{theorem:convergence-is-possible}. This allows all these models, including CBP, to keep a rather good level of performance in the approximate inference task even for complete graphs.

\begin{figure}[h]
  \centering
  \includegraphics[width=\linewidth]{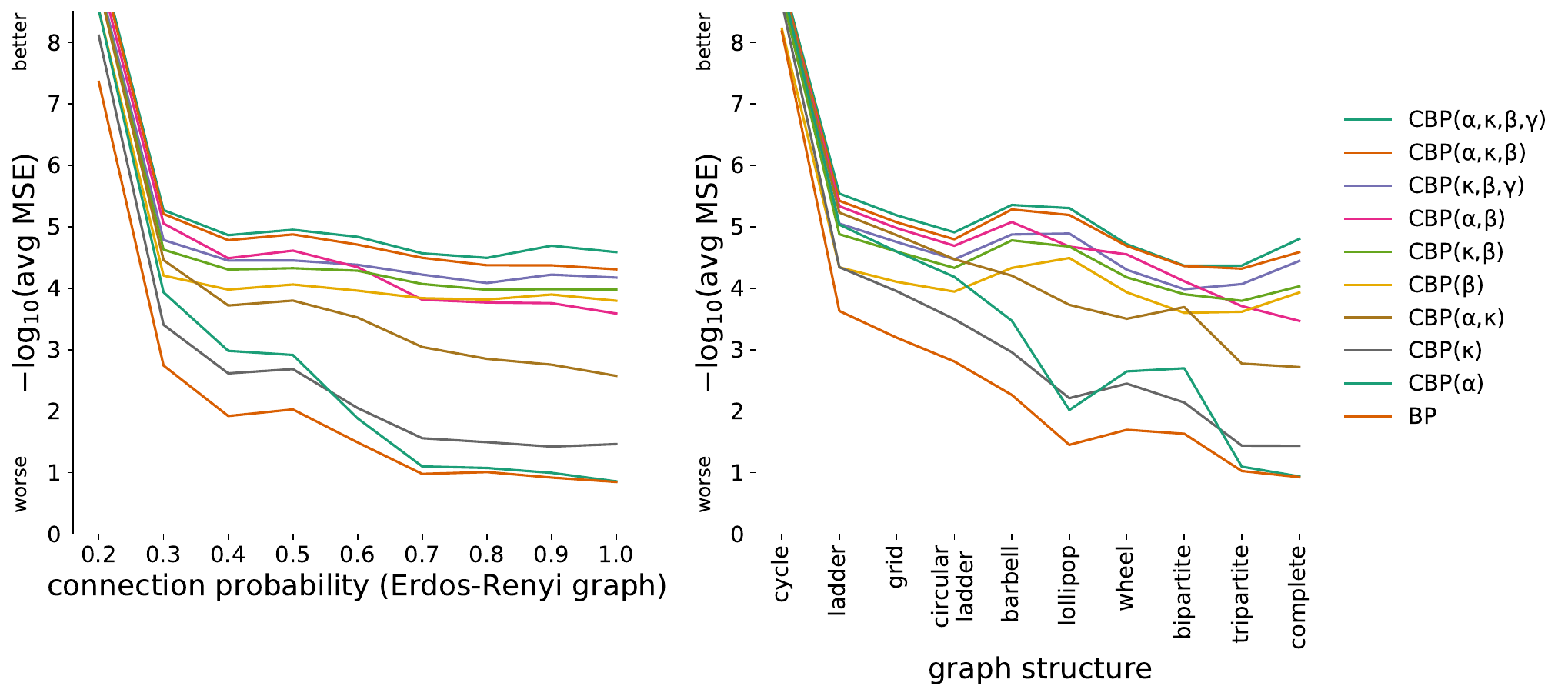}
  \vspace*{-5mm}
  \caption{\textbf{From BP to Circular BP: additional parameters help generalize better}. Circular BP outperforms its special cases: indeed, adding parameters to the algorithms from BP (no parameter) to Circular BP (parameters $\bm{\alpha}, \bm{\kappa}, \bm{\beta}, \bm{\gamma}$) increasingly helps generalize better to the test set. More generally, all models which are special cases of others perform worse comparatively on the test set. This indicates an absence of overfitting: parameters of CBP can be learnt despite their consequent number w.r.t. the amount of training data. 
  Models are ordered w.r.t. their performance on Erdos-Renyi graphs with $p=0.6$. Each point represents the log-MSE score on the test set, averaged over 30 weighted graphs (where the MSE is averaged over examples). Weighted graphs are randomly generated from a graph topology, with normally generated weights $J_{ij} \sim \mathcal{N}(0,1)$. Each weighted graph has 100 test examples.}
  \label{fig:eCBP-vs-special-cases}
\end{figure}



More generally, and quite logically, models which were special cases of others performed comparatively worse on the test set. 
We used the libDAI \citep{libDAI2010} implementation of Generalized BP (``GBP\_MIN" algorithm, i.e., using minimal clusters: one outer region for each maximal factor), Tree-Reweighted BP (with 10000 sampled trees) and Variational message-passing, to compute the associated approximate marginals. 


Among all tested algorithms, it is noteworthy that neither Circular BP from \citet{Jardri2013} nor Fractional BP, for which the only only parameter fitted is $\bm{\alpha}$, provides good results for moderately dense or highly dense graphs. By looking at the marginals produced by these algorithms for Erdos-Renyi graphs (see Figure \ref{fig:CBP-learning-results}
, the reason why they do not perform well is the frustration of the system caused by cycles (strong oscillations between the two extremes $b_i(x_i) \approx 0$ and $b_i(x_i) \approx 1$ without convergence of the algorithm). As observed by \citet{Murphy1999}, convergence of BP implies a good approximation of the correct marginals by the beliefs, and convergence can be forced by using a damped algorithm. Similarly, here, when Circular and Fractional BP converge (for graph with low density), they produce beliefs close to the correct marginals. However, for denser graphs, the algorithms do not converge and in this case beliefs have very little to do with the correct marginals, which Murphy pointed as well. As in \citet{Murphy1999}, introducing damping to CBP and FBP would help the algorithms to converge. However, the damped algorithm would become slower to converge in cases where the undamped algorithm converged. This does not seem like an optimal solution to the frustration problem, as the inference performed by the algorithm should be as fast as possible (including when the inputs vary with time). The question whether the algorithms can converge only by choosing the right $\bm{\alpha}$, without damping, has not been answered yet. It is probable that the $\bm{\alpha}$ parameter is not enough to prevent the frustration behavior from occurring. Alternately, it is possible that the learning procedure does not find the optimal $\bm{\alpha}$. 
This second possibility as yet not been ruled out.

\subsection{Learning CBP on structured graphs}

While the main text presents results on Erdos-Renyi graphs, Figures \ref{fig:marginals-CBP-models-structures} and \ref{fig:eCBP-vs-special-cases-structures} shows that the conclusions remain the same for structured graphs.

\begin{figure}[h]
  \centering
    \includegraphics[width=\linewidth]{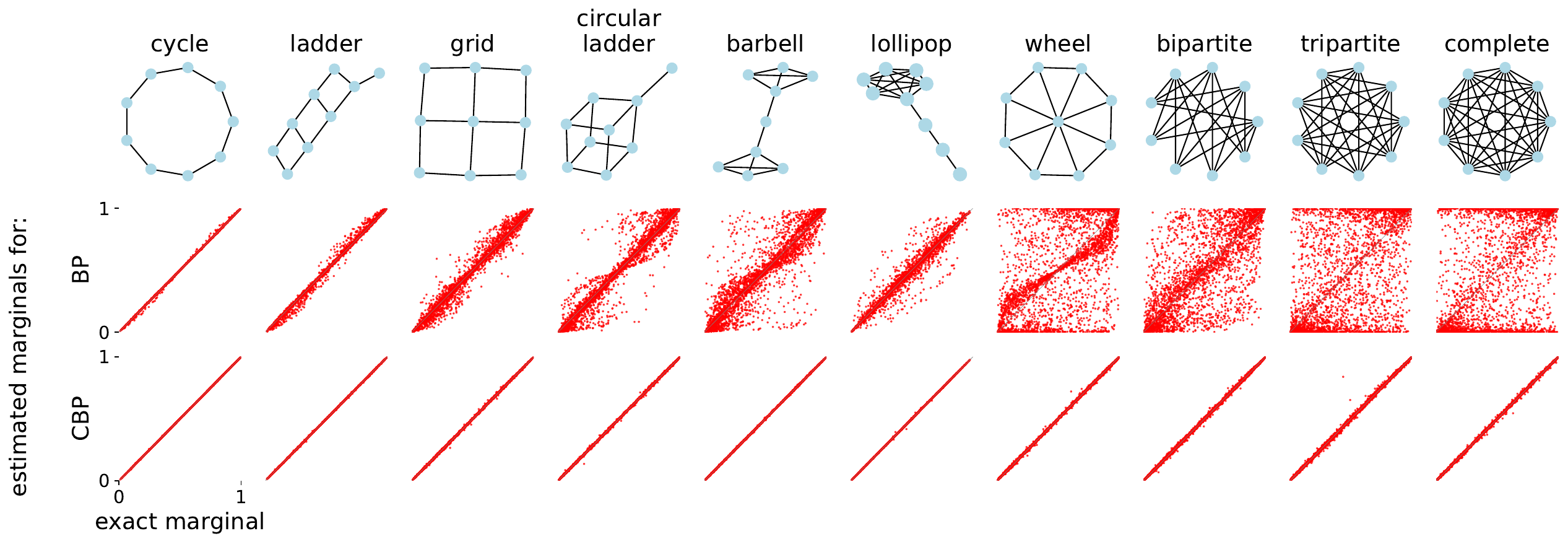}
  \caption{\textbf{Estimated marginals of Circular BP and BP for graphs of various topologies}, on the test set. For each graph topology, 30 randomly weighted graphs are considered with $J_{ij} \sim \mathcal{N}(0, 1)$, each with 200 external input examples in the training set and 100 in the test set. Results are qualitatively the same as in Erdos-Renyi graphs. 
  Graph structures are the same as in \citet{Yoon2018}. 
  }
  \label{fig:marginals-CBP-models-structures}
\end{figure}

\begin{figure}[h]
  \centering
  \includegraphics[width=0.6\linewidth]{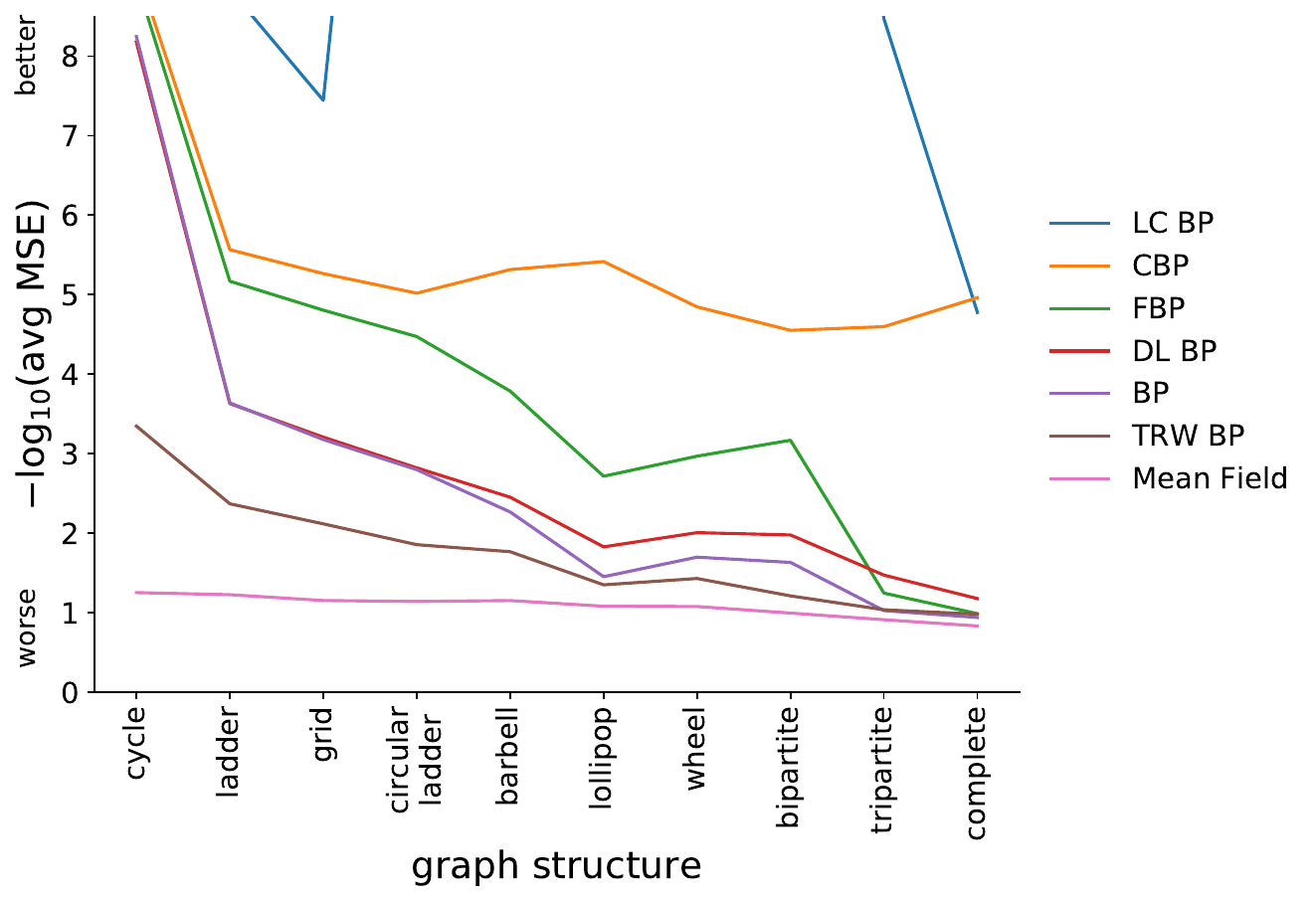}
  \caption{\textbf{Quantitative comparison between various algorithms on different graph structures}. The results are qualitatively the same as for randomly connected graphs.}
  \label{fig:eCBP-vs-special-cases-structures}
\end{figure}

\subsection{Learning CBP with positive couplings}

We take here homogeneous positive couplings $J_ij$ and external fields $M_{\text{ext} \to i}$, a model designed to show off the weakness of BP; see also \citep{Yedidia2001, Roosta2008}. For simplicity, we possibly in d-regular graphs like grid with toroidal boundary conditions). See Figure \ref{fig:marginals-BCP-models-positive-couplings} for the results.

\begin{figure}[h]
  \centering
  \includegraphics[width=0.4\linewidth]{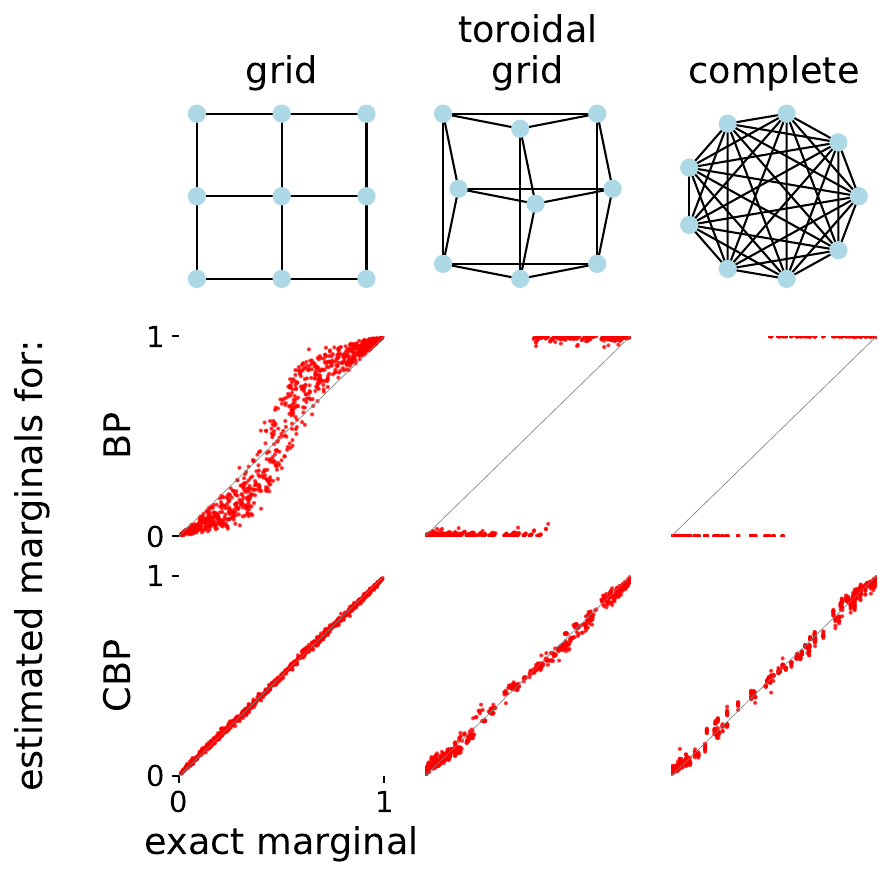}
  \caption{\textbf{Learning CBP with positive couplings}. Here couplings are taken homogeneous with $J_{ij} = \arctanh{(0.6)} \approx 0.7$. 
  }
  \label{fig:marginals-BCP-models-positive-couplings}
\end{figure}

\subsection{Learning CBP in larger graphs}

Figure \ref{fig:marginals-BCP-models-bigger-graphs} shows that CBP (and its supervised optimization procedure on CBP parameters) scales up on graphs of moderate size, on which it is still possible to compute the true marginals. We took the example of $10*10$ grid with normally distributed weights $J_{ij} \sim \mathcal{N}(0, 1)$ (spin-glass). This contrasts with the main text, which only considers graphs with 9 nodes. Note also that the Hopfield network application from the main text involves unsupervised learning in a all-to-all connected graph with 784 nodes.

\begin{figure}[h]
  \centering
  \includegraphics[width=0.22\linewidth]{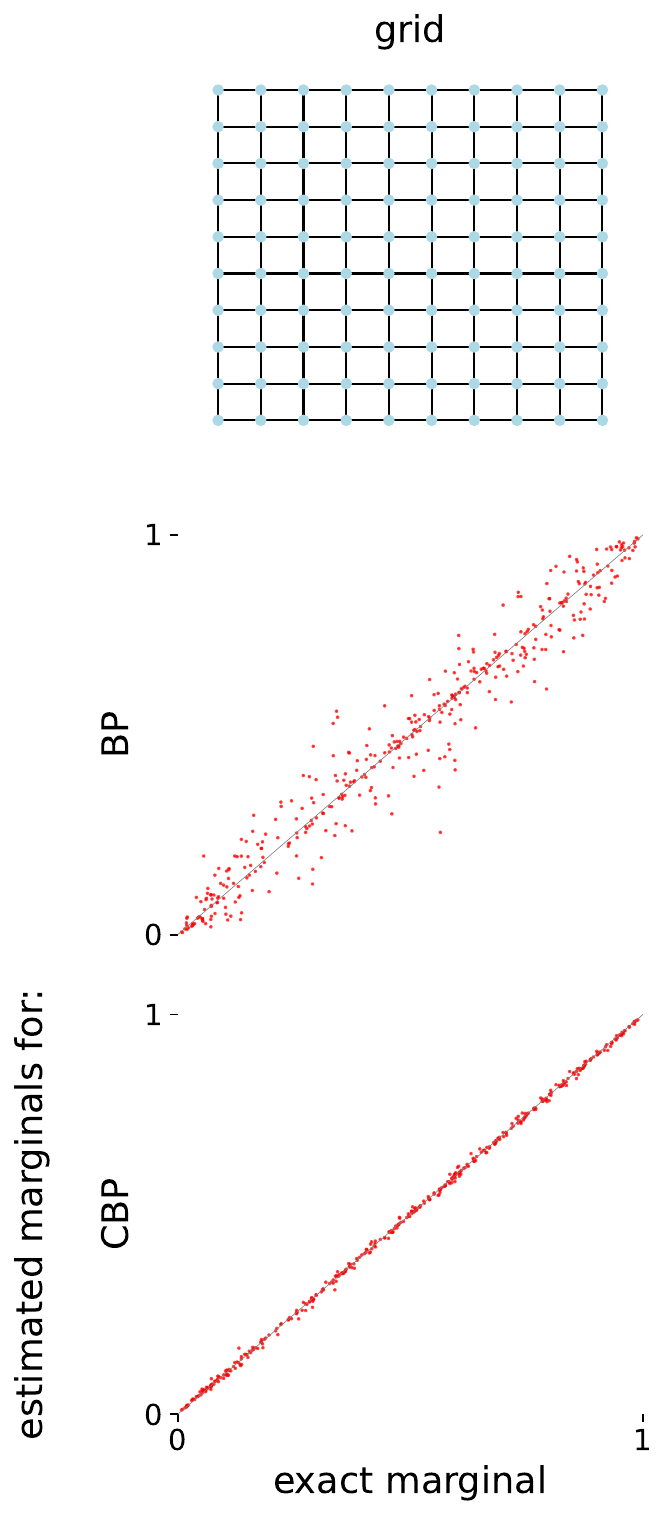}
  \caption{\textbf{Learning CBP in larger graphs}. Example of a grid with 100 nodes, on which supervised learning was applied. 4 different weighted graphs were generated.}
  \label{fig:marginals-BCP-models-bigger-graphs}
\end{figure}

\subsection{Breaking symmetry}

In Circular BP, parameters $\bm{\alpha} = \{\alpha_{ij}\}$ and $\bm{\beta} = \{\beta_{ij}\}$ are symmetric matrices as elements of the matrix depend on the \emph{unoriented} edge (for instance, for $\bm{\alpha}$, $\alpha_{i \to j} = \alpha_{j \to i}$). We tested whether removing this symmetry constraint would improve the inference algorithm. 
In this case, the message update equation of Circular BP becomes:
\begin{equation}\label{eq:eCBP-message-directed-alpha}
    m_{i \to j}^{\text{new}}(x_j) \propto \sum_{x_i} \psi_{ij}(x_i,x_j)^{\beta_{i \to j}} \Big(\psi_i(x_i)^{\gamma_i} \prod\limits_{k \in \mathcal{N}(i) \setminus j} m_{k \to i}(x_i) \: m_{j \to i}(x_i)^{1 - \alpha_{i \to j} / \kappa_i}\Big)^{\kappa_i}
\end{equation}
instead of Equation \eqref{eq:eCBP-message}. Crucially, $\alpha_{i \to j}$ and $\beta_{i \to j}$ are assigned to a \emph{directed} edge $(i, j)$. 
A consequence is that quantities $\alpha_{i \to j}$, $\alpha_{j \to i}$, $\beta_{i \to j}$ and $\beta_{j \to i}$ do not represent inverse overcounting numbers anymore, as these parameters weigh quantities which depend only on the pair $(i,j)$ but not the direction.


As Figure \ref{fig:errors-BP-CBP-FBP-BPnotsymmetric-CBPnotsymmetric-FBPnotsymmetric} shows, removing the symmetry constraint on $\bm{\alpha}$ and $\bm{\beta}$ does not significantly allows for better approximate inference, even though the newly created algorithm (``CBP + non-symmetric" in the figure) is a generalization of the previous one (CBP). The number of parameters to fit increases ($n_{\text{edges}}$ additional parameters for $\bm{\alpha}$ and $n_{\text{edges}}$ additional parameters for $\bm{\beta}$. The figure suggests a slight overfitting, together with the fact that the performance of ``CBP + non-symmetric" is systematically (slightly) better than the one of CBP on the training set (not shown). An explanation among others could be the low amount of training data.

\begin{figure}[h]
  \centering
  \includegraphics[width=0.7\linewidth]{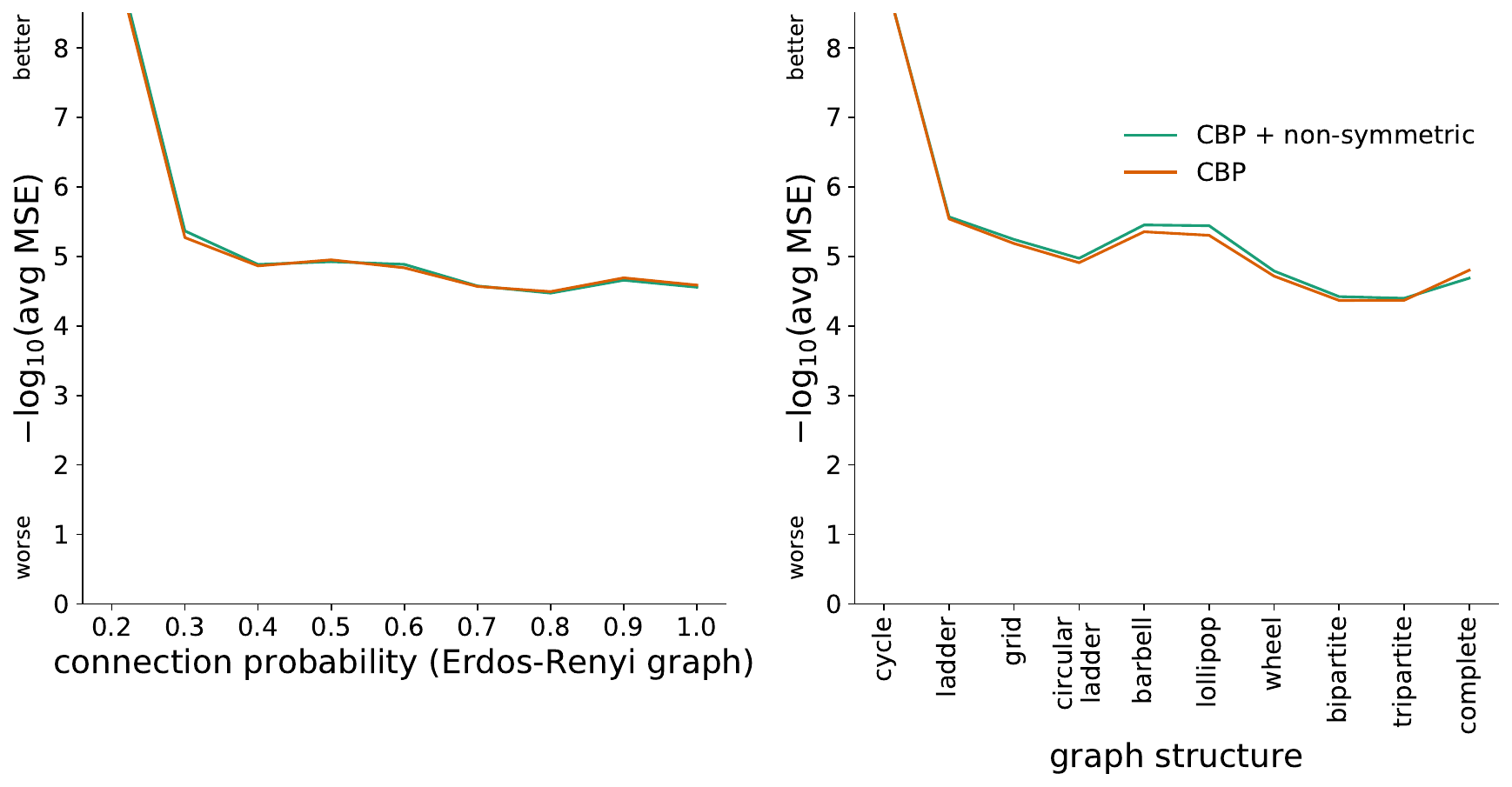}
  \vspace*{-3mm}
  \caption{\textbf{Breaking the symmetry constraint in the parameter matrices $\bm{\alpha}$ and $\bm{\beta}$ does not provide significant improvement to the model}, although it brings $2 n_{\text{edges}}$ new parameters to fit (it even performs worse sometimes, because of slight overfitting). However, it frees some constraints on the parameters.
  For clarity, several points are not shown, as all models perform extremely well for trees and graphs close to trees (here $p=0.2$ and ``cycle").}
  \label{fig:errors-BP-CBP-FBP-BPnotsymmetric-CBPnotsymmetric-FBPnotsymmetric}
\end{figure}

\subsection{Comparison between all algorithms}\label{subsec:comparison-algos}

To complete the figures, which only illustrate specific points from the text by showing algorithms of the same class, we provide in Table \ref{table:comparison-algos} a ranked list containing algorithms of all types. The rank is defined by the performance of each algorithm for Erdos-Renyi graphs with connection probability $p=0.6$. For each connection probability, 30 randomly weighted graphs of 9 nodes were generated, each with 100 examples (vectors $M_{\text{ext}}$) in the test set.
More precisely, the performance is measured identically to previously, i.e., by computing the average over graphs of $-\log_{10}$(avg MSE) between the true marginals and the estimated marginals where ``avg" of ``avg MSE" is taken over examples.

Overall, among fitted models, the best algorithm was Reweighted BP, followed by Circular BP. Both significantly outperformed BP. Loop-Corrected BP is the best algorithm overall, although much more complex than Reweighted BP or Circular BP.


\begin{table}[h!] 
\caption{\textbf{Comparison between various classes of algorithms}. Graphs considered here are Erdos-Renyi graphs ($p=0.6$). The score measure is defined in Equation \eqref{eq:def-score-measure}.}
\label{table:comparison-algos}
\vskip 0.15in
\begin{center}
\begin{small}
 \begin{tabular}{c c c c}  
 Name & Score & \# parameters & Reference \\ 
 \midrule 
 Loop-corrected BP & 6.38 & - & \citet{Mooij2007b} \\ 
 Circular BP & 4.83 & $2n_{\text{edges}} + 2n_{\text{nodes}}$ & this paper \\ 
 Classical tanh model & 2.21 & $2 n_{\text{edges}} + n_{\text{nodes}}$ & \citet{Wilson1972} \\ 
 Fractional BP & 2.07 & $n_{\text{edges}}$ & \citet{Wiegerinck2002} \\ 
 Double-loop BP & 1.92 & - & \citet{Heskes2003} \\ 
 Circular BP with $(\bm{\kappa}, \bm{\beta}, \bm{\gamma}) = (\bm{1}, \bm{1}, \bm{1})$ & 1.88 & $n_{\text{edges}}$ & \citet{Jardri2013} \\ 
  BP & 1.49 & - & \citet{Pearl1988} \\ 
 Tree-Reweighted BP & 1.22 & - & \citet{Wainwright2003}\\ 
 Mean Field / Variational MP & 0.97 & - & \citet{Winn2005} \\ 
\bottomrule
\end{tabular}
\end{small}
\end{center}
\vskip -0.1in
\end{table}

Figure \ref{fig:CBP-learning-results} 
from the main text provides visual comparison for Erdos-Renyi graphs (including the case $p = 0.6$ considered in the above table) for most models listed in Table \ref{table:comparison-algos}.






\section{Unsupervised learning of Circular BP - Additional details}\label{sec:unsupervised-learning-appendix}

In this section, we provide additional considerations related to the unsupervised learning rule proposed in the main text.
Section \ref{subsec:justif-unsupervised-rule} proposes a mathematical intuition behind the learning rule on $\alpha$ and $\kappa$ proposed in the main text.
Section \ref{subsec:learning-beta-gamma} proposes a way of learning $\bm{\beta}$ and $\bm{\gamma}$ in an unsupervised fashion (contrary to the main paper which investigates unsupervised learning of the other two parameters of CBP: $\bm{\alpha}$ and $\bm{\kappa}$).

\subsection{Justification for the unsupervised learning rule on $\alpha$ and $\kappa$}\label{subsec:justif-unsupervised-rule}

\paragraph{Learning rule on $\bm{\alpha}$}
The unsupervised learning rule for $\bm{\alpha}$ minimizes the total amount of information sent in the network $\sum\limits_{(i,j)} M_{i \to j}^2$, or equivalently, minimizes the quantity of so-called prediction errors $\sum\limits_{(i,j)} (B_i - B_i^j)^2$:
\begin{align}
    L &= \sum\limits_{(i,j)} M_{i \to j}^2 \\
    &\approx \sum_{(i,j)} \Big(\arctanh\Big[\tanh(J_{ij}) \tanh(B_i - \alpha_{ij} M_{j \to i})\Big]\Big)^2
\end{align}

We compute the derivative of the loss function $L$ w.r.t. $\bm{\alpha}$ by considering that $\frac{\partial (B_i - \alpha_{ij} M_{j \to i})}{\partial \alpha_{kl}} \approx - M_{j \to i} \delta_{jk} \delta_{il}$, i.e., by neglecting higher-order dependencies. Consequently,
\begin{equation}
    \frac{\partial L}{\partial \alpha_{ij}} \approx \frac{\partial M_{i \to j}^2}{\partial \alpha_{ij}} + \frac{\partial M_{j \to i}^2}{\partial \alpha_{ij}}
\end{equation}
where
\begin{align}
    \frac{\partial M_{i \to j}^2}{\partial \alpha_{ij}} \approx - 2& M_{i \to j} (\arctanh)'\Big(\tanh(J_{ij}) \tanh(B_i - \alpha_{ij} M_{j \to i})\Big) \times \notag\\ &\tanh(J_{ij}) \tanh'(B_i - \alpha_{ij} M_{j \to i}) M_{j \to i}
\end{align}
which has the same sign as $- 2 M_{i \to j} \tanh(J_{ij}) M_{j \to i}$ (symmetrical in $(i,j)$).
We thus propose the following learning rule:
\begin{equation}\label{eq:learning-rule-alpha-1}
    \Delta \alpha_{ij} \propto -\frac{\partial L}{\partial \alpha_{ij}} \propto M_{i \to j} \tanh(J_{ij}) M_{j \to i}
\end{equation}
This learning rule tends to make the correlations between opposite messages disappear, i.e., 
$<M_{i \to j} M_{j \to i}>_{\text{examples}} \approx 0$. In the simulations, we used an alternative learning rule (see Equation \eqref{eq:unsupervised-learning-rule-beta}):
\begin{equation}\label{eq:learning-rule-alpha-2}
    \Delta \alpha_{ij} \propto (B_i - \alpha_{ij} M_{j \to i}) M_{j \to i} + (B_j - \alpha_{ij} M_{i \to j}) M_{i \to j}
\end{equation}
(the right-hand terms of Equations \eqref{eq:learning-rule-alpha-1} and \eqref{eq:learning-rule-alpha-2} have identical signs). 


\paragraph{Learning rule on $\bm{\kappa}$}
The learning rule on $\bm{\kappa}$ aims at avoiding reverberation of external information by cycles. $\kappa_i$ acts as a scaling factor on the log-likelihood ratio (see Equation \eqref{eq:eCBP-belief-log}). The proposed learning rule aims at decorrelating the information truly received by node $i$ from the outside world ($M_{\text{ext} \to i}$), and all the information received by node $i$ except this true external information ($B_i - M_{\text{ext} \to i}$): 
\begin{equation}
    \Delta \kappa_i \propto - M_{\text{ext} \to i} (B_i - M_{\text{ext} \to i}) 
\end{equation} 
Note that a learning rule consisting in from minimizing, as for $\bm{\alpha}$
, the quantity of messages sent throughout the graph, would not work here as 
$\bm{\kappa} \to \bm{0}$ 
would be enough to have all messages go to zero. Finally, the learning rule 
$\Delta \kappa_i \propto - M_{\text{ext} \to i} (B_i - \kappa_i M_{\text{ext} \to i})$ 
would not work either as the sign of the correlation between $M_{\text{ext} \to i}$ and 
$B_i - \kappa_i M_{\text{ext} \to i} \propto M_{j \to i}$ 
does not depend on $\kappa_i$ but instead on the graph topology and weights $\bm{J}$.

\subsection{Unsupervised learning of $\beta$ and $\gamma$}\label{subsec:learning-beta-gamma} 

In section \ref{sec:unsupervised-learning-eCBP} of the main paper, we considered the case where $(\bm{\beta}, \bm{\gamma}) = (\bm{1}, \bm{1})$ in Circular BP and proposed a way of learning parameters $\bm{\alpha}$ and $\bm{\kappa}$ in an unsupervised fashion. Here we ask the question of the learning of the additional parameters $\bm{\beta}$ and $\bm{\gamma}$. 

As a reminder, parameter $\bm{\beta}$ simply appears in the formulation of CBP as a multiplicative factor to the true weights $\bm{J}$ (defined by $\psi_{ij}(x_i,x_j) = \exp(J_{ij} x_i x_j)$). Similarly, parameter $\bm{\gamma}$ simply appears as a multiplicative factor to the true external inputs $\bm{M_{\text{ext}}}$ (defined by $\psi_{i}(x_i) = \exp(M_{\text{ext} \to i} x_i)$). Therefore, fitting $\bm{\beta}$ and $\bm{\gamma}$ is equivalent to fitting $\bm{J}$ and $\bm{M_{\text{ext}}}$, respectively the graph weights and inputs (where $\bm{J}$ and $\bm{M_{\text{ext}}}$ are not related anymore to the factors of the probability distribution).

Parameter $\bm{\beta}$ can be learnt similarly to \citet{Mongillo2008, Jardri2013} which proposes a expectation maximization (EM) algorithm \citep{Dempster1977} and alternatively a Hebbian-like learning rule using stochastic gradient descent, to learn the factors $\{\psi_{ij}\}$.
This approach is based on the fact that messages from BP and its variants allow not only to compute unitary marginal probabilities $\{p_i(x_i)\}$ but also pairwise marginal probabilities $\{p_{ij}(x_i, x_j)\}$. For instance, for CBP, one can assume that:
\begin{align}\label{eq-unsupervised:eFBP-pairwise-belief}
    b_{ij}(x_i,x_j) \propto \psi_{ij}(x_i,x_j)^{\alpha_{ij} \beta_{ij}} \psi_i(x_i)^{\kappa_i \gamma_i} \psi_j(x_j)^{\kappa_j \gamma_j}
\prod\limits_{k \in \mathcal{N}(i) \setminus j} m_{k \to i}(x_i)^{\kappa_i} \notag \\ 
\times \prod\limits_{k \in \mathcal{N}(j) \setminus i} m_{k \to j}(x_j)^{\kappa_j} \: m_{j \to i}(x_i)^{\kappa_i - \alpha_{ij}} m_{i \to j}(x_j)^{\kappa_i - \alpha_{ji}}
\end{align}
\begin{equation}\label{eq:eFBP-pairwise-belief-2}
    \implies b_{ij}(x_i, x_j) \propto \psi_{ij}(x_i, x_j)^{\alpha_{ij} \beta_{ij}} \times \frac{b_i(x_i)}{m_{j \to i}(x_i)^{\alpha_{ij}}} \times \frac{b_j(x_j)}{m_{i \to j}(x_j)^{\alpha_{ij}}}
\end{equation}
which has the same form as the expression of the message update equation for CBP: $m_{i \to j}(x_j)$ depends on $b_i(x_i) / m_{j \to i}(x_i)^{\alpha_{ij}}$ in Equation \eqref{eq:eCBP-message}. In particular, for an Ising model, Equation \eqref{eq:eFBP-pairwise-belief-2} gives:
\begin{equation}\label{eq:eFBP-pairwise-belief-log}
    \log\Big(\frac{b_{ij}(1, 1)}{b_{ij}(1, 0)}\Big) = 2 \alpha_{ij} \beta_{ij} J_{ij} + (B_j - \alpha_{ij} M_{i \to j}) 
\end{equation}
which shows the importance of the quantity $B_j - \alpha_{ij} M_{i \to j}$, together with the message update equation of CBP $M_{j \to i} = f(B_j - \alpha_{ij} M_{i \to j},\, \beta_{ij} J_{ij})$.
Note, however, that Equations \eqref{eq-unsupervised:eFBP-pairwise-belief},
\eqref{eq:eFBP-pairwise-belief-2} and
\eqref{eq:eFBP-pairwise-belief-log} are only approximations for CBP. Indeed, Equation \eqref{eq-unsupervised:eFBP-pairwise-belief} is the expression of the pairwise beliefs for Reweighted BP and not Circular BP (see Equation \eqref{eq-dem:eFBP-pairwise-belief}). Such an equation does not exist for Circular BP, as CBP does not come from any approximation of the Gibbs free energy (which is the starting point to yield such an expression). However, as seen previously, CBP can be defined as an approximation to Reweighted BP with a slightly simpler message update equation. By making the hypothesis that even though CBP has a different update equation for the messages, it eventually computes the (approximate) solution to Reweighted BP, then Equation \eqref{eq-unsupervised:eFBP-pairwise-belief} is (approximately) valid.

\citet{Jardri2013} proposes, in the particular situation where $(\bm{\kappa}, \bm{\beta}, \bm{\gamma}) = (\bm{1}, \bm{1}, \bm{1})$ (i.e., for the original Circular BP), an online update of the factor parameters (here, simply 
$J_{ij}$ or equivalently, $\beta_{ij}$) based on the pairwise beliefs $\{b_{ij}(x_i,x_j)\}$, with a Hebbian-like learning rule.
Learning the external message $M_{\text{ext} \to i}$ (or equivalently, its weight $\gamma_i$) could be done similarly by defining an abstract ``external unit" connected to $x_i$ through weight $\gamma_i$. 

Overall, it might be possible (this still needs to be demonstrated numerically) to learn all parameters of Circular BP in an unsupervised way: ($\bm{\alpha}, \bm{\kappa}, \bm{\beta}, \bm{\gamma}$). The learning on $\bm{\alpha}$ and $\bm{\kappa}$ is anti-Hebbian (or inhibitory Hebbian) and the learning on $\bm{\beta}$ and $\bm{\gamma}$ is Hebbian.

\section{Convergence properties of Circular BP - Full proofs}\label{sec:conv-results-appendix}

In this section, we analyse the convergence properties of CBP by providing the full proofs of the theorems presented in the main text.

As seen in Figures \ref{fig:CBP-learning-results} 
and \ref{fig:marginals-CBP-models-structures}, the convergence of the algorithm is crucial to carry out approximate inference: it can be observed a sharp transition between having beliefs very close to the correct marginals and very far from them. When one of the algorithms does not converge, beliefs have very little to do with the correct marginals (this observation was already made for BP in \citet{Murphy1999}). It is therefore important to control the convergence properties, and if possible, to ensure the convergence of our proposed algorithm.

Importantly, for all the variants of BP cited previously (Fractional BP, Circular BP from \citet{Jardri2013} with $(\bm{\beta}, \bm{\gamma}, \bm{\kappa}) = (\bm{1}, \bm{1}, \bm{1})$, Variational message-passing, etc.), there is no theoretical result about the existence of parameters that guarantee convergence. Absence of convergence of an algorithm means that it gets trapped in a limit cycle (\textit{frustration} phenomenon) or alternatively wanders around chaotically without being able to produce approximate marginals. The typical behavior is a strong oscillation of the beliefs from one extreme ($b_i(x_i=1) \approx 100\%$) to the other ($b_i(x_i=1) \approx 0\%$). 
This is an serious obstacle to the use of such algorithms to perform approximate inference. 

On the contrary, we prove here that in the case of Ising models, there exist parameters for which Circular BP is stable. More precisely, for any set of parameters $(\bm{\gamma}, \bm{\beta})$, one can choose parameters $(\bm{\alpha}, \bm{\kappa})$ such that the algorithm converges (see Theorem \ref{theorem:convergence-is-possible-appendix}).

Having a stable algorithm is important, for two reasons. First, it helps performing reasonably good approximate inference with the algorithm for \emph{any} probability distribution. Indeed, whatever the probability distribution is (i.e., the weights $\bm{J}$ and external inputs $\bm{M_{\text{ext}}}$ are), it is possible to select parameters for which the algorithm will produce approximate marginals. 
Second, being in a stable regime allows to perform computations and ensures that estimations of probabilities at some point in time do not depend on the past state of the system (meaning that the system should have only one fixed point for any probability distribution, and the system should converge to this fixed point). In other words, the system should adapt quickly to new inputs and consider its past as little as possible.

\subsection{Convergence results for Circular BP}
Here we state sufficient conditions for the convergence of Circular BP in an Ising model. 

We start by defining matrix $\bm{A}$ whose coefficients are:
\begin{equation}\label{eq:def-matrix-A-appendix}
    A_{i \to j, k \to l}  = \abs{\kappa_i} \tanh\abs{\beta_{ij} J_{ij}} \delta_{il} \mathbbm{1}_{\mathcal{N}(i)}(k) \abs{1 - \frac{\alpha_{ij}}{\kappa_i}}_{j = k}
\end{equation}
where $\mathbbm{1}_{\mathcal{N}(i)}(k) = 1$ if $k \in \mathcal{N}(i)$, otherwise $= 0$, $\delta_{il} = 1$ if $i = l$, otherwise $= 0$, and $\abs{1 - \frac{\alpha_{ij}}{\kappa_i}}_{j = k} = 1 - \frac{\alpha_{ij}}{\kappa_i}$ if $j = k$, otherwise $1$. 

\begin{theorem}\label{theorem:conv-norm-appendix}
    If for any induced operator norm $\lVert \cdot \rVert$ (sometimes called natural matrix norm), $\lVert A \rVert < 1$, then CBP converges to a unique fixed point and the rate of convergence is at least linear.
\end{theorem}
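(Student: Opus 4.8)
\textbf{Proof plan for Theorem~\ref{theorem:conv-norm-appendix}.}
The strategy is the standard contraction-mapping argument, adapted from \citet{Mooij2007}. First I would rewrite the CBP message update \eqref{eq:eCBP-message-log}--\eqref{eq:eCBP-belief-log} as a single fixed-point map $\bm{M}^{\text{new}} = F(\bm{M})$ on the space of message vectors $\bm{M} = (M_{i\to j})_{(i\to j)}$, where each coordinate is
\begin{equation*}
  F_{i\to j}(\bm{M}) = f\Big(\kappa_i\big(\textstyle\sum_{k\in\mathcal N(i)} M_{k\to i} + \gamma_i M_{\text{ext}\to i}\big) - \alpha_{ij} M_{j\to i},\ \beta_{ij} J_{ij}\Big).
\end{equation*}
The key analytic fact about $f(x,J)=\arctanh(\tanh J \tanh x)$ is that $\bigl|\partial_x f(x,J)\bigr| = |\tanh J|\,\frac{1-\tanh^2 x}{1-\tanh^2 J\tanh^2 x} \le |\tanh J|$, with equality at $x=0$; I would establish this bound (it is already used implicitly in \citet{Mooij2007}). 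Differentiating $F_{i\to j}$ with respect to $M_{k\to l}$ and using this bound, the Jacobian $J_F$ is dominated entrywise in absolute value precisely by the matrix $\bm A$ defined in \eqref{eq:def-matrix-A-appendix}: the factor $|\kappa_i|\tanh|\beta_{ij}J_{ij}|$ comes from the chain rule through $f$ and $B_i$, the $\delta_{il}$ enforces that only messages into $x_i$ matter, $\mathbbm 1_{\mathcal N(i)}(k)$ restricts to neighbours, and the term $|1-\alpha_{ij}/\kappa_i|$ for $j=k$ records that the contribution of $M_{j\to i}$ is $\kappa_i - \alpha_{ij}$ rather than $\kappa_i$ (the opposite message enters with reduced, possibly negative, weight).

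Next I would invoke the hypothesis $\lVert \bm A\rVert < 1$ for some induced operator norm. Since $\lvert (J_F)_{ab}\rvert \le A_{ab}$ entrywise and $\bm A$ has nonnegative entries, monotonicity of induced norms under entrywise domination by a nonnegative matrix gives $\lVert J_F(\bm M)\rVert \le \lVert \bm A\rVert < 1$ uniformly in $\bm M$. (For the $\ell_1$ and $\ell_\infty$ induced norms this is immediate from the explicit row/column-sum formulas; for a general induced norm one can either restrict attention to these cases, which suffice for the applications, or note that $\lVert \bm A\rVert$ is being used here as a uniform bound on $\lVert J_F\rVert$ in the same norm.) By the mean value inequality along line segments, $F$ is therefore a global contraction with Lipschitz constant $c := \lVert \bm A\rVert < 1$ on the (convex, closed) domain of bounded message vectors — note all $M_{i\to j}$ remain finite since $f$ has bounded range when $|\beta_{ij}J_{ij}|<\infty$. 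Banach's fixed-point theorem then yields a unique fixed point $\bm M^\star$ and convergence $\lVert \bm M^{(t)} - \bm M^\star\rVert \le c^t \lVert \bm M^{(0)} - \bm M^\star\rVert$, i.e.\ at least linear rate; the unique fixed point of the message map induces, via \eqref{eq:eCBP-belief-log}, a unique set of beliefs.

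The main obstacle is the bookkeeping in the Jacobian computation: one must carefully track that $M_{j\to i}$ appears twice in the formula for $M_{i\to j}$ — once inside $B_i = \kappa_i(\sum_k M_{k\to i} + \gamma_i M_{\text{ext}\to i})$ and once in the explicit correction term $-\alpha_{ij}M_{j\to i}$ — so that its net coefficient is $\kappa_i - \alpha_{ij}$, giving the $|1-\alpha_{ij}/\kappa_i|$ factor after pulling out $|\kappa_i|$; all other incoming messages $M_{k\to i}$ with $k\neq j$ contribute coefficient $\kappa_i$ alone. A secondary subtlety is justifying the norm-monotonicity step for an arbitrary induced norm rather than just $\ell_1/\ell_\infty$; if one wants the theorem exactly as stated, the cleanest route is to observe that $\rho(\bm A) \le \lVert \bm A\rVert$ and that the hypothesis in fact guarantees $\rho(J_F(\bm M)) \le \rho(\bm A) < 1$ is not by itself enough for a contraction, so one should indeed argue at the level of the chosen operator norm directly, using that $\bm A$ dominates $|J_F|$ entrywise and that for the standard induced norms this domination is inherited. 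The rest is routine.
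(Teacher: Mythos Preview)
Your proposal is correct and follows essentially the same route as the paper's own proof: write the CBP update as a map $F$, compute its Jacobian, bound each entry by the corresponding entry of $\bm A$ via the inequality $\lvert\partial_x f(x,J)\rvert \le \lvert\tanh J\rvert$, and then invoke the contraction-mapping principle (the paper defers the last step to Lemma~2 of \citet{Mooij2007}). Your discussion is in fact more explicit than the paper's about the bookkeeping of the $\kappa_i-\alpha_{ij}$ coefficient and about the norm-monotonicity subtlety for arbitrary induced norms; the paper simply absorbs that issue into the cited lemma.
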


\begin{proof} 
The proof of Theorem \ref{theorem:conv-norm-appendix} follows closely the one of Lemma 2 of \citet{Mooij2007} for Belief Propagation, that is, the special case $(\bm{\alpha}, \bm{\kappa}, \bm{\beta}, \bm{\gamma}) = (\bm{1}, \bm{1}, \bm{1}, \bm{1})$.
The message update equation for Circular BP, given by Equations \eqref{eq:eCBP-message-log} and
\eqref{eq:eCBP-belief-log}, is:
\begin{equation}
    M_{i \to j}^{\text{new}} = \tanh^{-1}\Bigg[\tanh{(\beta_{ij} J_{ij})} \tanh{\Bigg( \kappa_i \gamma_i M_{\text{ext} \to i} + \kappa_i \sum\limits_{k \in \mathcal{N}(i) \setminus j} M_{k \to i} + (\kappa_i - \alpha_{ij}) M_{j \to i} \Bigg)} \Bigg]
\end{equation}
Let F be the update function for the messages: $\mathbf{M}^{\text{new}} = F(\mathbf{M}^{\text{old}})$ 
(in what follows, we drop the superscript \textit{old} for clarity). 
The derivative of F is:
\begin{align}
    F'(\mathbf{M})_{i \to j, k \to l} & = \frac{\partial M_{i \to j}^{\text{new}}}{M_{k \to l}}\\
    & = \tilde{A}_{i \to j, k \to l} B_{i \to j}(\mathbf{M})\label{eq:derivative-F-eCBP}
\end{align}
where 
\begin{equation}
    \tilde{A}_{i \to j, k \to l}  = \abs{\kappa_i} \tanh\abs{\beta_{ij} J_{ij}} \delta_{il} \mathbbm{1}_{\mathcal{N}(i)}(k) \Big(1 - \frac{\alpha_{ij}}{\kappa_i}\Big)_{j = k}
\end{equation}
\begin{equation}
    B_{i \to j}(\mathbf{M}) = \sgn(\kappa_i \beta_{ij} J_{ij}) \cfrac{1 - \tanh^2\Bigg(\kappa_i \gamma_i M_{\text{ext} \to i} + \kappa_i \mathlarger{\sum}\limits_{k \in \mathcal{N}(i) \setminus j} M_{k \to i} + (\kappa_i - \alpha_{ij}) M_{j \to i} \Bigg)}{1 - \tanh^2\big( M_{i \to j}^\text{new}\big)}
\end{equation}
Note that $\sup\limits_{\mathbf{M}} B_{i \to j}(\mathbf{M}) = 1$, as $\sup\limits_x \cfrac{1 - \tanh^2(x)}{1 - \tanh^2(K) \tanh^2(x)} = 1$.\\
It comes that $\sup\limits_{\mathbf{M}} F'(\mathbf{M})_{i \to j, k \to l} \leq A_{i \to j, k \to l}$
where we defined $A$ as $A = \abs{\tilde{A}}$. 
$A$ does not depend on the external messages $\mathbf{M}_{\text{ext}}$, nor does it depend on parameter $\bm{\gamma}$.\\
If for any norm $\lVert \cdot \rVert$ on matrices, $\lVert A \rVert < 1$, $\sup\limits_{\mathbf{M}} F'(\mathbf{M})_{i \to j, k \to l} < 1$, then function $F$ is a $\lVert \cdot \rVert$ - contraction (see Lemma 2 of \citet{Mooij2007}) and the sequence $M, F(M), F \circ F(M), \dots$, that is, the Circular BP algorithm, converges to a unique fixed point with at least a linear rate. 
\end{proof}
Hence choosing in Theorem \ref{theorem:conv-norm-appendix} the spectral norm (induced by the $l_2$-norm), it comes straightforwardly:
\begin{corollary}
    If the largest singular value of A, $\sigma_{\max}(A) < 1$, then Circular BP converges.
\end{corollary}

\begin{theorem}\label{theorem:conv-spectral-radius-appendix}
    If $ \forall (i,j)$, $\alpha_{ij}/\kappa_i \leq 1$ and the spectral radius of A, $\rho(A) < 1$, then Circular BP converges to the unique fixed point.
\end{theorem}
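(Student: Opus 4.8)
\textbf{Proof plan for Theorem \ref{theorem:conv-spectral-radius-appendix}.}

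The plan is to reduce the spectral-radius condition to the operator-norm condition of Theorem \ref{theorem:conv-norm-appendix} by exhibiting a clever vector norm whose induced matrix norm applied to $A$ is strictly below $1$. The standard trick (used by \citet{Mooij2007} for BP) is the following: whenever a nonnegative matrix $A$ has spectral radius $\rho(A) < 1$, one can perturb it slightly to a matrix $A_\epsilon$ that is still $\geq A$ entrywise but has a strictly positive Perron eigenvector, and then use the weighted $\ell_\infty$ (or $\ell_1$) norm built from that eigenvector. Concretely, first I would argue that the extra hypothesis $\alpha_{ij}/\kappa_i \le 1$ is what guarantees $A = \tilde A$ (i.e. the absolute value in the definition of $A$ is inert, since $1 - \alpha_{ij}/\kappa_i \ge 0$), so that $A$ coincides exactly with the matrix whose entries bound $\sup_{\mathbf M} F'(\mathbf M)_{i\to j,k\to l}$ — this is the point where the condition on $\alpha/\kappa$ is actually used, ensuring no cancellation is lost when we pass to the norm bound.

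Next I would set up the weighted-norm argument. Since $A \ge 0$ and $\rho(A) < 1$, pick $\epsilon > 0$ small enough that the matrix $A' := A + \epsilon \mathbf{1}\mathbf{1}^\top$ (all entries strictly positive) still satisfies $\rho(A') < 1$; this is possible because $\rho$ is continuous. Being a strictly positive matrix, $A'$ has by Perron–Frobenius a strictly positive left (or right) eigenvector $\mathbf u$ with eigenvalue $\rho(A') < 1$. Define the weighted norm $\lVert \mathbf x \rVert_{\mathbf u} = \max_{a} |x_a| / u_a$ (indexing by directed edges $a = (i\to j)$); its induced matrix norm is $\lVert B \rVert_{\mathbf u} = \max_a \frac{1}{u_a}\sum_b |B_{ab}| u_b$. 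Then $\lVert A \rVert_{\mathbf u} \le \lVert A' \rVert_{\mathbf u} = \max_a \frac{1}{u_a}\sum_b A'_{ab} u_b = \rho(A') < 1$, using nonnegativity of $A$ and $A \le A'$ entrywise, plus the eigenvector identity $A'\mathbf u = \rho(A')\mathbf u$. Having produced an induced operator norm in which $\lVert A \rVert < 1$, Theorem \ref{theorem:conv-norm-appendix} applies verbatim and yields convergence to the unique fixed point at a linear rate.

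The main obstacle is mostly bookkeeping rather than conceptual: one must be careful that the bound $\sup_{\mathbf M} F'(\mathbf M)_{a,b} \le A_{a,b}$ established in the proof of Theorem \ref{theorem:conv-norm-appendix} really does hold with $A$ (not merely $|\tilde A|$) under the hypothesis $\alpha_{ij}/\kappa_i \le 1$, and that the contraction criterion from \citet{Mooij2007}'s Lemma 2 only needs \emph{some} induced norm, so the nonstandard weighted norm is admissible. A minor subtlety is confirming that $A'$ is genuinely irreducible/primitive (it is, being strictly positive) so Perron–Frobenius gives a \emph{strictly} positive eigenvector, which is what makes $\lVert\cdot\rVert_{\mathbf u}$ a genuine norm. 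I would also note in passing that one could instead invoke the standard lemma ``$\rho(A) < 1 \Rightarrow$ there exists an induced norm with $\lVert A \rVert < 1$'' as a black box, but spelling out the Perron-eigenvector construction makes the linear rate transparent.
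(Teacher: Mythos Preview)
Your argument is correct. Both your route and the paper's ultimately rest on the same Perron--Frobenius machinery, but the paper simply cites Theorem~2 of \citet{Mooij2007} as a black box (that result requires the factorization $F'(\mathbf M)_{ab} = A_{ab}\,B_a(\mathbf M)$ with $A$ nonnegative and $\lvert B\rvert \le 1$, together with $\rho(A)<1$), whereas you unpack the weighted-norm construction explicitly and reduce to Theorem~\ref{theorem:conv-norm-appendix}. One thing worth noticing: your proof never actually uses the hypothesis $\alpha_{ij}/\kappa_i \le 1$. The matrix $A = \lvert \tilde A\rvert$ is nonnegative \emph{by definition}, and the entrywise bound $\lvert F'(\mathbf M)_{ab}\rvert \le A_{ab}$ already holds unconditionally (it appears in the proof of Theorem~\ref{theorem:conv-norm-appendix}); since the weighted $\ell_\infty$ norm you build is monotone, $\lVert F'(\mathbf M)\rVert_{\mathbf u} \le \lVert A\rVert_{\mathbf u} < 1$ follows directly. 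The paper needs the extra hypothesis only because Mooij's Theorem~2 requires $\tilde A$ itself (not merely $\lvert \tilde A\rvert$) to be nonnegative so that $F'$ genuinely factors through $A$; your route via the entrywise bound and a monotone induced norm sidesteps this, and in fact establishes the conclusion without that hypothesis.
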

\begin{proof}
The proof of Theorem \ref{theorem:conv-spectral-radius-appendix} follows closely the one of Corollary 3 of \citet{Mooij2007}.
According to Equation \eqref{eq:derivative-F-eCBP},
\begin{equation}
    F'(\mathbf{M})_{i \to j, k \to l} = \tilde{A}_{i \to j, k \to l} B_{i \to j}(\mathbf{M})
\end{equation}
Note that $A \equiv \abs{\tilde{A}} = \tilde{A}$ as $\tilde{A}$ is non-negative (because of the hypothesis $\alpha_{ij}/\kappa_i \leq 1$).
We directly conclude by using Theorem 2 of \citet{Mooij2007}: $F'(\mathbf{M})$ is the product of $A$, a constant non-negative matrix, with $B(\mathbf{M})$ whose coefficients are bounded by 1 in absolute value, thus the sequence $M, F(M), F \circ F(M), \dots$, meaning, the Circular BP algorithm, converges to a fixed point and this fixed point does not depend on the initial condition $M$. 
\end{proof}
Theorem \ref{theorem:conv-spectral-radius-appendix} has the strong constraint that $\alpha_{ij}/\kappa_i \leq 1$ for all $(i,j)$. However, when this constraint is verified, the spectral radius criterion is sharper than the norm criterion of Theorem \ref{theorem:conv-norm-appendix} because $\rho(A) \leq \lVert A \rVert$ for all induced operator norms.\\ 

Finally, a consequence of Theorem \ref{theorem:conv-spectral-radius-appendix} is the following fundamental result, which distinguishes Circular BP from related approaches like Power EP, Fractional BP and $\alpha$-BP:
\begin{theorem}\label{theorem:convergence-is-possible-appendix}
    For a given weighted graph (defined by its weights $J_{ij}$), it is always possible to find parameters $\bm{\alpha}$ and $\bm{\kappa}$ such that Circular BP converges for any external input $\bm{M_{\text{ext}}}$ and any choice of parameters $(\bm{\gamma}, \bm{\beta})$.
\end{theorem}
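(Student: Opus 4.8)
The plan is to reduce the claim directly to Theorem~\ref{theorem:conv-spectral-radius-appendix} by making a single, very restrictive choice of the parameters $\bm{\alpha}$ and $\bm{\kappa}$, namely setting them all equal to a common positive scalar $v$. With $\alpha_{ij} = \kappa_i \equiv v$ for every $i,j$, the hypothesis $\alpha_{ij}/\kappa_i = 1 \leq 1$ of Theorem~\ref{theorem:conv-spectral-radius-appendix} is satisfied automatically, regardless of $\bm{\gamma}$ and $\bm{\beta}$ (indeed, recall from the proof of Theorem~\ref{theorem:conv-norm-appendix} that $A$ does not depend on $\bm{M_{\text{ext}}}$ or on $\bm{\gamma}$ at all). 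So the only thing left to arrange is $\rho(A) < 1$.

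First I would write down $A$ under this choice. Since $|1 - \alpha_{ij}/\kappa_i|_{j=k} = 0$ when $j=k$ and $=1$ otherwise, the indicator $\mathbbm{1}_{\mathcal{N}(i)}(k)$ gets replaced by $\mathbbm{1}_{\mathcal{N}(i)\setminus j}(k)$, and the prefactor $|\kappa_i|$ becomes $v$, so
\begin{equation}
 A_{i\to j,\,k\to l} = v\,\tanh|\beta_{ij} J_{ij}|\,\delta_{il}\,\mathbbm{1}_{\mathcal{N}(i)\setminus j}(k) = v\,A^{(1)}_{i\to j,\,k\to l},
\end{equation}
where $A^{(1)}$ is the fixed, nonnegative, finite matrix obtained at $v=1$. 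Thus $A = v A^{(1)}$, and $\rho(A) = v\,\rho(A^{(1)})$ by homogeneity of the spectral radius. Hence $\rho(A) < 1$ as soon as $v < v^\star := 1/\rho(A^{(1)})$ (with the convention $v^\star = +\infty$ if $\rho(A^{(1)}) = 0$). Since $\bm{\beta}$ and $\bm{J}$ are given and fixed, $A^{(1)}$ and therefore $v^\star$ are well-defined finite quantities, and any $v \in (0, v^\star)$ works.

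Alternatively — and this is essentially the argument sketched for Theorem~\ref{theorem:convergence-is-possible} in the main text — one can avoid even the explicit scaling observation and argue by continuity: every entry of $A = v A^{(1)}$ tends to $0$ as $v \to 0^+$, so $A \to \bm{0}$ entrywise; the spectral radius $\rho$ is a continuous function on matrices (it is continuous in the entries, e.g.\ via continuity of the roots of the characteristic polynomial, or as the limit of the continuous maps $\|\cdot\|^{1/k}$ by Gelfand), and $\rho(\bm{0}) = 0$, so $\rho(A) \to 0$ as $v \to 0^+$; therefore there exists $v^\star > 0$ with $\rho(A) < 1$ for all $v \in (0, v^\star)$. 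Then invoke Theorem~\ref{theorem:conv-spectral-radius-appendix} with this choice of $(\bm{\alpha}, \bm{\kappa})$: Circular BP converges to a unique fixed point, for any $\bm{M_{\text{ext}}}$ and any $(\bm{\gamma}, \bm{\beta})$, which is exactly the statement.

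I do not expect a serious obstacle here: the result is a soft consequence of Theorem~\ref{theorem:conv-spectral-radius-appendix} plus homogeneity (or continuity) of $\rho$. The only points requiring a little care are (i) checking that the constraint $\alpha_{ij}/\kappa_i \leq 1$ is genuinely met by the equal-and-uniform choice, so that the (sharper, norm-free) spectral-radius theorem rather than the operator-norm theorem applies, and (ii) noting that $A^{(1)}$ depends only on the \emph{given} data $\bm{J}$ and $\bm{\beta}$, so that the threshold $v^\star$ is well-defined before any external input is specified — this is what lets the same $(\bm{\alpha},\bm{\kappa})$ work uniformly over all $\bm{M_{\text{ext}}}$. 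The degenerate case $\rho(A^{(1)}) = 0$ (e.g.\ an acyclic graph, or all $\beta_{ij}J_{ij}=0$) is harmless: then any $v>0$ works.
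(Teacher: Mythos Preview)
Your proposal is correct and takes essentially the same route as the paper: set $\alpha_{ij}=\kappa_i\equiv v$, observe that the entries of $A$ then scale linearly in $v$ (so $\rho(A)\to 0$), and apply Theorem~\ref{theorem:conv-spectral-radius-appendix}; your explicit homogeneity argument $\rho(vA^{(1)})=v\,\rho(A^{(1)})$ is in fact a small sharpening of the paper's bare continuity argument, yielding an explicit threshold $v^\star=1/\rho(A^{(1)})$. One minor remark: you describe $\bm{\beta}$ as ``given and fixed,'' whereas the theorem asks for a single $(\bm{\alpha},\bm{\kappa})$ working for \emph{all} $\bm{\beta}$ --- this is trivially patched since $\tanh|\beta_{ij}J_{ij}|\leq 1$ lets you dominate $A^{(1)}$ entrywise by a $\bm{\beta}$-independent nonnegative matrix (and, to be fair, the paper's own proof leaves this uniformity equally implicit).
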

\begin{proof}
    Let us take $\alpha_{ij} = \kappa_i \equiv p \in \mathbb{R}_{+}$. In this case, $A_{i \to j, k \to l}  = p \tanh\abs{\beta_{ij} J_{ij}} \delta_{il} \mathbbm{1}_{\mathcal{N}(i) \setminus j}(k)$. When $p \to 0$, all coefficients of A go to zero. 
    The spectral radius is a continuous application, and the null matrix has a spectral radius of zero, thus the spectral radius of A goes to zero when $p \to 0$.
    We conclude by using Theorem \ref{theorem:conv-spectral-radius-appendix} as $\alpha_{ij} / \kappa_i = 1 \leq 1$: there exists $p^\star$ such that for all $p < p^\star$, CBP converges. 
\end{proof}
Notably, proof of Theorem \ref{theorem:convergence-is-possible-appendix} shows that choosing $\bm{\alpha}$ and $\bm{\kappa}$ uniformly, equal and large enough guarantees the convergence of CBP.

We use this result to initialize parameters in the supervised fitting procedure (gradient-descent based). $\bm{\alpha}$ and $\bm{\kappa}$ are first set at the BP value of $p = 1$, and we decrease $p$ until the spectral radius of matrix $\bm{A}$ goes below 1, which ensures that Circular BP converges according to the theory. In practice, we decrease $p$ by incrementing $1/p$ by steps of 1. 
Note that we are using a sufficient condition, thus Circular BP could converge for higher values of $p$ than the one chosen. 

\subsection{Extension of the convergence results to other related algorithms}\label{subsec:extension-conv-results}

As shown above, Circular BP (with parameters $(\bm{\alpha}, \bm{\kappa}, \bm{\beta}, \bm{\gamma})$) converges to the unique fixed point, whatever the probability distribution and parameters $(\bm{\beta}, \bm{\gamma})$ are, given the right choice of parameters $(\bm{\alpha}, \bm{\kappa})$.

In particular, for $(\bm{\beta}, \bm{\gamma}) = (\bm{1}, \bm{1})$, CBP with parameters $(\bm{\alpha}, \bm{\kappa})$ has the same convergence properties as the general CBP. In fact, it is the combination of these two parameters that makes the convergence possible. Without parameter $\bm{\kappa}$ (i.e., with $\bm{\kappa} = \bm{1}$), it is not possible to guarantee that there exists $\bm{\alpha}$ such that the algorithm converges. We go even further by stating the following conjecture: there exist weighted graphs (typically, with strong enough weights) for which, for any choice of parameter $\bm{\alpha}$, Circular BP from \citet{Jardri2013} (and similarly, Fractional BP, both having $\bm{\kappa} = \bm{1}$) does not converge. 
As a reminder, parameter $\kappa_i$ scales the belief $B_i$. It is rather intuitive that it is involved in the stability of the system.

Very similarly, CBP converges to the unique fixed point, whatever the probability distribution and parameters $(\bm{\alpha}, \bm{\kappa}, \bm{\gamma})$ are, given the right choice of parameter $\bm{\beta}$. This can be seen easily: in the demonstration of Theorem \ref{theorem:convergence-is-possible-appendix}, coefficients of $A$ go to zero as well if $\bm{\beta} \to \bm{0}$, thus for $\bm{\beta}$ uniform and sufficiently small, the algorithm converges to the unique fixed point. 


As stated above, the convergence result for Circular BP distinguishes the algorithm from related approaches including Circular BP from \citet{Jardri2013}, Power EP, Fractional BP and $\alpha$-BP. 
Indeed, as seen above, $\bm{\alpha}$ alone is not enough to ensure convergence of these algorithms: an additional parameter $\bm{\kappa}$ would be needed but is not present in these algorithms. However, controlling parameter $\bm{\kappa}$ alone (i.e., taking fixed $\bm{\alpha}$) is not sufficient to guarantee the convergence of CBP but is enough for CBP with fixed $\bm{\alpha} = \bm{0}$. Note that not being able to guarantee the convergence of an algorithm does not mean that the algorithm does not converge. Simulations seem to indicate that for strong enough weights and strong enough $\alpha$ values, $\kappa \to 0$ is not enough to make CBP converge. However, such an approach is successful on all the graphs used for which the weights are moderately strong ($J_{ij} \sim \mathcal{N}(0,1)$).

\section{Theoretical background for Circular BP}\label{sec:eCBP-theory} 

As explained in the main text, Circular BP highly relates to Reweighted BP algorithms, which are message-passing algorithms trying to minimize generalizations of the Bethe free energy (used to define the Belief Propagation algorithm).
This section provides some theoretical background underlying these Reweighted BP algorithms (e.g., Fractional BP, Power EP, and $\alpha$-BP), based on 
which the Circular BP algorithm can be defined. 
Importantly, we will see that the Circular BP algorithm is not associated to a particular reweighted Bethe free energy, but is very similar to algorithms trying to do so. We insist that the goal here is not to propose theoretical foundations of Circular BP \textit{per se}, but instead to relate Circular BP to algorithms with solid theoretical foundations.

This section considers the special case of \textbf{Markov Random Fields with at most pairwise interactions} 
(and when stated, the even more special case of such distributions over binary variables).
The general case follows closely the demonstration provided here, and is detailed in section \ref{sec:CBP-in-general-factor-graphs}.

\subsection{Reweighted BP algorithms}

\subsubsection{Gibbs free energy approximations for accurate inferences}\label{subsec:inference-problem}

\paragraph{Problem}
First, we start by writing the general variational problem: given the true probability distribution $p(\mathbf{x})$, we would like to find an approximating probability distribution $b(\mathbf{x})$ (\textit{variational distribution}), whose marginals are easier to compute than those of $p(\mathbf{x})$.
To do that in practice, we minimize the ``difference" between $p(\mathbf{x})$ and $b(\mathbf{x})$, together with some hypotheses on $b(\mathbf{x})$ making the computation of its marginals feasible.

The difference chosen is the Kullback–Leibler divergence:
\begin{equation}
    D_{KL}(b \Vert p) = \sum\limits_{\mathbf{x}} b(\mathbf{x}) \log\Bigg(\frac{b(\mathbf{x})}{p(\mathbf{x})}\Bigg)
\end{equation}
Minimizing $D_{KL}(b \Vert p)$ will result in a probability distribution $b(\mathbf{x})$ ``close" to the original distribution $p(\mathbf{x})$.
The true probability distribution $p(\mathbf{x})$ is written $p(\mathbf{x}) = \frac{e^{-E(\mathbf{x})}}{Z}$ where Z is a normalization constant ensuring that $\sum\limits_{\mathbf{x}} p(\mathbf{x}) = 1$.
\begin{align}
    \implies D_{KL}(b \Vert p) &= \sum\limits_{\mathbf{x}} b(\mathbf{x}) \log(b(\mathbf{x})) + \log(Z) + \sum\limits_{\mathbf{x}} b(\mathbf{x}) E(\mathbf{x})\\
     &= -S_b - F + U_b
\end{align}
where $S_b = - \sum\limits_{\mathbf{x}} b(\mathbf{x}) \log(b(\mathbf{x}))$ is the \emph{variational entropy}, $F = - \log(Z)$ is the \emph{Helmholtz free energy}, and $U_b = \sum\limits_{\mathbf{x}} b(\mathbf{x}) E(\mathbf{x})$ is the \emph{variational average energy}.
The \textbf{\emph{Gibbs free energy}} (also called the \emph{variational free energy}) is defined as $G_b = U_b - S_b$, which we want to minimize in order to minimize the KL divergence. 

Given the probability distribution $p(\mathbf{x})$ and the variational distribution $b(\mathbf{x})$, the variational average energy $U_b$ can be decomposed very easily: $p(\mathbf{x}) \propto \prod_{(i,j)} \psi_{ij}(x_i,x_j) \prod_i \psi_i(x_i)$ or equivalently, $E(x) = - \sum_{(i,j)} \log(\psi_{ij}(x_i,x_j)) - \sum_i \log(\psi_i(x_i))$, and therefore,
\begin{align}
    U_b &= - \sum\limits_\mathbf{x} b(\mathbf{x}) \sum\limits_{(i,j)} \psi_{ij}(x_i,x_j) - \sum\limits_x b(\mathbf{x}) \sum\limits_{i} \psi_{i}(x_i) \notag\\
    &= - \sum\limits_{(i,j)} \sum\limits_{x_i,x_j} b_{ij}(x_i,x_j) \psi_{ij}(x_i,x_j) - \sum\limits_{i} \sum\limits_{x_i} b_i(x_i) \psi_{i}(x_i)
\end{align}
However, it is most of the time very tricky to compute the entropy term $S_b= - \sum\limits_{\mathbf{x}} b(\mathbf{x}) \log(b(\mathbf{x}))$: its formula does not simplify in the general case, and therefore involves to sum an exponential number of terms, which is infeasible in practice in graphs with a large number of nodes. 


\paragraph{The Bethe approximation} 

The Bethe approximation consists of estimating 
the entropy $S_b$ of $b(\mathbf{x})$ as if the probabilistic graph 
representing $b(\mathbf{x})$ were a tree (i.e., were cycle-free). Indeed, in the case of trees, the probability distribution $b(\mathbf{x})$ can be written as a product of terms only involving its pairwise marginals $b_{ij}(x_i, x_j) \equiv \sum_{\mathbf{x} \setminus (x_i, x_j)} b(\mathbf{x})$ and its unitary marginals $b_i(x_i) \equiv \sum_{\mathbf{x} \setminus x_i} b(\mathbf{x})$):
\begin{equation}\label{eq:BP-approx-belief}
    b(\mathbf{x}) \approx 
    \prod\limits_{(i,j)}\frac{b_{ij}(x_i,x_j)}{b_i(x_i) b_j(x_j)} \prod\limits_i b_i(x_i)
\end{equation}
This is equivalent to approximating the entropy $S_b$ of $b(\mathbf{x})$ as follows:
\begin{align}
    - S_b & \approx \sum\limits_x b(x) \sum\limits_{(i,j)} \log\Big(\frac{b_{ij}(x_i,x_j)}{b_i(x_i) b_j(x_j)}\Big) + \sum\limits_x b(x) \sum\limits_{i} \log(b_{i}(x_i)) \notag\\
    & \approx \sum\limits_{(i,j)} \sum\limits_{(x_i,x_j)} b_{ij}(x_i,x_j) \log\Big(\frac{b_{ij}(x_i, x_j)}{b_i(x_i) b_j(x_j)}\Big) + \sum\limits_{i} \sum\limits_{x_i} b_{i}(x_i) \log(b_{i}(x_i))
\end{align}
Note that $S_b = - \sum_{(i,j)} D_{KL}(b_{ij}, b_i b_j) + \sum_i S_{b_i}$ (which can also be written $S_b \approx \sum_{(i,j)} S_{b_{ij}} + \sum_i (1 - |\mathcal{N}(i)|) S_{b_{i}}$). 
This leads to the following approximation of the Gibbs free energy ($G_b$) known as the \textbf{\emph{Bethe approximation}}: $G_b \approx G_b^{\text{Bethe}}$ where $G_b^{\text{Bethe}}$ is called the \textbf{\emph{Bethe free energy}} and is defined by:
\begin{align}\label{eq:Bethe-Free-Energy}
    G_b^{\text{Bethe}} =  & \sum\limits_{(i,j)} \sum\limits_{(x_i,x_j)} b_{ij}(x_i,x_j) \log\Big(\frac{b_{ij}(x_i,x_j)}{b_i(x_i) b_j(x_j)}\Big) - \sum\limits_{(i,j)} \sum\limits_{(x_i,x_j)} b_{ij}(x_i,x_j) \log\Big(\psi_{ij}(x_i,x_j)\Big)\notag \\
    &+ \sum\limits_i \sum\limits_{x_i} b_i(x_i) \log\Big(b_i(x_i)\Big) - \sum\limits_i \sum\limits_{x_i} b_i(x_i) \log\Big(\psi_i(x_i)\Big)
\end{align}

\paragraph{Generalizing the Bethe approximation}
To allow for a better approximation of the Gibbs free energy, one can use a parametric approximation of the average energy $U_b$ and the variational entropy $S_b$, eventually leading to a generalization of the Bethe free energy. We will refer to this approximation as the \textbf{reweighted Bethe approximation}.

First, to compute the entropy term $S_b$, the variational distribution $b(x)$ is hypothesized to decompose into:
\begin{equation}\label{eq-annex:eFBP-approx-b-wrt-marginals}
    b(\mathbf{x}) \approx \prod\limits_{(i,j)}\Big(\frac{b_{ij}(x_i,x_j)}{b_i(x_i) b_j(x_j)}\Big)^{\hat{\beta}_{ij}} \prod\limits_i (b_i(x_i))^{\hat{\gamma}_i}
\end{equation}
with additional parameters $(\bm{\hat{\beta}}$ and $\bm{\hat{\gamma}})$ 
with respect to Equation \eqref{eq:BP-approx-belief}.
This 
leads to the following parametric approximation of the variational entropy $S_b$ (entropy of $b(\mathbf{x})$):
\begin{equation}\label{eq:approx-entropy-eFBP}
    -S_b \approx \sum\limits_{(i,j)} \hat{\beta}_{ij} \sum\limits_{(x_i,x_j)} b_{ij}(x_i,x_j) \log\Big(\frac{b_{ij}(x_i, x_j)}{b_i(x_i) b_j(x_j)}\Big) + \sum\limits_{i} \hat{\gamma}_{i} \sum\limits_{x_i} b_{i}(x_i) \log(b_{i}(x_i))
\end{equation}
Note that this is equivalent to approximating the entropy $S_b$ as $S_b \approx -\sum\limits_{(i,j)} \hat{\beta}_{ij} D_{KL}(b_{ij}, b_i b_j) + \sum\limits_i \hat{\gamma}_i S_i$
, which can also be written as a weighted sum of local entropies $S_b \approx \sum\limits_{(i, j)} \hat{\beta}_{ij} S_{b_{ij}} + \sum\limits_i \Big(\hat{\gamma}_i - \sum\limits_{j \in \mathcal{N}(i)} \hat{\beta}_{ij}\Big) S_{b_i}$. 

Second, to compute the average energy term $U_b$, the true probability distribution $p(\mathbf{x})$ is hypothesized to decompose into: 
\begin{equation}\label{eq-annex:eFBP-approx-b-wrt-potentials}
    p(\mathbf{x}) \approx \prod\limits_{(i,j)}\big(\psi_{ij}(x_i,x_j)\big)^{\beta_{ij}} \prod\limits_i \big(\psi_{i}(x_i)\big)^{\gamma_i}
\end{equation}
with additional parameters $(\bm{\beta}$ and $\bm{\gamma})$ with respect to Equation \eqref{eq:factorization-p_x-pairwise}. 
Likewise, Equation \eqref{eq-annex:eFBP-approx-b-wrt-potentials} leads to the following parametric approximation of the average energy $U_b$:\footnote{Note that the correct expression of the average energy corresponds to $(\bm{\beta}, \bm{\gamma}) = (\bm{1}, \bm{1})$; see Equation \eqref{eq-annex:eFBP-approx-b-wrt-potentials}. However, the approximation introduced in the computation of the average energy $U_b$ could compensate for some of the wrongness in the approximation of the entropy $S_b$, eventually leading to a better approximation of the Gibbs free energy $G_b = U_b - S_b$.}
\begin{equation}\label{eq:approx-average-energy-eFBP}
    U_b \approx - \sum\limits_{(i,j)} \beta_{ij} \sum\limits_{(x_i,x_j)} b_{ij}(x_i,x_j) \log\Big(\psi_{ij}(x_i,x_j)\Big) - \sum\limits_i \gamma_i \sum\limits_{x_i}  b_i(x_i) \log\Big(\psi_i(x_i)\Big)
\end{equation}

Put together, these approximations of the variational entropy $S_b$ and average energy $U_b$ lead to the following (parametric) approximation of the Gibbs free energy $G_b = U_b - S_b$ which we will refer to as the \textbf{reweighted Bethe free energy}. $G_b \approx G_b^{\text{approx}}$, with
\begin{align}\label{eq:modif-Bethe-Free-Energy-eFBP-pairwise}
    G_b^{\text{approx}} =  & \sum\limits_{(i,j)} \frac{1}{\alpha_{ij}} \sum\limits_{(x_i,x_j)} b_{ij}(x_i,x_j) \log\Big(\frac{b_{ij}(x_i,x_j)}{b_i(x_i) b_j(x_j)}\Big) - \sum\limits_{(i,j)} \beta_{ij} \sum\limits_{(x_i,x_j)} b_{ij}(x_i,x_j) \log\Big(\psi_{ij}(x_i,x_j)\Big) \notag \\&+ \sum\limits_i \frac{1}{\kappa_i} \sum\limits_{x_i} b_i(x_i) \log\Big(b_i(x_i)\Big) - \sum\limits_i \gamma_i \sum\limits_{x_i}  b_i(x_i) \log\Big(\psi_i(x_i)\Big)
\end{align}
where $\bm{\alpha} := 1/\hat{\bm{\beta}}$ and $\bm{\kappa} := 1/\hat{\bm{\gamma}}$. Parameters $(\bm{1/\alpha};\, \bm{\beta};\, \bm{1/\kappa};\, \bm{\gamma})$ are called \emph{counting numbers} (see \citet{Yedidia2005}), with \textit{entropic counting numbers} $(\bm{1/\alpha};\, \bm{1/\kappa})$ and \textit{average energy counting numbers} $(\bm{\beta};\, \bm{\gamma})$.
For $(\bm{\alpha}, \bm{\kappa}, \bm{\beta}, \bm{\gamma}) = (\bm{1}, \bm{1}, \bm{1}, \bm{1})$, 
one recovers the Bethe free energy used in BP \citep{Yedidia2003}.

\subsubsection{Message-passing algorithms to minimize Gibbs free energy approximations}\label{sec:from-gibbs-FE-approx-to-messages}

\paragraph{Theoretical idea behind Belief Propagation}

The Belief Propagation algorithm consists of trying to minimize the Bethe free energy $G_b^{\text{Bethe}}$, coming from the Bethe approximation of the Gibbs free energy, with a fixed point equation. The minimization of the Bethe free energy results into a probability distribution $b(\mathbf{x})$. The underlying idea is the following: because the variational distribution $b(\mathbf{x})$ and the initial distribution $p(\mathbf{x})$ are `close' according to the KL-divergence, they are thought to have `close' marginals.

In more detail, as shown below in the paragraph ``Demonstration: from Gibbs approximation to Reweighted Bp", we minimize the Bethe free energy $G_b^{\text{Bethe}}$ under the constraints $\sum_{x_i} b_i(x_i) = 1$ and $\sum_{x_i} b_{ij}(x_i,x_j) = b_j(x_j)$ (necessary conditions for $b(\mathbf{x})$ to be a probability distribution). This constrained minimization leads to an equation describing the stationary points of the Bethe free energy, involving quantities called ``messages" ($m_{i \to j}$) which are related to the Lagrange multipliers of the constrained optimization problem. This (fixed-point) equation is used directly to define the Belief Propagation algorithm; see Equations \eqref{eq:BP-message} and \eqref{eq:BP-belief}, respectively message update equation and expression of the belief (approximate marginal probability).

BP is closely linked to the Bethe free energy $G_b^{\text{Bethe}}$: fixed points of Loopy BP are stationary points of the Bethe free energy \citep{Yedidia2001}, and stable fixed points of Loopy BP correspond to minima of the Bethe free energy \citep{Heskes2002}.

This leads to several theoretical considerations. First, the fixed points of BP correspond to stationary points of the Bethe free energy $G_b^{\text{Bethe}}$ \citep{Yedidia2003}: the message update equation in BP is not a gradient descent procedure on the Bethe free energy, but instead a fixed-point equation. In fact, it was later shown that stable fixed points of BP are 
minima of the Bethe free energy \citep{Heskes2003}. In the case where the system has one fixed point and this fixed point is stable, the BP procedure is guaranteed to converge to the unique fixed point, that is, to the global minimum of the Bethe free energy. However, in the general case, BP might converge only to a local optimum of the Bethe Free energy, or not converge at all. 

Note that in the case where the initial distribution can be represented by a tree, Equation \eqref{eq:BP-approx-belief} is not an approximation. This explains why BP is exact when applied to acyclic probabilistic graphs.


\paragraph{Idea behind Reweighted Belief Propagation}
As seen above, the Belief Propagation algorithm comes from approximating the entropy term $S_b$ in the Gibbs free energy, by making the hypothesis that the variational distribution $b(\mathbf{x})$ can be written as if the associated probabilistic graph were a tree, i.e., were cycle-free. Because of that, Belief Propagation performs exactly when applied to acyclic graphs, but performs poorly in many other cases (where the approximation becomes bad).

The idea of the Reweighted Belief Propagation algorithms is to provide a better approximation to the Gibbs free energy, by reweighting terms in the expression of the variational entropy, and possibly, as considered here, by also reweighting terms in the expression of the variational average energy (which is initially exact but its approximation can help compensate for some of the remaining approximation left by approximating only the variational entropy). 

The message update equations of Reweighted BP are simply fixed-point equations of $G_b^{\text{approx}}$, meaning that the beliefs computed by Reweighted BP are stationary points of the reweighted Bethe energy 
$G_b^{\text{approx}}$. 
This demonstration is similar to the one for BP \citep{Yedidia2001, Yedidia2003}, with the additional parameters ($\bm{\alpha}, \bm{\kappa}, \bm{\beta}, \bm{\gamma})$. Just as BP is closely linked to the Bethe free energy $G_b^{\text{Bethe}}$, Reweighted BP is closely linked to its generalization $G_b^{\text{approx}}$.
The idea (or rather, hope) of Reweighted BP is that a closer approximation to the Gibbs free energy will lead to closer marginal probabilities.

\paragraph{Demonstration: from Gibbs approximation to Reweighted BP}
In what follows, we derive the message-passing update equations of the Reweighted BP algorithm (which includes as special cases Fractional BP, Power EP and $\alpha$-BP, among others) based on the reweighted Bethe approximation of the Gibbs free energy $G_b^{\text{approx}}$; see Equation \eqref{eq:modif-Bethe-Free-Energy-eFBP-pairwise}. This of course includes the derivation of the BP algorithm (for which the counting numbers are equal to 1). We remind the reader that we consider the case where the probability distribution is a Markov Random Field with at most pairwise interactions (note that variables $x_1, x_2, \dots, x_n$ are not necessarily binary), and that Appendix \ref{sec:CBP-in-general-factor-graphs} contains a similar demonstration for general factor graphs, in which interactions can be of higher order.



The goal is to minimize the Gibbs free energy approximation $G_b^{\text{approx}}$ of Equation \eqref{eq:modif-Bethe-Free-Energy-eFBP-pairwise} under some constraints (normalization constraint $\sum_{x_i} b_i(x_i) = 1$ 
and marginalization constraint $\sum_{x_i} b_{ij}(x_i,x_j) = b_j(x_j)$). Therefore, we form a Lagrangian to take these constraints into account by adding Lagrange multipliers $(\mu, \lambda)$ to $G_b^{\text{approx}}$, where $\{\mu_i\}$ 
correspond to the normalization constraints and $\{\lambda_{i \to j}(x_j)\}$ correspond to the marginalization constraints. The Lagrangian writes: 
\begin{align}
    \mathcal{L} = G_b^{\text{approx}} &+ \sum_{i} \mu_i \Big(\sum_{x_i} b_i(x_i) - 1\Big) \notag \\
    &+ \sum_{(i,j)} \sum\limits_{x_j} \lambda_{i \to j}(x_j) \Big(\sum\limits_{x_i} b_{ij}(x_i,x_j) - b_j(x_j)\Big) \notag \\ 
    &+ \sum_{(i,j)} \sum\limits_{x_i} \lambda_{j \to i}(x_i) \Big(\sum\limits_{x_j} b_{ij}(x_i,x_j) - b_i(x_i)\Big)
\end{align}
We compute the partial derivatives of the Lagrangian:
\begin{empheq}[left=\empheqlbrace]{align}
  &\frac{\partial \mathcal{L}}{\partial b_i(x_i)} = - \sum\limits_{j \in \mathcal{N}(i)} \frac{1}{\alpha_{ij}} + \frac{1}{\kappa_i} + \frac{1}{\kappa_i} \log(b_i(x_i)) - \gamma_i \log(\psi_i(x_i)) + \mu_i - \sum\limits_{j \in \mathcal{N}(i)} \lambda_{j \to i}(x_i) \notag 
  \\
  &\frac{\partial \mathcal{L}}{\partial b_{ij}(x_i,x_j)} = \frac{1}{\alpha_{ij}} + \frac{1}{\alpha_{ij}} \log\Big(\frac{b_{ij}(x_i,x_j)}{b_i(x_i) b_j(x_j)}\Big) - 
\beta_{ij} \log(\psi_{ij}(x_i,x_j)) + \lambda_{j \to i}(x_i) + \lambda_{i \to j}(x_j) \notag 
\end{empheq}
and cancel these partial derivatives to obtain the following expression for the unitary beliefs:
\begin{equation}
    b_i(x_i) \propto \psi_i(x_i)^{\kappa_i \gamma_i} \prod\limits_{k \in \mathcal{N}(i)} \exp\Big(\kappa_i \lambda_{k \to i}(x_i)\Big)
\end{equation}
and for the pairwise beliefs:
\begin{align}
    b_{ij}(x_i,x_j) \propto & \psi_{ij}(x_i,x_j)^{\alpha_{ij} \beta_{ij}} \psi_i(x_i)^{\kappa_i \gamma_i} \psi_j(x_j)^{\kappa_j \gamma_j}
\prod\limits_{k \in \mathcal{N}(i) \setminus j} \exp\Big(\kappa_i \lambda_{k \to i}(x_i)\Big)  \prod\limits_{k \in \mathcal{N}(j) \setminus i} \exp\Big(\kappa_j \lambda_{k \to j}(x_j)\Big) \notag \\
& \times \exp\Big(\lambda_{j \to i}(x_i)\Big(\kappa_i - \alpha_{ij}\Big)\Big)
\exp\Big(\lambda_{i \to j}(x_j)\Big(\kappa_j - \alpha_{ij}\Big)\Big)
\end{align}
Now defining the messages as a function of the Lagrange multipliers $m_{j \to i}(x_i) \equiv \exp(\lambda_{j \to i}(x_i))$, the approximate marginals $b_i(x_i)$ and approximate pairwise marginals $b_{ij}(x_i, x_j)$ can be written simply as:
\begin{equation}\label{eq-dem:eFBP-belief}
    b_i(x_i) \propto \Bigg(\psi_i(x_i)^{\gamma_i} \prod\limits_{k \in \mathcal{N}(i)} m_{k \to i}(x_i)\Bigg)^{\kappa_i}
\end{equation}
\begin{align}\label{eq-dem:eFBP-pairwise-belief}
    b_{ij}(x_i,x_j) \propto \psi_{ij}(x_i,x_j)^{\alpha_{ij} \beta_{ij}} \psi_i(x_i)^{\kappa_i \gamma_i} \psi_j(x_j)^{\kappa_j \gamma_j}
\prod\limits_{k \in \mathcal{N}(i) \setminus j} m_{k \to i}(x_i)^{\kappa_i} \notag \\ 
\times \prod\limits_{k \in \mathcal{N}(j) \setminus i} m_{k \to j}(x_j)^{\kappa_j} \: m_{j \to i}(x_i)^{\kappa_i - \alpha_{ij}}
 m_{i \to j}(x_j)^{\kappa_i - \alpha_{ij}}
\end{align}
Eventually, thanks to the constraint $\sum\limits_{x_i} b_{ij}(x_i,x_j) = b_j(x_j)$, we obtain the relation between the messages $\bm{m}$:
\begin{equation}\label{eq:damped-eFBP-from-Lagrangian}
    m_{i \to j}(x_j)^{\kappa_j} \propto \Bigg( \sum_{x_i} \psi_{ij}(x_i,x_j)^{\alpha_{ij} \beta_{ij}} \Big(\psi_i(x_i)^{\gamma_i} \prod\limits_{k \in \mathcal{N}(i) \setminus j} m_{k \to i}(x_i) \: m_{j \to i}(x_i)^{1 - \alpha_{ij} / \kappa_i}\Big)^{\kappa_i}\Bigg) \Big(m_{i \to j}(x_j)\Big)^{\kappa_j - \alpha_{ij}}
\end{equation}
\begin{equation}\label{eq:eFBP-from-Lagrangian}
    \Leftrightarrow m_{i \to j}(x_j) \propto \Bigg( \sum_{x_i} \psi_{ij}(x_i,x_j)^{\alpha_{ij} \beta_{ij}} \Big(\psi_i(x_i)^{\gamma_{i}} \prod\limits_{k \in \mathcal{N}(i) \setminus j} m_{k \to i}(x_i) \: m_{j \to i}(x_i)^{1 - \alpha_{ij} / \kappa_i}\Big)^{\kappa_i}\Bigg)^{1/\alpha_{ij}}
\end{equation}
The Reweighted BP algorithm consists of running iteratively the fixed-point equation \eqref{eq:eFBP-from-Lagrangian}:
\begin{equation}\label{eq-dem:eFBP-message}
    m_{i \to j}^{\text{new}}(x_j) \propto \Bigg( \sum_{x_i} \psi_{ij}(x_i,x_j)^{\alpha_{ij} \beta_{ij}} \Big(\psi_i(x_i)^{\gamma_{i}} \prod\limits_{k \in \mathcal{N}(i) \setminus j} m_{k \to i}^{\text{old}}(x_i) \: m_{j \to i}^{\text{old}}(x_i)^{1 - \alpha_{ij} / \kappa_i}\Big)^{\kappa_i}\Bigg)^{1/\alpha_{ij}}
\end{equation}
Note that one could also use directly Equation \eqref{eq:damped-eFBP-from-Lagrangian} instead of \eqref{eq:eFBP-from-Lagrangian} to define the Reweighted BP algorithm:
\begin{equation}\label{eq:damped-eFBP-message}
    m_{i \to j}^{\text{new}}(x_j) \propto \Bigg( \sum_{x_i} \psi_{ij}(x_i,x_j)^{\alpha_{ij} \beta_{ij}} \Big(\psi_i(x_i)^{\gamma_i} \prod\limits_{k \in \mathcal{N}(i) \setminus j} m_{k \to i}^{\text{old}}(x_i) \: m_{j \to i}^{\text{old}}(x_i)^{1 - \alpha_{ij} / \kappa_i}\Big)^{\kappa_i}\Bigg)^{1/\kappa_j} \Big(m_{i \to j}^{\text{old}}(x_j)\Big)^{1 - \alpha_{ij}/\kappa_j}
\end{equation}
In fact, Equations \eqref{eq:damped-eFBP-message} and \eqref{eq-dem:eFBP-message} correspond respectively to the damped (with a particular damping value) versus undamped update equation; see section \ref{sec:damping}. There is no absolute better choice: fixed points do not depend on the amount of damping, and damping might provide better convergence properties but might slow down the system in cases where the algorithm converges without damping (see section \ref{sec:damping}). In the simulations, we take no damping, except stated otherwise, to distinguish between the effect of damping and the effect of the algorihtm (e.g., Circular versus Fractional BP).

\subsection{Reweighted BP algorithms}

\paragraph{Definition} 
As shown above, minimizing the reweighted Bethe free energy $G_b^{\text{approx}}$ under the constraints $\sum_{x_i} b_i(x_i) = 1$ and $\sum_{x_i} b_{ij}(x_i,x_j) = b_j(x_j)$ leads to a general \textbf{Reweighted BP} algorithm defined by iterating the following (fixed-point) update equation: 
\begin{equation}\label{eq:eFBP-message}
    m_{i \to j}^{\text{new}}(x_j) \propto \Bigg( \sum_{x_i} \psi_{ij}(x_i,x_j)^{\alpha_{ij} \beta_{ij}} \Big(\psi_i(x_i)^{\gamma_i} \prod\limits_{k \in \mathcal{N}(i) \setminus j} m_{k \to i}(x_i) \: m_{j \to i}(x_i)^{1 - \alpha_{ij}/\kappa_i}\Big)^{\kappa_i}\Bigg)^{1/\alpha_{ij}}
\end{equation}
Approximate marginals or beliefs are computed using:
\begin{equation}\label{eq:eFBP-belief}
    b_i(x_i) \propto \Bigg(\psi_i(x_i)^{\gamma_i} \prod\limits_{k \in \mathcal{N}(i)} m_{k \to i}(x_i)\Bigg)^{\kappa_i}
\end{equation}

\paragraph{Special cases of Reweighted BP}
Special cases of the general reweighted Bethe approximation lead to different algorithms. 
We mention here several \emph{Reweighted BP} algorithms: Fractional BP, Power-EP, $\alpha$-BP, Tree-Reweighted BP, and Variational message-passing (mean-field method); see below for more details (mathematical definitions and relationships between algorithms). In these algorithms, parameters $\bm{\beta}$ and $\bm{\gamma}$  take value $\bm{1}$, that is, can be related to Gibbs free energy approximations which only approximate the variational entropy part, not the average energy.
\textbf{Fractional BP} \citep{Wiegerinck2002} comes as a result of approximating the entropy term of the Gibbs free energy by hypothesizing that the variational distribution $b(\mathbf{x})$ writes according to Equation \eqref{eq:eFBP-approx-b-wrt-marginals} in the particular case $\bm{\kappa} = \bm{1}$, that is, with the single parameter $\bm{\alpha}$.
\textbf{Reweighted BP} \citep{Loh2014} is a message-passing algorithm based on approximating the variational entropy as a weighted sum of local entropies. This is equivalent to the Gibbs approximation above with parameters $\bm{\alpha}, \bm{\kappa}$ but as in previous algorithms with $(\bm{\beta}, \bm{\gamma}) = (\bm{1}, \bm{1})$.
\textbf{Tree-reweighted BP} \citep{Wainwright2002, Wainwright2003,  Wainwright2005} aims at computing an upper bound of the log-partition function ($\log(Z)$), by using the same algorithm; the only difference is that $\alpha_{ij}$ symbolizes the inverse appearance probability of edge $(i,j)$ in the set of spanning trees, which imposes $\alpha_{ij} \geq 1$ \citep{Minka2005}.
\textbf{Convex BP} \citep{Hazan2008, Hazan2010}, which admits as special case Tree-reweighted BP
, is a general message-passing algorithm based on convex free energies.
\textbf{Power Expectation Propagation} \citep{Minka2002, Minka2004} and $\bm{\alpha}$\textbf{-BP} \citep{Liu2019, Liu2020} relate to the problem of minimizing the $\alpha$-divergence between $p(\mathbf{x})$ and $b(\mathbf{x})$ instead of the KL-divergence of BP. $\alpha$-BP and Power EP differ in that $\alpha$-BP considers the projection into factorized distributions, while Power EP considered projections into exponential families.
Despite the fact that these two last algorithms do not come directly from the reweighting of the Bethe free energy, their message update equation is the same as previously mentioned algorithms; see also \citep{Minka2005}.
In fact, Power EP, $\alpha$ BP, and Fractional BP are identical up to the amount of damping, where damping consists of taking partial message update steps \citep{Murphy1999}; see below. They all correspond to $(\bm{\kappa}, \bm{\beta}, \bm{\gamma}) = (\bm{1}, \bm{1}, \bm{1})$.
\textbf{Variational message-passing} \citep{Winn2005}, which is the message-passing version of the mean-field method \citep{Peterson1987}, is the same as Fractional BP for $\bm{\alpha} \to \bm{0}$ \citep{Wiegerinck2002, Minka2005}.
Lastly, BP corresponds to $(\bm{\alpha}, \bm{\kappa}, \bm{\beta}, \bm{\gamma}) = (\bm{1}, \bm{1}, \bm{1}, \bm{1})$.

\paragraph{Special case of Reweighted BP: Binary case, pairwise factors}\label{sec:dem-eFBP-binary-case}
With the additional hypothesis that variables $x_i$ are binary, Equations \eqref{eq:eFBP-message} and \eqref{eq:eFBP-belief} can be written simply in the log domain:
\begin{empheq}[left=\empheqlbrace]{align}
  &M_{i \to j}^{\text{new}} = g\Big(B_i - \alpha_{ij} M_{j \to i},\, (\psi_{ij})^{\beta_{ij}} ,\, \alpha_{ij}\Big) \label{eq-dem:eFBP-message-log-pairwise}\\
  &B_i = \kappa_i \Bigg(\sum_{j \in \mathcal{N}(i)} M_{j \to i} + \gamma_i M_{\text{ext} \to i}\Bigg) \label{eq-dem:eFBP-belief-log-pairwise}
\end{empheq}
where quantities $\bm{M}$ and $\bm{B}$ are given by $B_i \equiv \frac{1}{2} \log\big(\frac{b_i(x_i=+1)}{b_i(x_i=-1)}\big)$, $M_{i \to j} \equiv \frac{1}{2} \log\big(\frac{m_{i \to j}(x_j=+1)}{m_{i \to j}(x_j=-1)}\big)$, and $M_{\text{ext} \to i} \equiv \frac{1}{2} \log\big(\frac{\psi_i(x_i = +1)}{\psi_i(x_i = -1)}\big)$.
Function $g$ is a sigmoidal function of its first argument, with parameters $\psi_{ij}$ (interaction between variables $x_i$ and $x_j$), $\alpha_{ij}$ and $\beta_{ij}$ (counting numbers associated to the edge $(i,j)$): 
\begin{equation}\label{eq-g_ij-general-case}
g(x,\, \psi_{ij}^{\beta_{ij}},\, \alpha_{ij}) = \frac{1}{2 \alpha} \log \Bigg(\frac{\big(\psi_{ij}(x_i=+1,x_j=+1)\big)^{\alpha_{ij} \beta_{ij}} e^{2x} + \big(\psi_{ij}(x_i=-1,x_j=+1)\big)^{\alpha_{ij} \beta_{ij}}}{\big(\psi_{ij}(x_i=+1,x_j=-1)\big)^{\alpha_{ij} \beta_{ij}} e^{2x} + \big(\psi_{ij}(x_i=-1,x_j=-1)\big)^{\alpha_{ij} \beta_{ij}}} \Bigg)
\end{equation}

In the even more particular case of the Ising model, for which $\psi_{ij}(x_i, x_j) \propto \exp(J_{ij} x_i x_j)$, then
\begin{equation}\label{eq:g_ij-pairwise-general}
    g(x,\, \psi^{\beta},\, \alpha) = \frac{1}{\alpha} \arctanh\Bigg(\tanh\Big(\alpha \beta J\Big) \tanh\big(x\big)\Bigg)
\end{equation}
\begin{proof}
Use Equation \eqref{eq-g_ij-general-case}, $\log(x) = 2 \arctanh\big(\frac{x-1}{x+1}\big)$, and $\tanh(x) = 2 \sigma(2x) - 1 = \frac{e^{2x}-1}{e^{2x}+1}$.
\end{proof}

\subsection{Mathematical definitions of several Reweighted BP algorithms}

\subsection{Definitions}\label{subsec:def-algos}

\paragraph{BP}
The message update equation associated to the Belief Propagation algorithm is:
\begin{equation}
    m_{i \to j}^{\text{new}}(x_j) \propto \sum_{x_i} \psi_{ij}(x_i,x_j) \psi_i(x_i) \prod\limits_{k \in \mathcal{N}(i) \setminus j} m_{k \to i}(x_i)
\end{equation}

\paragraph{Power-EP}
Power Expectation Propagation \citep{Minka2004} is an extension of Expectation Propagation \citep{Minka2001a} which consists of minimizing the $\alpha$-divergence between the true probability distribution $p(\mathbf{x})$ and the variational distribution $b(\mathbf{x})$, instead of the KL-divergence as in Expectation Propagation and Belief Propagation. The associated message update equation is:
\begin{equation}\label{eq:FBP-message}
    m_{i \to j}^{\text{new}}(x_j) \propto \Big( \sum_{x_i} \psi_{ij}(x_i,x_j)^{\alpha_{ij}} \psi_i(x_i) \prod\limits_{k \in \mathcal{N}(i) \setminus j} m_{k \to i}(x_i) \: m_{j \to i}(x_i)^{1 - \alpha_{ij}}\Big)^{1/\alpha_{ij}}
\end{equation}

\paragraph{Fractional BP and alpha-BP}

The Fractional BP algorithm \citep{Wiegerinck2002} is defined as a result of approximating the entropy of the variational distribution $b(\mathbf{x})$.
As explained above, $\alpha$-BP is derived differently, but has the same update equation as Fractional BP.

The message update equations of Fractional BP and $\alpha$-BP are respectively defined in Equation (17) of \citet{Wiegerinck2002}) and in Equation (17) of \citet{Liu2020}, and take the following form:
\begin{equation}\label{eq:damped-FBP-particular-damping}
    m_{i \to j}^{\text{new}}(x_j) \propto \Big( \sum_{x_i} \psi_{ij}(x_i,x_j)^{\alpha_{ij}} \psi_i(x_i) \prod\limits_{k \in \mathcal{N}(i) \setminus j} m_{k \to i}(x_i) \: m_{j \to i}(x_i)^{1 - \alpha_{ij}}\Big) m_{i \to j}(x_j)^{1 - \alpha_{ij}}
\end{equation}

Note that Fractional BP is derived as this general Reweighted BP algorithm presented above (but with $(\bm{\kappa}, \bm{\gamma}, \bm{\beta}) = (\bm{1}, \bm{1}, \bm{1})$), and is defined with damping, i.e., uses Equation \eqref{eq:damped-eFBP-message} rather than Equation \eqref{eq-dem:eFBP-message}.

\paragraph{Reweighted BP}
The message update equation associated to the general Reweighted BP algorithm is recalled here:
\begin{equation}\label{eq:eFBP-message-def}
    m_{i \to j}^{\text{new}}(x_j) \propto \Bigg( \sum_{x_i} \psi_{ij}(x_i,x_j)^{\alpha_{ij} \beta_{ij}} \Big(\psi_i(x_i)^{\gamma_i} \prod\limits_{k \in \mathcal{N}(i) \setminus j} m_{k \to i}(x_i) \: m_{j \to i}(x_i)^{1 - \alpha_{ij}/\kappa_i}\Big)^{\kappa_i}\Bigg)^{1/\alpha_{ij}}
\end{equation}

\paragraph{Variational message-passing}
Variational message-passing \citep{Winn2005}, or mean-field inference, is the message-passing version of the mean-field method \citep{Peterson1987}. Its message update equation is:
\begin{equation}
     m_{i \to j}^{\text{new}}(x_j) \propto \exp\bigg(\sum_{x_i} \log\big(\psi_{ij}(x_i,x_j)\big) \psi_i(x_i) \prod\limits_{k \in \mathcal{N}(i)} m_{k \to i}(x_i)\bigg)
\end{equation}
Mean-field is known to be overconfident and perform poorly compared to BP \citep{Weiss2001, Mooij2004}.

\subsection{Relations between Reweighted BP algorithms}

\paragraph{Equivalence between damped Fractional BP, Power EP, and $\alpha$-BP}
The Fractional Belief Propagation algorithm closely relates to several approximate inference algorithms that themselves extended BP, such as Power EP \citep{Minka2002, Minka2004}, $\alpha$-BP \citep{Liu2019, Liu2020}, Tree-reweighted BP 
\citep{Wainwright2002, Wainwright2003,  Wainwright2005}, and Variational message-passing \citep{Winn2005}.
Here we show the equivalence between Fractional BP, Power EP, and $\alpha$-BP , if we consider a damped version of the algorithms.
These models are conceptually very similar to each other \citep{Minka2005}. 
In fact, the three algorithms are identical up to the amount of damping, where damping consists of taking partial message update steps \citep{Murphy1999}. Indeed, damped Fractional BP, defined similarly to damped BP (see section \ref{sec:damping}) from the undamped Fractional BP message update equation in Equation \eqref{eq:FBP-message}, is written:
\begin{equation}\label{eq:damped-FBP-message}
    m_{i \to j}^{\text{new}}(x_j) \propto \Bigg( \sum_{x_i} \psi_{ij}(x_i,x_j)^{\alpha_{ij}} \psi_i(x_i) \prod\limits_{k \in \mathcal{N}(i) \setminus j} m_{k \to i}(x_i) \: m_{j \to i}(x_i)^{1 - \alpha_{ij}}\Bigg)^{(1-\epsilon_{i \to j})/\alpha_{ij}} \times m_{i \to j}(x_j)^{\epsilon_{i \to j}}
\end{equation}
and corresponds to Equation \eqref{eq:damped-FBP-particular-damping} (defining Fractional BP in \citet{Wiegerinck2002}) for the particular damping value $\epsilon_{i \to j} = 1 - \alpha_{ij}$.
When it comes to Power EP \citep{Minka2004}, the damping does not appear from the derivations of the algorithm but Power EP is still presented with the possibility of having (any) damping $\bm{\epsilon}$ (see Equations (22) and (23) of \citet{Minka2004}). It is stated in the paper that with the particular value of damping $\epsilon_{i \to j} = 1 - \alpha_{ij}$, the algorithm is ``convenient computationally and tends to have good convergence properties". Interestingly, with this particular value of damping, it is shown that the Power EP (and equivalently, $\alpha$-BP and Fractional BP) proposed implements a minimization of the $\alpha$-divergence (see also the work on $\alpha$-BP \citep{Liu2020} as well as \citet{Minka2005}). 

Again, the damping values only alters the convergence properties of the algorithm but not the fixed points. A consequence is that if the system has a single fixed point and the undamped algorithm converges (respectively damped), then the damped (respectively undamped) algorithm converges and the convergence value is identical between the damped and undamped algorithms.
In the rest of this work, we will consider these algorithms as one: all algorithms implemented in the simulations do not have damping in order not to confound the effect of having different algorithms (for instance, Circular BP versus Fractional BP) with the effects coming from using different damping values.

\paragraph{Tree-reweighted BP as special case of Fractional BP}
Tree-reweighted BP 
\citep{Wainwright2002, Wainwright2003,  Wainwright2005} is a particular case of the algorithms mentioned above. The message update equation of Tree-reweighted BP is identical to Equation \eqref{eq:FBP-message}. The only difference is that in Tree-reweighted BP, $\alpha_{ij}$ symbolizes the inverse appearance probability of edge $(i,j)$ in the set of spanning trees and therefore it imposes the constraint $\alpha_{ij} \geq 1$ \citep{Minka2005}. Note that Fractional BP does not impose any constraints on the value of $\alpha_{ij}$, which even could be negative.

\paragraph{BP and Reweighted BP}
The special case of Belief Propagation is recovered for $(\bm{\alpha}, \bm{\kappa}, \bm{\beta}, \bm{\gamma}) = (\mathbf{1}, \mathbf{1}, \mathbf{1}, \mathbf{1})$.

\paragraph{Fractional BP and Reweighted BP}
The Reweighted BP algorithm generalizes Fractional BP \citep{Wiegerinck2002}, Power EP \citep{Minka2004} and $\alpha$-BP \citep{Liu2019} which all correspond to $(\bm{\kappa}, \bm{\beta}, \bm{\gamma}) = (\bm{1}, \bm{1}, \bm{1})$, and more particularly use the damped message update equation \eqref{eq:damped-eFBP-from-Lagrangian} rather than its undamped version \eqref{eq:eFBP-from-Lagrangian} (see damping in section \ref{sec:damping}). 

Note that the update equation for Power EP (Equation \eqref{eq:FBP-message}) is not identical to the one given in the Fractional BP paper \citep{Wiegerinck2002}, but is the same up the amount of damping in the algorithm. 

\paragraph{Variational message-passing as special case of Fractional BP}
Variational message-passing \citep{Winn2005} is the same as Fractional BP for $\bm{\alpha} = \bm{0}$ \citep{Wiegerinck2002, Minka2005}. More precisely, Fractional BP is not defined for $\bm{\alpha} = \bm{0}$. However, taking $\bm{\alpha} \to \bm{0}$ and $(\bm{\kappa}, \bm{\beta}, \bm{\gamma}) = (\bm{1}, \bm{1}, \bm{1})$ in the Gibbs free energy approximation (Equation \eqref{eq:modif-Bethe-Free-Energy-eFBP-pairwise}) leads to the Variational mean-field free energy $G_b^{MF}$, where
\begin{equation}
    G_b^{MF} = - \sum\limits_{(i,j)}  \sum\limits_{(x_i,x_j)} b_{i}(x_i) b_{j}(x_j) \log\big(\psi_{ij}(x_i,x_j)\big) + \sum\limits_i \sum\limits_{x_i} b_i(x_i) \log\big(b_i(x_i)\big) - \sum\limits_i \sum\limits_{x_i}  b_i(x_i) \log\big(\psi_i(x_i)\big)
\end{equation}
Indeed, if $\bm{\alpha} \to \bm{0}$, then terms in $\frac{b_{ij}(x_i,x_j)}{b_i(x_i) b_j(x_j)}$ dominate in the Gibbs free energy approximation, which forces the variational marginals to factorize: $b_{ij}(x_i,x_j) = b_i(x_i) b_j(x_j)$. 
Eventually, similarly to section \ref{sec:from-gibbs-FE-approx-to-messages}, one obtains the Variational message-passing update equation given in section \ref{subsec:def-algos}. 
Note that variational message-passing is different from Circular BP with $\bm{\alpha} = \bm{0}$. 



\subsection{Circular BP as approximation of Reweighted BP}\label{subsec:eCBP}

\paragraph{Definition of Circular BP}
Circular BP consists of running the following message update equation:
\begin{equation}\label{eq-annex:eCBP-message}
    m_{i \to j}^{\text{new}}(x_j) \propto \sum_{x_i} \psi_{ij}(x_i,x_j)^{\beta_{ij}} \Big(\psi_i(x_i)^{\gamma_i} \prod\limits_{k \in \mathcal{N}(i) \setminus j} m_{k \to i}(x_i) \: m_{j \to i}(x_i)^{1 - \alpha_{ij} / \kappa_i}\Big)^{\kappa_i}
\end{equation}
The approximate marginals or beliefs are computed using:
\begin{equation}\label{eq-annex:eCBP-belief}
    b_i(x_i) \propto \Bigg(\psi_i(x_i)^{\gamma_i} \prod\limits_{k \in \mathcal{N}(i)} m_{k \to i}(x_i)\Bigg)^{\kappa_i}
\end{equation}
This algorithm admits as particular case BP (which corresponds to $(\bm{\alpha}, \bm{\beta}, \bm{\kappa}, \bm{\gamma}) = (\bm{1}, \bm{1}, \bm{1}, \bm{1})$)
as well as the algorithm from \citet{Jardri2013}, also called \emph{Circular BP} (which corresponds to $(\bm{\beta}, \bm{\kappa}, \bm{\gamma}) = (\bm{1}, \bm{1}, \bm{1})$). 

For probability distributions over binary variables, this translates into the following equation in the log-domain:
\begin{empheq}[left=\empheqlbrace]{align}
  &M_{i \to j}^{\text{new}} = f\Big(B_i - \alpha_{ij} M_{j \to i},\, \psi_{ij}^{\beta_{ij}}\Big) \label{eq-annex:eCBP-message-log}
  \\
  &B_i = \kappa_i \Big(\sum\limits_{j \in \mathcal{N}(i)} M_{j \to i} + \gamma_i M_{\text{ext} \to i}\Big) \label{eq-annex:eCBP-belief-log}
\end{empheq}
where $B_i \equiv \frac{1}{2} \log\big(\frac{b_i(x_i=+1)}{b_i(x_i=-1)}\big)$, $M_{i \to j} \equiv \frac{1}{2} \log\big(\frac{m_{i \to j}(x_j=+1)}{m_{i \to j}(x_j=-1)}\big)$, and $M_{\text{ext} \to i} \equiv \frac{1}{2} \log\big(\frac{\psi_i(x_i = +1)}{\psi_i(x_i = -1)}\big)$.\\ $f$ is a sigmoidal function of its first argument given by:
\begin{equation}\label{eq:def-f_ij-general-case}
    f(x,\, \psi_{ij}^{\beta_{ij}}) = \frac{1}{2} \log \Bigg(\frac{\big(\psi_{ij}(x_i=+1,x_j=+1)\big)^{\beta_{ij}} e^{2x} + \big(\psi_{ij}(x_i=-1,x_j=+1)\big)^{\beta_{ij}}}{\big(\psi_{ij}(x_i=+1,x_j=-1)\big)^{\beta_{ij}} e^{2x} + \big(\psi_{ij}(x_i=-1,x_j=-1)\big)^{\beta_{ij}}} \Bigg)
\end{equation}
in the general case
, and 
\begin{equation}\label{eq:def-f_ij_eCBP}
    f(x,\, \psi_{ij}^{\beta_{ij}}) = \arctanh\Big(\tanh(\beta_{ij} J_{ij}) \tanh(x)\Big) 
\end{equation}
in the specific case of Ising models for which $\psi_{ij}(x_i, x_j) \propto \exp(J_{ij} x_i x_j)$. 
\begin{proof}
Use Equation \eqref{eq:def-f_ij-general-case}, $\log(x) = 2 \arctanh\big(\frac{x-1}{x+1}\big)$, and $\tanh(x) = 2 \sigma(2x) - 1 = \frac{e^{2x}-1}{e^{2x}+1}$.
\end{proof}

\paragraph{Link to \citet{Jardri2013}}
\citet{Jardri2013} defines an algorithm which corresponds to a special case of Circular BP: $\bm{\kappa}, \bm{\beta}, \bm{\gamma} = (\bm{1}, \bm{1}, \bm{1})$. However, this algorithm was not designed for approximate inference, but instead to disturb the exactness of BP in graphs without cycles.

\paragraph{Circular BP as approximation of Reweighted BP}

Circular BP is not associated to any choice of ($\bm{\alpha}, \bm{\kappa}, \bm{\beta}, \bm{\gamma}$) in Reweighted BP, thus does not come from any modification of the Bethe free energy (and does not relate either to any other approximation of the Gibbs free energy). Instead, as explained below, CBP can be seen as an approximation of Reweighted BP.
Circular BP can be seen as an approximation to the general Reweighted BP algorithm defined above. This is not obvious from the general message update equations (see Equation \eqref{eq:eFBP-message} for Reweighted BP and Equation \eqref{eq-annex:eCBP-message} for Circular BP), but can be justified mathematically in the case where the distribution $p(\mathbf{x})$ considered is an Ising model, that is, a probability distribution over binary variables, involving at most pairwise interactions, and with particular pairwise interactions ($\psi_{ij}(x_i, x_j) \propto \exp(J_{ij} x_i x_j)$).
In the case of Ising models, Reweighted BP and Circular BP are indeed written identically, except Reweighted BP uses function $g$ and Circular BP uses $f$ (see Equations \eqref{eq-dem:eFBP-message-log-pairwise}-\eqref{eq-dem:eFBP-belief-log-pairwise} versus \eqref{eq-annex:eCBP-message-log}-\eqref{eq-annex:eCBP-belief-log}).
However, functions $f$ and $g$ are approximately equal: 
\begin{equation}
    g(x, \beta J, \alpha) \equiv \frac{1}{\alpha} \arctanh\Bigg(\tanh\Big(\alpha \beta J\Big) \tanh\big(x\big)\Bigg)
    \approx \arctanh\Bigg(\tanh\Big(\beta J\Big) \tanh\big(x\big)\Bigg)
    \equiv f(x,\, \beta J)
\end{equation}
In summary, the Circular BP algorithm strongly relates to the Reweighted BP algorithm; neither of these two algorithms generalizes the other one, but their message update equations are approximately the same.



\begin{figure}
  \centering
  \includegraphics[width=0.8\linewidth]{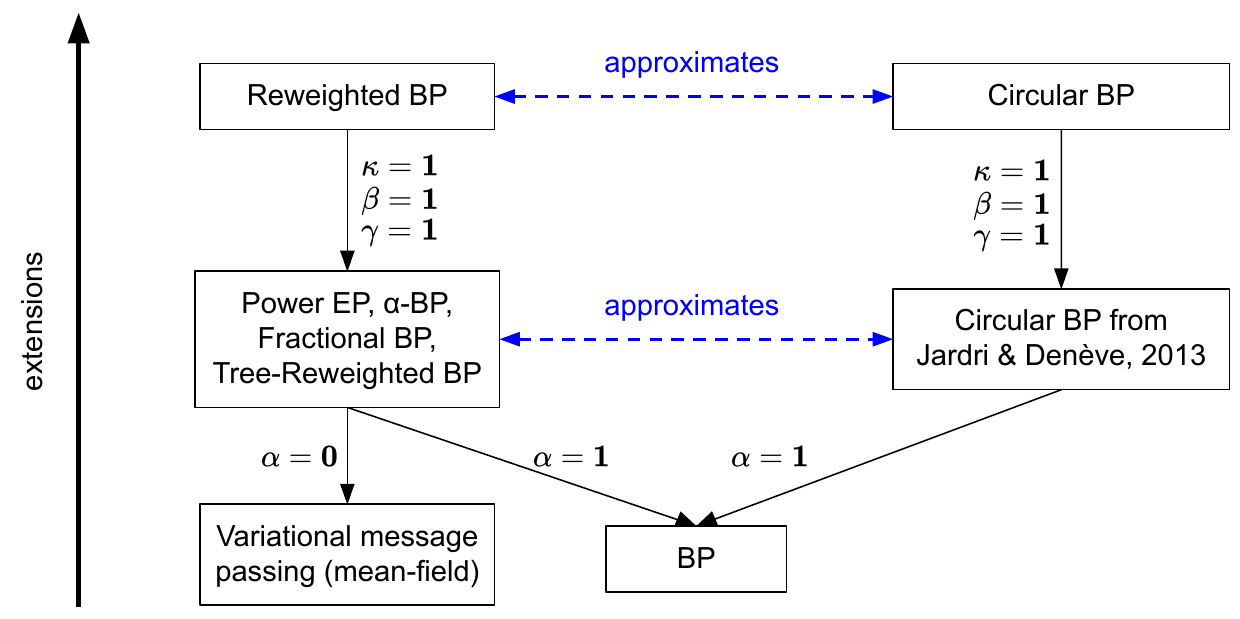} 
  \caption{\textbf{Relations between algorithms}. Algorithms on the first line (Reweighted and Circular BP) have parameters $(\bm{\alpha}, \bm{\kappa}, \bm{\beta}, \bm{\gamma})$. Algorithms on the second line have parameter $\bm{\alpha}$. Algorithms on the third line do not have parameters. 
  }
  \label{fig:relations-algorithms}
\end{figure}

\paragraph{Circular BP and Reweighted BP}
As seen in section \ref{subsec:eCBP}, the update equation of Circular BP algorithm approximates the one of Reweighted BP: indeed, in the Ising model case, update equations are identical up to the function used ($g$ for Reweighted BP and $f$ for Circular BP) and functions are approximately equal ($f(x, J) \approx g(x, J, \alpha)$).

\paragraph{Circular BP from \citet{Jardri2013} and Fractional BP}
Likewise, Circular BP from \citet{Jardri2013} (which corresponds to the case $(\bm{\kappa}, \bm{\beta}, \bm{\gamma}) = (\bm{1}, \bm{1}, \bm{1})$ of the Circular BP algorithm defined in this paper) and Fractional BP (which corresponds to the case $(\bm{\kappa}, \bm{\beta}, \bm{\gamma}) = (\bm{1}, \bm{1}, \bm{1})$ of the Reweighted BP algorithm of Equation \eqref{eq:eFBP-message-def} up to the damping value).

Circular BP does not correspond to any choice of $(\bm{\alpha}, \bm{\kappa}, \bm{\beta}$, or $\bm{\gamma})$. However, because functions $g$ and $f$ (both in the case $\bm{\beta} = \bm{1}$) are similar: 
\begin{equation}
    g(x, J, \alpha) \equiv \frac{1}{\alpha} \arctanh\Bigg(\tanh\Big(\alpha \beta J\Big) \tanh\big(x\big)\Bigg)
    \approx \tanh^{-1}\Bigg(\tanh\Big(\beta J\Big) \tanh\big(x\big)\Bigg) 
    \equiv f(x, J)
\end{equation}
, then the message update equation of Circular BP approximates the update equation corresponding to $(\bm{\kappa}, \bm{\beta}, \bm{\gamma}) = (\bm{1}, \bm{1}, \bm{1})$.

\paragraph{Comparing Fractional BP and Circular BP}

In this paragraph, we show the similarity between Fractional BP (Reweighted BP in the particular case $(\bm{\kappa}, \bm{\beta}, \bm{\gamma}) = (\bm{1}, \bm{1}, \bm{1})$) and Circular BP (also in the particular case $(\bm{\kappa}, \bm{\beta}, \bm{\gamma}) = (\bm{1}, \bm{1}, \bm{1})$). Note that this particular case of Circular BP corresponds to the algorithm from \citet{Jardri2013}.


The message-update equations for the two algorithms (respectively Equation \eqref{eq:FBP-message} and Equation \eqref{eq:eCBP-message}) are very similar. It becomes even more obvious when considering an Ising model:
both write as:
\begin{empheq}[left=\empheqlbrace]{align}
  &M_{i \to j}^{\text{new}} = g(B_i - \alpha_{ij} M_{j \to i},\, J_{ij},\, \alpha_{ij})\label{eq-repeat:CBP-message-message-log}\\
  &B_i = \sum_{j \in \mathcal{N}(i)} M_{j \to i} + M_{\text{ext} \to i} \label{eq-repeat:CBP-beliefs-log-odds}
\end{empheq}
where $g(x, J, \alpha) \equiv \frac{1}{\alpha} f(x, \alpha J)$ for Fractional BP, and $g(x, J, \alpha=1) = f(x,\, J)$ for Circular BP.
Interestingly, 
functions $g$ and $f$ are close to each other:
\begin{equation}
    g(x, J, \alpha) \equiv \frac{1}{\alpha} \arctanh\Bigg(\tanh\Big(\alpha J\Big) \tanh\big(x\big)\Bigg)
    \approx \tanh^{-1}\Bigg(\tanh\Big(J\Big) \tanh\big(x\big)\Bigg) 
    \equiv f(x, J)
\end{equation}
which can be easily justified mathematically for $\alpha$ or $J$ small enough, and can be seen in practice for various $J$ and $\alpha$ in Figure \ref{fig:FBP-vs-CBP}A.
This means that the message update equation of Fractional BP approximates the update equation of Circular BP.
A consequence of the similarity between their update equations is that Circular BP and Fractional BP yield similar results; see Figure \ref{fig:FBP-vs-CBP}C and D.

\begin{figure}[h] 
  \centering
  \includegraphics[width=0.8\linewidth]{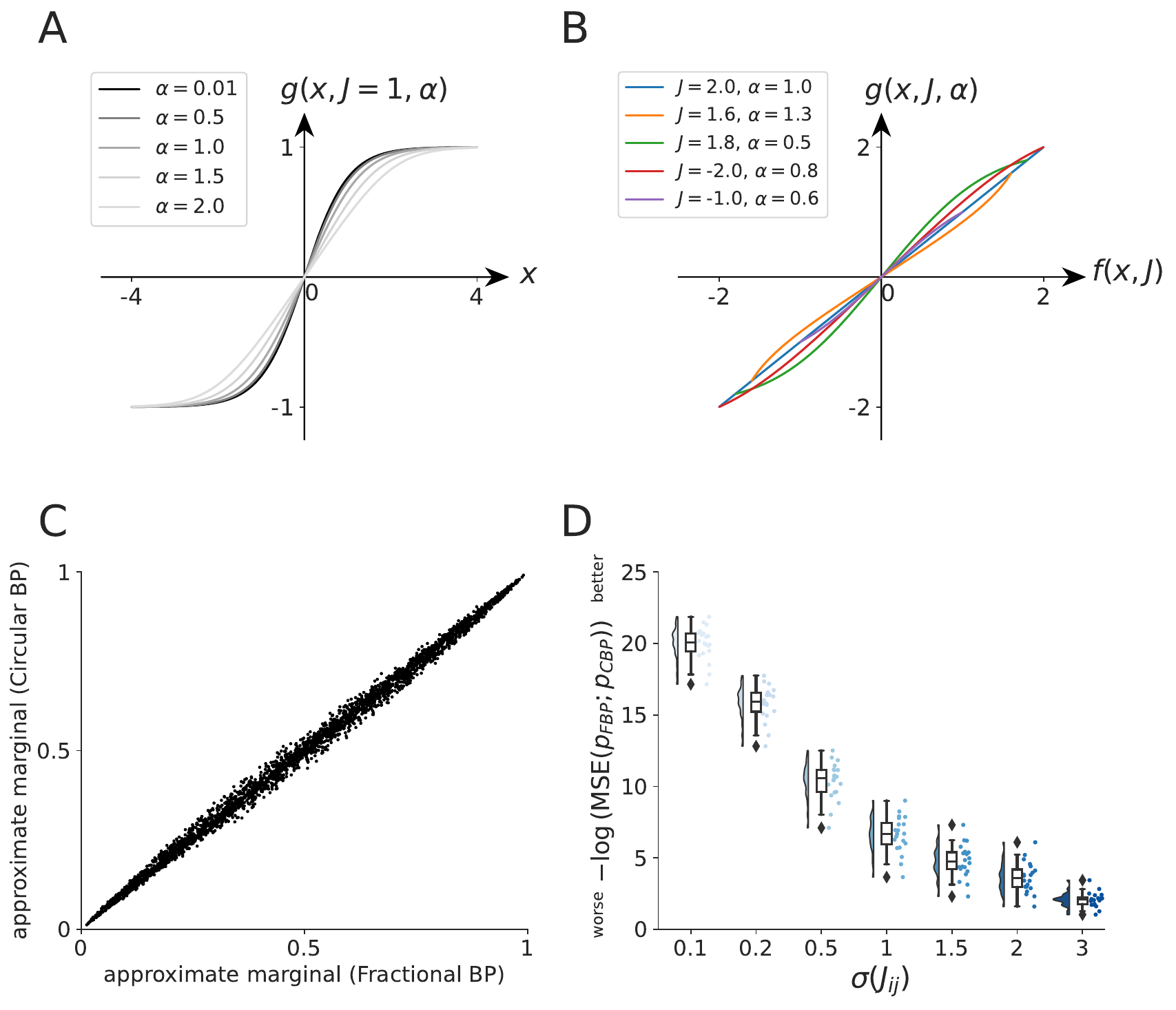} 
  \caption{\textbf{Comparison between Fractional BP and Circular BP}. 
  \textbf{(A)} Visual comparison between the update functions for Fractional BP (function $g$, which depends on the strength of the relationship $J$ and on $\alpha$) and Circular BP (function $f$ which depends only on $J$ and corresponds to $g$ for $\alpha = 1$) for various $\alpha$ and given $J = 1$. All functions have the identical sigmoidal shape. \textbf{(B)} The update functions for Circular BP ($f)$ and for Fractional BP ($g$) are approximately identical, for various $J$ and $\alpha$. \textbf{(C)} Example of marginals produced by the Fractional BP algorithm versus produced by the Circular BP algorithm for a given random graph with randomly generated weights and random $\bm{\alpha}$. \textbf{(D)} Influence of the interaction strength $\sigma(J)$ on the approximation: lower weights lead to a better fit between Circular BP and Fractional BP. One point represents different randomly generated generated $\bm{\alpha}$ for the same graph.}
  \label{fig:FBP-vs-CBP} 
\end{figure}

\include{Reweighted_BP_algorithms.tex}
\include{relations_algorithms.tex}
\section{Damped Circular BP}\label{sec:damping}

Section \ref{sec:damping} gives a definition of damped CBP. 

\paragraph{Notion of damping}

Let a discrete system: 
\begin{equation*}
    x_{t+1} = f(x_t)
\end{equation*}
A common technique to improve convergence of the system is to take partial (or damped) update steps: 
\begin{equation*}\label{eq:damped-system}
    x_{t+1} = (1-\epsilon) f(x_t) + \epsilon x_t
\end{equation*}
with $\epsilon \in [0;1[$.
This procedure, called \emph{damping}, does not modify the fixed points of the system: a fixed point of the original system is a fixed point of the damped system, and reciprocally.

A parallel can be drawn between the damped discrete system and the following continuous system:
\begin{equation}
    \tau \dot{x}(t) = -x(t) + f(x(t)) 
\end{equation}
Indeed, this continuous system can be discretized (Euler approximation) into:
\begin{equation}
    x_{t+\delta t} = (1-\epsilon) f(x_t) + \epsilon x_t
\end{equation}
with $\epsilon = 1- \delta t/\tau$ (i.e., $\tau = \delta t/(1 - \epsilon)$). This corresponds to the discrete damped system.

\paragraph{Damped Circular BP}

We start by rewriting the message update equation for Circular BP (Equation \eqref{eq:eCBP-message}) without damping:
\begin{equation}\label{eq-repeat:eCBP-message}
    m_{i \to j}^{\text{new}}(x_j) \propto \sum_{x_i} \psi_{ij}(x_i,x_j)^{\beta_{ij}} \Big(\psi_i(x_i)^{\gamma_i} \prod\limits_{k \in \mathcal{N}(i) \setminus j} m_{k \to i}(x_i) \: m_{j \to i}(x_i)^{1 - \alpha_{ij}/\kappa_i}\Big)^{\kappa_i}
\end{equation}
and its equivalent in the log-domain for probability distributions with binary variables (Equation \eqref{eq:eCBP-message-log}):
\begin{equation}
M_{i \to j}^{\text{new}} = f(B_i - \alpha_{ij} M_{j \to i},\, \beta_{ij} J_{ij})
\end{equation}
where $M_{i \to j} \equiv \frac{1}{2} \log\big(\frac{m_{i \to j}(x_j=+1)}{m_{i \to j}(x_j=-1)}\big)$, $M_{\text{ext} \to i} \equiv \frac{1}{2} \log\big(\frac{\psi_i(x_i = +1)}{\psi_i(x_i = -1)}\big)$, $B_i \equiv \frac{1}{2} \log\big(\frac{b_i(x_i=+1)}{b_i(x_i=-1)}\big) = \kappa_i \Big(\sum\limits_{j \in \mathcal{N}(i)} M_{j \to i} + \gamma_i M_{\text{ext} \to i}\Big)$, and function $f$ is given in Equation \eqref{eq:def-f_ij-general-case}.

Damping is a technique commonly used while running Belief Propagation (special case $(\bm{\alpha}, \bm{\kappa}, \bm{\gamma}, \bm{\beta}) = (\bm{1}, \bm{1}, \bm{1}, \bm{1})$), and consists of taking partial message update steps in the log-space: $M_{i \to j}^{\text{new}} = f(B_i - M_{j \to i},\, \beta_{ij} J_{ij})$ becomes $M_{i \to j}^{\text{new}} = (1-\epsilon) f(B_i - M_{j \to i},\, \beta_{ij} J_{ij}) + \epsilon M_{i \to j}$. Damping improves the convergence properties of BP. We talk about \emph{damped Belief Propagation} in this case \citep{Murphy1999}. 

Similarly to damped BP, a damped Circular BP algorithm can be defined, by taking partial message update steps in the log-space. The message update equation for the damped algorithm is:
\begin{equation}\label{eq:damped-eCBP-message}
    m_{i \to j}^{\text{new}}(x_j) \propto \Bigg( \sum_{x_i} \psi_{ij}(x_i,x_j)^{\beta_{ij}} \Big(\psi_i(x_i)^{\gamma_i} \prod\limits_{k \in \mathcal{N}(i) \setminus j} m_{k \to i}(x_i) \: m_{j \to i}(x_i)^{1 - \alpha_{ij} /\kappa_i} \Big)^{\kappa_i}\Bigg)^{1-\epsilon_{i \to j}} \times m_{i \to j}(x_j)^{\epsilon_{i \to j}}
\end{equation}
where $\epsilon_{i \to j}$ is the damping factor associated to the oriented edge $i \to j$ ($\bm{\epsilon} = \bm{0}$ means no damping, i.e., standard CBP) and is often taken uniformly over the edges.
It is equivalent in the log-domain to the following equation: 
\begin{equation}\label{eq:damped-eCBP-message-log}
    M_{i \to j}^{\text{new}} = (1-\epsilon_{i \to j}) f(B_i - \alpha_{ij} M_{j \to i},\, \beta_{ij} J_{ij}) + \epsilon_{i \to j} M_{i \to j}
\end{equation}
where
\begin{equation}\label{eq:damped-eCBP-belief-log}
    B_i = \kappa_i \Big(\sum\limits_{j \in \mathcal{N}(i)} M_{j \to i} + \gamma_i M_{\text{ext} \to i}\Big)
\end{equation}
\section{Circular BP in general factor graphs}\label{sec:CBP-in-general-factor-graphs}

In the main paper, we only considered Ising models, which are a particular class of pairwise factors graphs (that is, probability distributions $p(x)$ which could be written as the product of unitary and pairwise potentials: $p(\mathbf{x}) \propto \prod_{(i,j)} \psi_{ij}(x_i,x_j) \prod_i \psi_i(x_i)$), acting on binary variables where potentials take a particular form ($\psi_{ij}(x_i,x_j) = \exp(J_{ij} x_i x_j)$).
In this section, we present the general case:
we formulate the Circular BP algorithm in general factor graphs (instead of pairwise graphs) $p(\mathbf{x}) \propto \prod\limits_c \psi_c (\mathbf{x}_c)$ where variables can be continuous.

Circular BP algorithm can be defined on general factor graphs similarly to the pairwise factor case:
\begin{equation}\label{eq:eCBP-message-general} 
    m_{\psi_c \to x_j}^{\text{new}}(x_j) \propto \sum_{\mathbf{x}_c \setminus x_j} \psi_c(\mathbf{x}_c)^{\beta_c} \Bigg( \prod\limits_{x_i \in \mathcal{N}(\psi_c) \setminus x_j} \psi_i(x_i)^{\gamma_i} \prod\limits_{x_i \in \mathcal{N}(\psi_c) \setminus x_j} \Big[ \prod\limits_{\psi_{d} \in \mathcal{N}(x_i) \setminus \psi_c} m_{\psi_{d} \to x_i}(x_i) \times m_{\psi_c \to x_i}(x_i)^{1 - \alpha_c / \kappa_i}\Big]\Bigg)^{\kappa_i}
\end{equation}
and produces the following approximate marginals:
\begin{equation}
    b_i(x_i) \propto \psi_i(x_i)^{\gamma_i \kappa_i} \prod\limits_{\psi_c \in \mathcal{N}(x_i)} m_{\psi_c \to x_i}(x_i)^{\kappa_i} 
\end{equation}

Note that Equation \eqref{eq:eCBP-message-general} can be rewritten as a system of equations involving both factor-to-variable messages and variable-to-factor messages, instead of factor-to-variables messages alone:
\begin{empheq}[left=\empheqlbrace]{align}
  & m_{\psi_c \to x_i}^{\text{new}}(x_i) \propto \sum\limits_{\mathbf{x}_c \setminus x_i} \psi_c(\mathbf{x}_c)^{\beta_c} \prod\limits_{x_j \in \mathcal{N}(\psi_c) \setminus x_i} m_{x_j \to \psi_c}(x_j) \label{eq:eCBP-message-factor-to-variable-general}
  \\
  & m_{x_j \to \psi_d}^{\text{new}}(x_j) \propto \psi_{j}(x_j)^{\gamma_j \kappa_j} \Bigg(\prod\limits_{\psi_c \in \mathcal{N}(x_j) \setminus \psi_d} m_{\psi_c \to x_j}(x_j)\Bigg)^{\kappa_j} m_{\psi_d \to x_j}(x_j)^{\kappa_j - \alpha_{d}} \label{eq:eCBP-message-variable-to-factor-general}
\end{empheq}

\subsection{Inference in graphical models}

A general Markov random field $p(\mathbf{x})$ can be written:
\begin{equation}
    p(\mathbf{x}) \propto \prod\limits_c \psi_c (\mathbf{x}_c)
\end{equation}
Potentials $\psi_c$ are called "factors" and are associated to a clique $\mathbf{x}_c$ ($\mathbf{x}_c$ is a group of variables $\{x_i\}$ or simply one variable $x_i$). As Figure \ref{fig:general-factor-graph} shows, the probability distribution can be represented graphically as a factor graph composed of variable nodes $x_i$ and factor nodes $\psi_c$, with links between $\psi_c$ and all the variable nodes present in (vector) $\mathbf{x}_c$.

\begin{figure}[h]
  \centering
  \includegraphics[width = 0.9\linewidth]{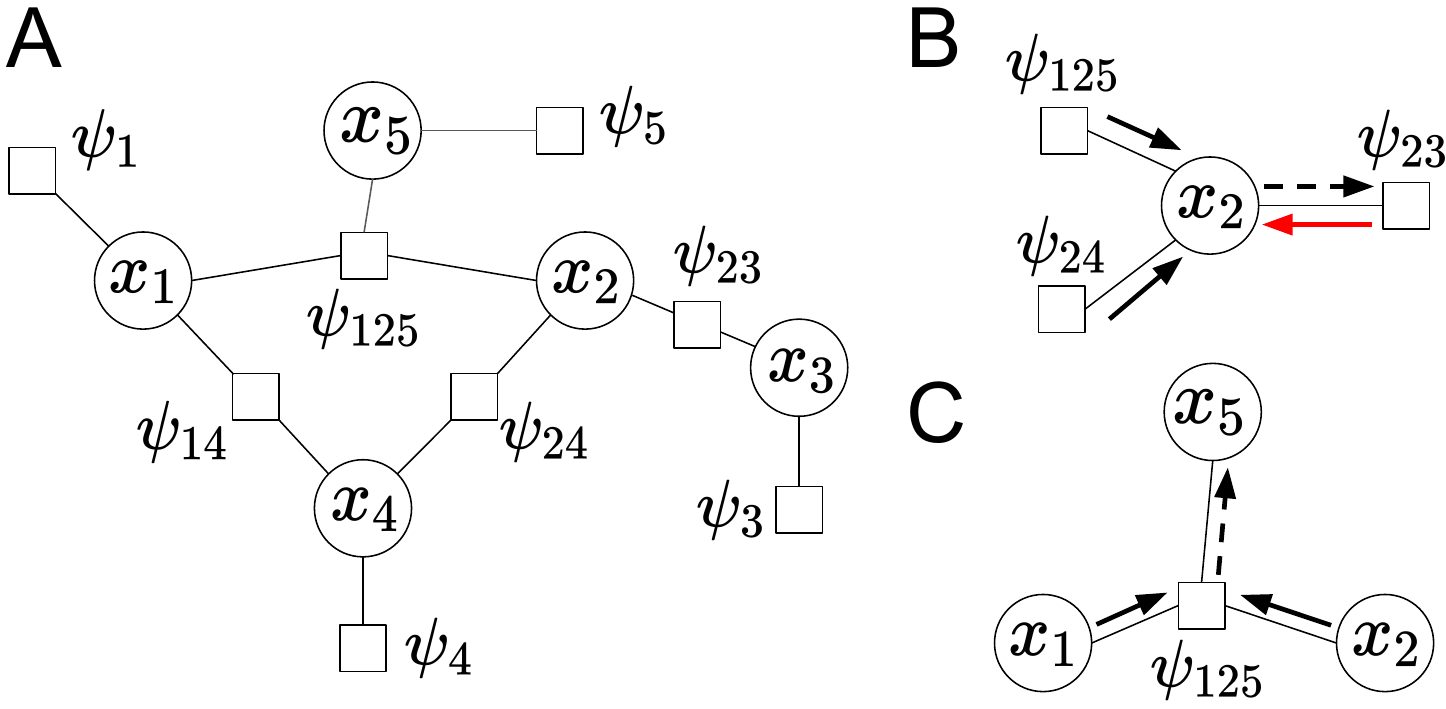}
  \vspace*{1mm} 
  \caption{Belief Propagation and Circular Belief Propagation on a general factor graph. \textbf{(A)} Example of a factor graph with higher-order interactions (i.e., not only unitary and pairwise factors), here representing the probability distribution $p(\mathbf{x}) = \psi_{125}(x_1, x_2, x_5) \psi_{23}(x_2, x_3) \psi_{14}(x_1, x_4) \psi_{24}(x_2, x_4) \psi_1(x_1) \psi_3(x_3) \psi_4(x_4) \psi_5(x_5)$. \textbf{(B)} Belief Propagation updates a given variable-to-factor message (from $x$ to $\psi$, dotted black line) according to the messages received by node $x$ from other factor nodes than $\psi$ (full black lines). For Circular BP, the opposite message (coming from $\psi$ to $x$, full red line) is also taken into account. \textbf{(C)} Both BP and Circular BP update factor-to-variable messages (from $\psi$ to $x$, dotted black line) according to the messages collected by $\psi$ from other variable nodes than $x$ (full black lines), and the interaction function $\psi$.}
  \label{fig:general-factor-graph}
\end{figure}

\subsection{Belief Propagation}

Belief Propagation can be defined on a factor graph \citep{Kschischang2001}. At every iteration, the algorithm updates messages from factor nodes to variable nodes and from variable nodes to factor nodes:
\begin{empheq}[left=\empheqlbrace]{align}
  &m_{\psi_c \to x_i}^{\text{new}}(x_i) \propto \sum\limits_{\mathbf{x}_c \setminus x_i} \psi_c(\mathbf{x}_c) \prod\limits_{x_j \in \mathcal{N}(\psi_c) \setminus x_i} m_{x_j \to \psi_c}(x_j) \label{eq:BP-message-factor-to-variable-general}
  \\
  & m_{x_j \to \psi_d}^{\text{new}}(x_j) \propto \psi_{j}(x_j) \prod\limits_{\psi_c \in \mathcal{N}(x_j) \setminus \psi_d} m_{\psi_c \to x_j}(x_j) \label{eq:BP-message-variable-to-factor-general}
\end{empheq}
where $\mathcal{N}(x)$ are the neighbors of node $x$ in the factor graph. Neighbors of variable nodes are factor nodes, and reciprocally.

Once messages have converged (or at some given maximum iteration), approximate marginal probabilities or \emph{beliefs} are computed as:
\begin{equation}\label{eq:BP-belief-general}
    b_i(x_i) \propto \psi_i(x_i) \prod_{\psi_c \in \mathcal{N}(x_i)} m_{\psi_c \to x_i}(x_i)
\end{equation}

A message $m_{\psi_c \to x_i}(x_i)$ from factor node to variable node correspond to probabilistic information about variable $x_i$ collected by the factor node $\psi_c$. The message is based on the information available elsewhere in the graph (observed variables, prior distribution over variables) received by $\psi_c$, and takes into account the probabilistic interactions between $x_i$ and its neighbors (i.e., the interaction factor $\psi_c$).

A message $m_{x_j \to \psi_d}(x_j)$ from variable node to factor node is simply the sum (in the log-domain) of the local information at $x_j$ (e.g., noisy observation or prior) with the messages received by $x_j$ from all factors neighboring $x_j$ except $\psi_d$.

Note that if factors are all pairwise (case considered in the main text: $\psi_c = \psi_{ij}$) then BP equations \eqref{eq:BP-message-factor-to-variable-general} and \eqref{eq:BP-message-variable-to-factor-general} can be written with messages going from factor to variable node only. We recover Equations \eqref{eq:BP-message} and \eqref{eq:BP-belief} from the main text by defining $m_{i \to j}(x_j) \equiv m_{\psi_c \to x_j}(x_j)$.

\subsection{Reweighted BP}

We consider here a modification of Belief Propagation based on a parametric approximation of the entropy of the approximating distribution $b(\mathbf{x})$, with the parameters $(\bm{\kappa}, \bm{\alpha}, \bm{\beta}, \bm{\gamma})$. $\kappa_i$ and $\gamma_i$ are assigned to the variable node $x_i$, while $\alpha_{c}$ and $\beta_c$ are assigned to the factor $\psi_c$. 
We obtain the following modified update equations (see following paragraph for the demonstration):
\begin{empheq}[left=\empheqlbrace]{align}
  & m_{\psi_c \to x_i}^{\text{new}}(x_i) \propto \bigg(\sum\limits_{\mathbf{x}_c \setminus x_i} \psi_c(\mathbf{x}_c)^{\alpha_c \beta_c} \prod\limits_{x_j \in \mathcal{N}(\psi_c) \setminus x_i} m_{x_j \to \psi_c}(x_j) \bigg)^{1 / \alpha_{c}} \label{eq:eFBP-message-factor-to-variable-general}
  \\
  & m_{x_j \to \psi_d}^{\text{new}}(x_j) \propto \psi_{j}(x_j)^{\gamma_j \kappa_j} \Bigg(\prod\limits_{\psi_c \in \mathcal{N}(x_j) \setminus \psi_d} m_{\psi_c \to x_j}(x_j)\Bigg)^{\kappa_j} m_{\psi_d \to x_j}(x_j)^{\kappa_j - \alpha_{d}} \label{eq:eFBP-message-variable-to-factor-general}
\end{empheq}
and beliefs (approximate marginal probabilities) are computed using:
\begin{equation}\label{eq:eFBP-belief-general}
    b_i(x_i) \propto \psi_i(x_i)^{\gamma_i \kappa_i} \bigg(\prod_{\psi_c \in \mathcal{N}(x_i)} m_{\psi_c \to x_i}(x_i)\bigg)^{\kappa_i}
\end{equation}
The special case where $(\bm{\kappa}, \bm{\alpha}, \bm{\beta}, \bm{\gamma}) = (\bm{1}, \bm{1}, \bm{1}, \bm{1})$ corresponds to BP.

Similarly to above, if factors are all pairwise (case considered in the main text) then the Reweighted BP equations \eqref{eq:eFBP-message-factor-to-variable-general} and
\eqref{eq:eFBP-message-variable-to-factor-general} can be written with messages going from factor to variable node only. We recover Equations \eqref{eq:eFBP-message} and \eqref{eq:eFBP-belief} from the main text by defining $m_{i \to j}(x_j) \equiv m_{\psi_c \to x_j}(x_j)$. 

\subsection{Theoretical background for Reweighted BP}

Here we provide the theoretical foundations underlying the modification of BP given in Equations \eqref{eq:eFBP-message-factor-to-variable-general}, \eqref{eq:eFBP-message-variable-to-factor-general}, and
\eqref{eq:eFBP-belief-general}.

Note that in the demonstration, we cover all the special cases of Reweighted BP, including BP (for which $(\bm{\alpha},\bm{\kappa}, \bm{\beta}, \bm{\gamma}) = (\bm{1}, \bm{1}, \bm{1}, \bm{1})$), as well as Fractional BP, Power EP and $\alpha$-BP (which all correspond to $(\bm{\kappa}, \bm{\beta}, \bm{\gamma}) = (\bm{1}, \bm{1}, \bm{1})$).

As stated in section \ref{sec:eCBP-theory} of the Appendix, the approach taken by BP to compute marginals $p(\mathbf{x})$, whose formula is known but whose marginals are hard to compute, is to approximate $p(\mathbf{x})$ with distribution $b(\mathbf{x})$ (called the \emph{variational} distribution) whose marginals are easier to compute.
The Gibbs free energy (that we would like to minimize) is given by:
\begin{equation}
    G_b = U_b - S_b
\end{equation}
where the variational average energy $U_b$ can be computed easily:
\begin{align}
    U_b &= \sum\limits_\mathbf{x} b(\mathbf{x}) E(\mathbf{x}) \notag \\
    &= - \sum\limits_\mathbf{x} b(\mathbf{x}) \sum\limits_{\text{cliques } c}   \log\Big(\psi_c(\mathbf{x}_c)\Big) - \sum\limits_x b(\mathbf{x}) \sum\limits_{i} \log\Big(\psi_{i}(x_i)\Big) \notag \\
    &= - \sum\limits_{\text{cliques } c}   \sum\limits_{\mathbf{x}_c} b_{c}(\mathbf{x}_c) \log\Big(\psi_c(\mathbf{x}_c)\Big) - \sum\limits_{i} \sum\limits_{x_i} b_i(x_i) \log\Big(\psi_{i}(x_i)\Big)
\end{align}
, contrary to the variational entropy $S_b$.

As it is not possible to easily compute $S_b$, Belief Propagation estimates it as if the factor graph representing $b(x)$ was a tree (i.e., was acyclic). This means that:
\begin{equation}\label{eq:approx-tree-general}
    b(\mathbf{x}) \approx \prod\limits_{\text{cliques } c} \Bigg(\cfrac{b_c(\mathbf{x}_c)}{\prod\limits_{i \in \mathcal{N}(c)} b_i(x_i)}\Bigg) \prod\limits_{\text{nodes } i} b_i(x_i)
\end{equation}
where $b_c(\mathbf{x}_c) \equiv \sum\limits_{x \setminus \mathbf{x}_c} b(\mathbf{x})$ (where for instance $\mathbf{x}_c = (x_1, x_2)$) and $b_i(x_i) \equiv \sum\limits_{x \setminus x_i} b(\mathbf{x})$.

The equation above can also be written:
\begin{equation*}
    b(\mathbf{x}) \approx \prod\limits_{\text{cliques } c} b_c(\mathbf{x}_c) \prod\limits_{\text{nodes } i} b_i(x_i)^{1 - |\mathcal{N}(i)|}
\end{equation*}
where $|\mathcal{N}(i)|$ is the number of neighbors of node $i$ in the graph representation of the distribution.

The approximation of $b(x)$ given in Equation \eqref{eq:approx-tree-general} is equivalent to approximating the entropy $S_b$ of $b(\mathbf{x})$ as follows:
\begin{align}
    -S_b & = \sum\limits_{\mathbf{x}} b(\mathbf{x}) \log(b(\mathbf{x})) \notag\\
    & \approx \sum\limits_{\mathbf{x}} b(\mathbf{x}) \sum\limits_{\text{cliques } c} \log(b_c(\mathbf{x}_c)) + \sum\limits_{\mathbf{x}} b(\mathbf{x}) \sum\limits_{\text{nodes } i} (1 - |\mathcal{N}(i)|) \log(b_i(x_i)) \notag\\
    & \approx \sum\limits_{c} \sum\limits_{\mathbf{x}_c} b_c(\mathbf{x}_c) \log(b_c(\mathbf{x}_c)) + \sum\limits_i (1 - |\mathcal{N}(i)|) \sum\limits_{x_i} b_i(x_i) \log(b_i(x_i))
\end{align}
In contrast, the Reweighted BP algorithm consists of approximating the variational distribution $b(\mathbf{x})$ as:
\begin{equation}
    b(\mathbf{x}) \approx \prod\limits_{\text{cliques } c} \Bigg(\cfrac{b_c(\mathbf{x}_c)}{\prod\limits_{i \in \mathcal{N}(c)} b_i(x_i)}\Bigg)^{1/\alpha_{c}} \prod\limits_{\text{nodes } i} b_i(x_i)^{1/\kappa_i}
\end{equation}
which can also be written:
\begin{equation}\label{eq:approx-belief-eFBP-general}
    b(\mathbf{x}) \approx \prod\limits_{\text{cliques } c} b_c(\mathbf{x}_c)^{1/\alpha_{c}} \prod\limits_{\text{nodes } i} b_i(x_i)^{1/\kappa_i - \lvert \mathcal{N}(i) \rvert / \alpha_i}
    \quad \text{where} \quad \frac{1}{\alpha_i} \equiv \frac{1}{\lvert \mathcal{N}(i) \rvert} \sum\limits_{c;i \in \mathcal{N}(c)} \frac{1}{\alpha_{c}}
\end{equation}
This leads to the following parametric approximation of the variational entropy:
\begin{align}
    - S_b & = \sum\limits_{\mathbf{x}} b(\mathbf{x}) \log(b(\mathbf{x})) \notag \\
    & \approx \sum\limits_{\mathbf{x}} b(\mathbf{x}) \sum\limits_{\text{cliques } c} \frac{1}{\alpha_c} \log(b_c(\mathbf{x}_c)) + \sum\limits_{\mathbf{x}} b(\mathbf{x}) \sum\limits_{\text{nodes } i} \Big(\frac{1}{\kappa_i} - \frac{\abs{\mathcal{N}(i)}}{\alpha_i}\Big) \log(b_i(x_i)) \notag \\
    & \approx \sum\limits_{\text{cliques } c} \frac{1}{\alpha_c} \sum\limits_{\mathbf{x}_c} b_c(\mathbf{x}_c) \log(b_c(\mathbf{x}_c)) + \sum\limits_i \Big(\frac{1}{\kappa_i} - \frac{\abs{\mathcal{N}(i)}}{\alpha_i}\Big) \sum\limits_{x_i} b_i(x_i) \log(b_i(x_i)) \label{eq:approx-entropy-eFBP-general}
\end{align}

Additionally, the initial probability distribution $p(\mathbf{x})$ is approximated as follows:
\begin{equation}
    p(\mathbf{x}) \approx \prod\limits_{\text{cliques } c} \psi_c(\mathbf{x}_c)^{\beta_c} \prod\limits_{\text{nodes } i} \psi_i(x_i)^{\gamma_i}
\end{equation}
which is equivalent to the following approximation of the average energy $U_b$:
\begin{align}
    U_b &= \sum\limits_\mathbf{x} b(\mathbf{x}) E(\mathbf{x}) \notag \\
    &\approx - \sum\limits_\mathbf{x} b(\mathbf{x}) \sum\limits_{\text{cliques } c}  \beta_c \log\Big(\psi_c(\mathbf{x}_c)\Big) - \sum\limits_x b(\mathbf{x}) \sum\limits_{i} \gamma_i \log\Big(\psi_{i}(x_i)\Big) \notag \\
    &\approx - \sum\limits_{\text{cliques } c}  \beta_c \sum\limits_{\mathbf{x}_c} b_{c}(\mathbf{x}_c) \log\Big(\psi_c(\mathbf{x}_c)\Big) - \sum\limits_{i} \gamma_i \sum\limits_{x_i} b_i(x_i) \log\Big(\psi_{i}(x_i)\Big)
\end{align}

Altogether, we obtain the following approximation of the Gibbs free energy $G_b^{\text{approx}} \approx G_b = U_b - S_b$:
\begin{multline}\label{eq:modif-Bethe-Free-Energy-general}
    G_b^{\text{approx}} = \sum\limits_{\text{cliques } c} \frac{1}{\alpha_c} \sum\limits_{\mathbf{x}_c} b_c(\mathbf{x}_c) \log\Bigg(\cfrac{b_c(\mathbf{x}_c)}{\prod\limits_{i \in \mathcal{N}(c)} b_i(x_i)}\Bigg) - \sum\limits_{\text{cliques } c} \beta_c \sum\limits_{\mathbf{x}_c} b_c(\mathbf{x}_c) \log\Big(\psi_c(\mathbf{x}_c)\Big) \\+ \sum\limits_i \frac{1}{\kappa_i} \sum\limits_{x_i} b_i(x_i) \log\Big(b_i(x_i)\Big) - \sum\limits_i \gamma_i \sum\limits_{x_i} b_i(x_i) \log\Big(\psi_i(x_i)\Big)
\end{multline}

\subsection{From Gibbs free energy approximation to messages}
In what follows, we derive the message-passing update equations of Reweighted BP \eqref{eq:eFBP-message-factor-to-variable-general}, \eqref{eq:eFBP-message-variable-to-factor-general}, and \eqref{eq:eFBP-belief-general} (and its special cases BP, Fractional BP, Power EP and $\alpha$-BP, among others) based on the approximation of the Gibbs free energy $G_b^{\text{approx}}$ given in Equation \eqref{eq:modif-Bethe-Free-Energy-general}. The message update equations are simply fixed-point equations of $G_b^{\text{approx}}$, meaning that the beliefs computed by Reweighted BP are stationary points of the approximate Gibbs free energy $G_b^{\text{approx}}$. This demonstration is similar to the one for BP \citep{Yedidia2001, Yedidia2003}
, with additional parameters $\bm{\alpha}$, $\bm{\kappa}$, $\bm{\beta}$, and $\bm{\gamma}$.

We form the Lagrangian by adding Lagrange multipliers to $G_b^{\text{approx}}$. Lagrange multiplier $\mu_i$ (resp. $\mu_{c}$) corresponds to the normalization constraint $\sum_{x_i} b_i(x_i) = 1$ (resp. $\sum_{\mathbf{x}_c} b_c(\mathbf{x}_c) = 1$), while $\lambda_{cj}(x_j)$ corresponds to the marginalization constraint $\sum_{\mathbf{x}_{c \setminus j}} b_c(\mathbf{x}_c) = b_j(x_j)$. We obtain: 
\begin{align}
    \mathcal{L} = G_b^{\text{approx}} + \sum_{i} \mu_i \Big(\sum_{x_i} b_i(x_i) - 1\Big) + \sum_{\text{cliques } c} \mu_c \Big(\sum_{\mathbf{x}_c} b_c(\mathbf{x}_c) - 1\Big) \notag \\
    + \sum_{\text{cliques } c} \sum_{j \in \mathcal{N}(c)} \sum\limits_{x_j} \lambda_{cj}(x_j) \Big(\sum\limits_{\mathbf{x}_{c \setminus j}} b_c(\mathbf{x}_c) - b_j(x_j)\Big)
\end{align}
The partial derivatives of the Lagrangian are:
\begin{empheq}[left=\empheqlbrace]{align}
  & \frac{\partial \mathcal{L}}{\partial b_i(x_i)} = - \sum\limits_{c \in \mathcal{N}(i)} \frac{1}{\alpha_c} + \frac{1}{\kappa_i} + \frac{1}{\kappa_i} \log(b_i(x_i)) - \gamma_i \log(\psi_i(x_i)) + \mu_i - \sum\limits_{c \in \mathcal{N}(i)} \lambda_{ci}(x_i)
  \\
  & \frac{\partial \mathcal{L}}{\partial b_c(\mathbf{x}_c)} = \frac{1}{\alpha_c} + \frac{1}{\alpha_c} \log\Bigg(\frac{b_c(\mathbf{x}_c)}{\prod\limits_{i \in \mathcal{N}(c)} b_i(x_i)}\Bigg) - \beta_c
\log(\psi_c(\mathbf{x}_c)) + \sum\limits_{i \in \mathcal{N}(c)} \lambda_{ci}(x_i) + \mu_c
\end{empheq}
It comes, by cancelling the partial derivatives of the Lagrangian:
\begin{equation}
    b_i(x_i) \propto \psi_i(x_i)^{\gamma_i \kappa_i} \prod\limits_{c \in \mathcal{N}(i)} \exp\Big(\kappa_i \lambda_{ci}(x_i)\Big)
\end{equation}
and
\begin{equation}
    b_c(\mathbf{x}_c) \propto \Bigg(\prod\limits_{i \in \mathcal{N}(c)} b_i(x_i)\Bigg) \psi_c(\mathbf{x}_c)^{\alpha_c \beta_c} \prod\limits_{i \in \mathcal{N}(c)} \exp\Big(-\alpha_c \lambda_{ci}(x_i)\Big)
\end{equation}
\begin{equation}
    \implies b_c(\mathbf{x}_c) \propto  \psi_c(\mathbf{x}_c)^{\alpha_c \beta_c} \Bigg(\prod_{i \in \mathcal{N}(c)} \psi_i(x_i)^{\gamma_i \kappa_i}
\bigg(\prod\limits_{d \in \mathcal{N}(i) \setminus c} \exp\big(\kappa_i \lambda_{di}(x_i)\big)\bigg) 
\exp\Big(\lambda_{ci}(x_i)\Big(\kappa_i - \alpha_c\Big)\Big) \Bigg)
\end{equation}
We obtain, for $m_{c \to i}(x_i) \equiv \exp\Big(\lambda_{ci}(x_i)\Big)$, the following expression of the (approximate) marginal and pairwise beliefs:
\begin{empheq}[left=\empheqlbrace]{align}
  & b_i(x_i) \propto \psi_i(x_i)^{\gamma_i \kappa_i} \prod\limits_{c \in \mathcal{N}(i)} m_{c \to i}(x_i)^{\kappa_i} \label{eq:eFBP-belief-general-dem}
  \\
  &  b_c(\mathbf{x}_c) \propto \psi_c(\mathbf{x}_c)^{\alpha_c \beta_c} \Bigg(\prod_{i \in \mathcal{N}(c)} \psi_i(x_i)^{\gamma_i \kappa_i}\Bigg)
\Bigg(\prod\limits_{i \in \mathcal{N}(c)} \prod\limits_{d \in \mathcal{N}(i) \setminus c} m_{d \to i}(x_i)^{\kappa_i} \Bigg)
\Bigg(\prod\limits_{i \in \mathcal{N}(c)} m_{c \to i}(x_i)^{\kappa_i - \alpha_c} \Bigg) \label{eq:eFBP-pairwise-belief-general-dem}
\end{empheq}
Eventually, thanks to the constraint $\sum\limits_{\mathbf{x}_c \setminus x_j} b_c(\mathbf{x}_c) = b_j(x_j)$, we obtain the fixed point equations for the messages:
\begin{equation}\label{eq:damped-eFBP-from-Lagrangian-general}
    m_{c \to j}(x_j)^{\kappa_j} \propto \Bigg( \sum_{\mathbf{x}_c \setminus x_j} \psi_c(\mathbf{x}_c)^{\alpha_c \beta_c} \prod\limits_{i \in \mathcal{N}(c) \setminus j} \Bigg[\psi_i(x_i)^{\gamma_i} \bigg(\prod\limits_{d \in \mathcal{N}(i) \setminus c} m_{d \to i}(x_i)\bigg) m_{c \to i}(x_i)^{1 - \alpha_c/\kappa_i}\Bigg]^{\kappa_i}\Bigg) m_{c \to j}(x_j)^{\kappa_j - \alpha_c}
\end{equation}
\begin{equation}\label{eq:eFBP-from-Lagrangian-general} 
    \Leftrightarrow m_{c \to j}(x_j) \propto \Bigg( \sum_{\mathbf{x}_c \setminus x_j} \psi_c(\mathbf{x}_c)^{\alpha_c \beta_c} \prod\limits_{i \in \mathcal{N}(c) \setminus j} \Bigg[ \psi_i(x_i)^{\gamma_i} \bigg(\prod\limits_{d \in \mathcal{N}(i) \setminus c} m_{d \to i}(x_i)\bigg) m_{c \to i}(x_i)^{1 - \alpha_c/\kappa_i}\Bigg]^{\kappa_i}\Bigg)^{1 / \alpha_c}
\end{equation} 
The Reweighted BP algorithm consists of running iteratively the fixed-point equation \eqref{eq:eFBP-from-Lagrangian-general}. This single equation, which involves only messages from factor node to variable node, can be rewritten into the two following equations by introducing messages from variable node to factor node: 
\begin{empheq}[left=\empheqlbrace]{align}
  & m_{\psi_c \to x_i}^{\text{new}}(x_i) \propto \bigg(\sum\limits_{\mathbf{x}_c \setminus x_i} \psi_c(\mathbf{x}_c)^{\alpha_c \beta_c} \prod\limits_{x_j \in \mathcal{N}(\psi_c) \setminus x_i} m_{x_j \to \psi_c}(x_j) \bigg)^{1 / \alpha_{c}} \label{eq-repeat:eFBP-message-factor-to-variable-general}
  \\
  & m_{x_j \to \psi_d}^{\text{new}}(x_j) \propto \psi_{j}(x_j)^{\gamma_j \kappa_j} \Bigg(\prod\limits_{\psi_c \in \mathcal{N}(x_j) \setminus \psi_d} m_{\psi_c \to x_j}(x_j)\Bigg)^{\kappa_j} m_{\psi_d \to x_j}(x_j)^{\kappa_j - \alpha_{d}} \label{eq-repeat:eFBP-message-variable-to-factor-general}
\end{empheq}
The expression of the unitary $b_i(x_i)$ and clique beliefs $b_c(\mathbf{x}_c)$ comes from Equations \eqref{eq:eFBP-belief-general-dem} and \eqref{eq:eFBP-pairwise-belief-general-dem}:
\begin{empheq}[left=\empheqlbrace]{align}
  & b_i(x_i) \propto \psi_i(x_i)^{\gamma_i \kappa_i} \prod\limits_{\psi_c \in \mathcal{N}(x_i)} m_{\psi_c \to x_i}(x_i)^{\kappa_i} 
  \\
  & b_c(\mathbf{x}_c) \propto \psi_c(\mathbf{x}_c)^{\alpha_c \beta_c} \prod_{x_i \in \mathcal{N}(\psi_c)} m_{x_i \to \psi_c}(x_i) 
\end{empheq}
Note that one could also use directly Equation \eqref{eq:damped-eFBP-from-Lagrangian-general} instead of \eqref{eq:eFBP-from-Lagrangian-general} to define the message update equation of the Reweighted BP algorithm. In fact, Equations \eqref{eq:damped-eFBP-from-Lagrangian-general} and  \eqref{eq:eFBP-from-Lagrangian-general} correspond to the damped versus undamped update equation; see section \ref{sec:damping} about damping. There is no absolute better choice: fixed points obtained are identical in both cases, and damping provides better convergence properties but slows down the system (see section \ref{sec:damping}).

\subsection{Comparison with related models: BP, Fractional BP, Power EP, alpha-BP, and Circular BP}

The special case of Belief Propagation is recovered for $(\bm{\alpha}, \bm{\kappa}, \bm{\beta}, \bm{\gamma}) = (\mathbf{1}, \mathbf{1}, \mathbf{1}, \mathbf{1})$.

Fractional BP \citep{Wiegerinck2002}, Power EP \citep{Minka2004} and $\alpha$-BP \citep{Liu2019} use the damped message update equation \eqref{eq:damped-eFBP-from-Lagrangian} (with $(\bm{\kappa}, \bm{\beta}, \bm{\gamma}) = (\bm{1}, \bm{1}, \bm{1})$) rather than its undamped version \eqref{eq:eFBP-from-Lagrangian} (see section \ref{sec:damping}).


Circular BP of \citet{Jardri2013} also considers $(\bm{\kappa}, \bm{\beta}, \bm{\gamma}) = (\bm{1}, \mathbf{1}, \mathbf{1})$, and modifies the message update equation from variable to factor (Equation \eqref{eq:BP-message-variable-to-factor-general} for BP) into:
\begin{equation}\label{eq:CBP-message-variable-to-factor-general}
    m_{x_j \to \psi_d}(x_j) \propto \psi_{j}(x_j) \Bigg(\prod\limits_{\psi_c \in \mathcal{N}(x_j) \setminus \psi_d} m_{\psi_c \to x_j}(x_j)\Bigg) m_{\psi_d \to x_j}(x_j)^{1 - \alpha_{x_j \to \psi_d}}
\end{equation}
Note that Circular BP was only defined on pairwise factor graphs, meaning that all cliques $c$ have at most 2 elements: the product of Equation \eqref{eq-repeat:eFBP-message-variable-to-factor-general} does not appear in this case.

Notably, Reweighted BP, BP, Fractional BP and Circular BP 
are similar but differ in the number of degrees of liberty. BP has no degree of liberty. Fractional BP (as well as Power EP and $\alpha$-BP) has $n_{\text{factors}}$ degrees of liberty. Circular BP has $n_{\text{factors}} + n_{\text{variables}}$ degrees of liberty. Circular BP has as many degrees of liberty as the number of edges in the factor graph.

\paragraph{Circular BP on general factor graphs}
Similarly to the pairwise factors case, Circular BP can be seen as an approximation of (an extension of) Fractional BP, which includes reweighting of both the variational entropy and the variational average energy components. 

As in the pairwise factor case presented in the main text, the Circular BP algorithm on general factor graphs can be seen as an approximation of reweighted BP algorithms, based on the modification of Equation \eqref{eq:eFBP-from-Lagrangian-general} into:
\begin{equation}
    m_{c \to j}(x_j) \propto  \sum_{\mathbf{x}_c \setminus x_j} \psi_c(\mathbf{x}_c)^{\beta_c} \prod\limits_{i \in \mathcal{N}(c) \setminus j} \Bigg[ \psi_i(x_i)^{\gamma_i} \bigg(\prod\limits_{d \in \mathcal{N}(i) \setminus c} m_{d \to i}(x_i)\bigg) m_{c \to i}(x_i)^{1 - \alpha_c/\kappa_i}\Bigg]^{\kappa_i}
\end{equation} 


\paragraph{Learning to carry out approximate inference in the general case}
The experiments reported in section \ref{sec:learning-CBP} demonstrated successes of supervised learning on rather small and binary graphical models with pairwise interactions. However, many situations where humans perform inference nearly optimally involve much more generative models. A follow-up of this work using the theory developed in the current section 
could focus on more general graphical models, with higher-order interactions and/or with continuous variables, and investigate whether parameters can be learnt (in a supervised but also unsupervised manner) in these cases.



In particular, Circular BP applied to a probability distribution composed of at most pairwise interactions but any type of variables (discrete or continuous) can be written very simply, similarly to the binary case  (see also Equations 10-12 of the supplementary material of \citet{Jardri2013}):
\begin{empheq}[left=\empheqlbrace]{align}
  &M_{i \to j}^{\text{new}}(X_j) = f\big(B_i(X_i) - \alpha_{ij} M_{j \to i}(X_i), J_{ij}\big)\\
  &B_i(X_i) = \kappa_i \Big(\sum_{k \in \mathcal{N}(i)} M_{k \to i}(X_i) + \gamma_i M_{\text{ext} \to i}(X_i)\Big)
\end{empheq}
with log-messages $M_{i \to j}(X_j) = \log(m_{i \to j}(X_j))$, $M_{\text{ext} \to i}(X_i) = \log(m_{\text{ext} \to i}(X_i))$, and log-beliefs $B_i(X_i) = \log(p(X_i)) + \log(Z)$ where $Z$ is the normalization constant of BP. Finally, $f$ is a function of function (it applies to $g(X_i)$) which, among other things, performs an integration over $X_i$ and returns a function of $X_j$):
\begin{equation}
    f(g(x_i), J_{ij}) = \log\Big(\int_{x_i} \psi_{ij}(x_i,x_j)^{\beta_{ij}} \exp(g(x_i)) dx_i\Big)
\end{equation}
Instead of encoding for $\log(p(X=+1)/p(X=-1))$ as in the binary case, the quantity computed is $\log(p(X))$, the log of the whole probability distribution of variable $X$. Likewise, the external input to the nodes is no longer a scalar ($M_{\text{ext} \to i}(X_i = +1) - M_{\text{ext} \to i}(X_i = -1)$) but a function (that is, $M_{\text{ext} \to i}(X_i)$).
\end{document}